\newtheorem{assumption}[theorem]{Assumption}
\newcommand{\cF}{\mathcal{F}}
\newcommand{\cN}{\mathcal{N}}
\newcommand{\RR}{\mathbb{R}}
\newcommand{\PP}{\mathbb{P}}
\newcommand{\EE}{\mathbb{E}}
\newcommand{\bR}{\mathbb{R}}
\newcommand{\bF}{\mathbf{F}}
\newcommand{\cG}{\mathcal{G}}
\newcommand{\cY}{\mathcal{Y}}
\newcommand{\cQ}{\mathcal{Q}}
\newcommand{\bS}{\mathbb{S}}
\renewcommand{\SS}{\mathbb{S}}
\newcommand{\rmd}{\mathrm{d}}
\newcommand{\ia}{\frac{1}{r^2}}
\newcommand{\transpose}{^{\operatorname{T}}}
\newcommand{\step}{\operatorname{step}}
\newcommand{\relu}{\operatorname{ReLU}}
\newcommand{\qed}{\hfill\blacksquare}
\begin{document}

\title{A spectral-based analysis of the separation between two-layer neural networks and linear methods}

\author{\name Lei Wu \email leiwu@math.pku.edu.cn\\
       \addr School of Mathematical Sciences, Peking University
       \AND
		\name Jihao Long \email jihaol@princeton.edu \\
       \addr PACM, Princeton University 
}

\editor{Joan Bruna}
\maketitle

\begin{abstract}%
We propose a spectral-based approach to analyze how two-layer neural networks separate from linear methods in terms of approximating high-dimensional functions. We show that quantifying this separation can be reduced to estimating the Kolmogorov width of two-layer neural networks, and the latter can be further 
characterized by using the spectrum of an associated kernel. Different from previous work, our approach allows obtaining upper bounds, lower bounds, and identifying explicit hard functions in a united manner. We provide a systematic study of how the choice of activation functions affects the separation, in particular the  dependence on the input dimension. Specifically, for nonsmooth activation functions, we extend known results to more activation functions with sharper bounds. As concrete examples, we prove that any single neuron can  instantiate the separation between neural networks and random feature models. For smooth activation functions, one surprising finding is that the separation is negligible unless the norms of inner-layer weights are polynomially large  with respect to the input dimension. By contrast, the separation for nonsmooth activation functions is independent of the norms of inner-layer weights.
\end{abstract}

\begin{keywords}
Kolmogorov width, two-layer neural network, single neuron, spectral decay, curse of dimensionality
\end{keywords}


\section{Introduction}
Neural network-based machine learning has achieved remarkable performances in many applications, such as computer vision, natural language processing,  and scientific computing. One of the key reasons behind these successes is that neural networks can efficiently approximate certain high-dimensional functions \citep{barron1993universal,barron1994approximation,bach2017breaking,ongie2019function,ma2019priori,poggio2018theory,weinan2021barron,gribonval2021approximation,suzuki2018adaptivity,klusowski2016risk}, while traditional linear methods cannot \citep{barron1993universal,barron1992neural,kurkova2002comparison,siegel2021sharp}. In this paper, we provide a comprehensive study of this phenomenon for two-layer neural networks.




Consider two-layer neural networks  given by
\begin{equation}\label{eqn: 2lnn}
    f_m(x;\theta) = \sum_{j=1}^m a_j \sigma(w_j^Tx+b_j),
\end{equation}
where $w_j\in\RR^d,a_j\in\RR, b_j\in\RR$ and $\sigma: \RR\mapsto\RR$ is the nonlinear activation function. We use $\theta=\{a_j,w_j,b_j\}_{j=1}^m$ to denote all the parameters and $m$ to denote the network width. Denote by $\cN_m$ the set of two-layer neural networks of width $m$:
\begin{equation}\label{eqn: def-hypothesis}
\cN_m =\Big\{f_m(\cdot;\theta)\,\big|\,\sum_{j=1}^m |a_j|\leq 1, \max_{j}(\|w_j\|_2+|b_j|)\leq r\Big\}.
\end{equation}

To study the separation between neural networks and linear methods, we consider target functions in  
\begin{equation}\label{eqn: barron-space}
\cN=\overline{\cup_{m=1}^\infty \cN_m},
\end{equation}
 which is  the closure (with respect to the $L^\infty$ metric) of all finite-width neural networks with bounded norms defined above.  Note that the function class $\cN$ depends on the activation function $\sigma$ and the size of inner-layer weights $r$, which are omitted in the notation  for simplicity. The function class $\cN$ has been widely used to analyze properties of two-layer neural networks including but not limited to the separation from linear methods. For more details, we refer to \cite{barron1993universal,barron1992neural,kurkova2002comparison,bach2017breaking,ma2019priori,weinan2021barron} and the references therein. In this paper, we will focus on studying the separation in terms approximating functions in $\cN$.

Denote by $\rho$ the input distribution and the approximation error will be measured with respect to $\rho$.
Mathematically speaking, quantifying the separation boils down to estimating the following two quantities:
\begin{enumerate}
\item[(1)] The approximability by two-layer neural networks:
\begin{equation}
e_m(\cN) := \sup_{f\in \cN}\inf_{g\in \cN_m}\|f-g\|_{L^2(\rho)}^2,
\end{equation}
which describes the worst-case error of approximating functions in $\cN$ with two-layer neural networks.
\item[(2)] The (in)approximability by linear methods: 
\begin{equation}
 \omega_m(\cN) :=  \inf_{\phi_1,\cdots,\phi_m\in L^2(\rho)} \sup_{f\in\cN} \inf_{c_1,\dots,c_m\in\RR} \|f-\sum_{j=1}^m c_j \phi_j\|_{L^2(\rho)}^2,
\end{equation}
which describes the worst-case error of approximating functions in $\cN$ with the best-possible fixed features. Note that $\omega_m(\cN)$ is exactly the \emph{Kolmogorov width} of $\cN$ \citep{kolmogoroff1936uber,lorentz1966approximation}.
\end{enumerate}
When $\omega_m(\cN)$ is significantly larger than $e_m(\cN)$, we say that there is a \emph{separation} between two-layer neural networks and linear methods (with the best-possible fixed features). In other words, there exist functions in $\cN$ such that they can be efficiently approximated by neural networks but not by linear methods.

We are particularly interested in how the separation depends on the input dimension $d$. Does the separation become more significant in higher dimensions?  To facilitate the statement, we use the concept of curse of dimensionality (CoD) \citep{bellman1957}. A quantity $q_m$ is said to be free of the CoD if $q_m\leq \poly(d)m^{-\beta}$ with $\beta$ being a constant independent of $d$. On the contrary, we say $q_m$ exhibits or suffers from the CoD if $q_m\geq \poly(1/d)$ for all $m\leq \exp(d)$. A specific rate that achieves the CoD is $q_m\geq \poly(1/d)m^{-\beta/d}$ with  $\beta$ being a constant independent of $d$. In particular, when the Kolmogorov width $\omega_m(\cN)$ exhibits the CoD,  we must need exponentially many features for apporoximating functions in $\cN$.

The seminal work \citep{barron1993universal} identified a subclass $\cN_{\text{F}}\subset \cN$, determined by the first-order moment of the Fourier transform, and  showed $e_m(\cN_{\text{F}})=O(1/m)$. This bound means that approximating functions in $\cN_{\text{F}}$ with neural networks is free of the CoD; in other words, the approximation rate is dimension-independent. Later, \cite{barron1992neural} extended this result to the whole class $\cN$. 
Along this line of work, \citep{breiman1993hinging,makovoz1998uniform,kuurkova1997estimates,kurkova2001bounds,kurkova2002comparison,bach2017breaking,ma2019priori,siegel2020approximation} improved and extended the upper bound, showing $e_m(\cN) = O(1/m)$ holds for very general activation functions.  
Given this upper bound of $e_m(\cN)$,  what remains is estimating the Kolmogorov width $\omega_m(\cN)$. 

 \cite{barron1993universal,barron1992neural} proved that $\omega_m(\cN)\geq C d^{-1} m^{-1/d}$, which implies that approximating functions $\cN$ with linear methods (even using the best-possible features) suffers from the CoD. Together with $e_m(\cN)=O(m^{-1})$,  \cite{barron1993universal,barron1992neural} established the first CoD-type separation for neural networks and linear methods.    However, the lower bound is limited to sigmoidal activation functions, such as the sigmoid and Heaviside step function. 
\cite{barron1993universal} obtained the lower bound by explicitly constructing exponentially many orthogonal functions in $\cN$ via the Fourier transform. \cite{kurkova2002comparison} provided a systematic study of this orthogonal function argument for general function classes. But the application to neural networks is still limited to the case considered in   \cite{barron1993universal}, since the Fourier-based  construction of orthogonal functions in \cite{barron1993universal} is specific to sigmoidal activations. It is unclear how to extend it to general cases. 
On the other hand, these works did not consider the upper bound of $\omega_m(\cN)$, which is critical for understanding when and how the separation from linear methods becomes negligible. For instance, if  $\omega_m(\cN)=O(m^{-1})$, then there is no separation between two-layer neural networks and linear methods for approximating functions in $\cN$ since $e_m(\cN)=O(m^{-1})$.

In addition, the separation results mentioned above rely on worst-case analyses and only guarantee the existence of functions that are hard to approximate with linear methods.
Another important but less-explored question is: Can we identify some specific hard functions that instantiate the separation? A recent progress was made by \cite{ma2019priori} and \cite{yehudai2019power} for the separation from the specific random feature model (RFM)~\citep{rahimi2007random}: $h_m(x;a)=\sum_{j=1}^m a_j\sigma(w_j^Tx)$, where  $\{w_j\}$ are independently and uniformly drawn from the unit sphere. This model may look similar to two-layer neural networks. The difference is that the $\{w_j\}$ in two-layer neural networks are adaptive to the target function, whereas $\{w_j\}$ in the RFM are fixed.

Specifically, \cite{ma2019priori} and \cite{yehudai2019power} showed that a single neuron: $x\mapsto \sigma(v^Tx+b)$ is enough to separate the two methods. \cite{ma2019priori} numerically showed that approximating a single neuron with random features suffers from the CoD;
\cite{yehudai2019power} provided a partial theoretical explanation. \cite{yehudai2019power}  proved that there exist constants $C_0, C_1 > 0$ such that,
if $m \leq e^{C_1d}$ and the coefficients satisfy $\max_{j\in [m]} |c_j|\leq e^{C_1d}/m$, then there exists a $b^*\in\RR$ such that for any $v\in\RR^d$ with $\|v\|=d^3$, the error of approximating $x\mapsto \sigma(v^Tx+b^*)$  with the random features is larger than $C_0$. However, this theoretical result is quite unsatisfying in two aspects: (1) The bias term $b^*$ still needs to be chosen in an adversary way, which means that this function is not completely explicit; (2) The coefficient magnitudes need to be bounded. Consequently, we cannot fully claim the separation for approximating this specific function. 
In addition,  the analysis there is essentially also based on explicit constructions of  nearly orthogonal functions \citep{malach2020hardness}.

\vspace*{-.2em}
\subsection{Overview of our contributions}
\vspace*{-.1em}

In this paper, we present a spectral-based approach to analyze the separation, which reduces the problem to estimating the eigenvalues of an associated kernel.  Specifially, consider a general parametric feature $\varphi:X\times \Omega \mapsto \RR$ and let $\Phi=\{\varphi(\cdot;v)\,|\, v \in \Omega\}$ be the class of features. The key quantity in our analysis is the following average version of Kolmogorov width: 
\begin{equation}
A_m^{(\pi)}(\Phi) = \inf_{\phi_1,\dots,\phi_m \in L^2(\rho)}\EE_{v \sim \pi}\inf_{c_1,\dots,c_m \in \bR}\|\varphi(\cdot;v) - \sum_{j=1}^mc_j\phi_j\|_{L^2(\rho)}^2,
\end{equation}
where $\pi$ is a probability distribution on $\Omega$.
We prove in Lemma \ref{pro: gen-feature} that 
$
A_m^{(\pi)}(\Phi)= \sum_{j=m+1}^\infty \lambda_j,
$
where $\{\lambda_j\}_{j\geq 1}$ are the eigenvalues in a non-increasing order of the kernel: $k_\pi(x,x') = \langle \varphi(x;\cdot), \varphi(x';\cdot)\rangle_\pi$. Then we apply this general result to the two-layer neural network case where $\varphi(x;v)=\sigma(w^Tx+b)$. The specfic procedures go as follows.
\begin{itemize}
\item
Let 
$
    \cQ^{\gamma,b} = \{\sigma_v(x) = \sigma(\gamma v^T x +b)| v \in \mathbb{S}^{d-1}\}
$
be the class of single neuron and $\tau_{d-1}$ be the uniform distribution over $\SS^{d-1}$.
We show in Theorem \ref{proposition: 2lnn-complete-characterization} that when $\rho=\tau_{d-1}$, there exists a $C_d=\poly(d)$ such that 
\begin{equation}\label{eqn: Kw-bound}
\sup_{\gamma + |b|\le r}A_m^{(\tau_{d-1})}(\cQ^{\gamma,b})\le \omega_m(\cN)\le C_d \sup_{\gamma + |b|\le r}A_m^{(\tau_{d-1})}(\cQ^{\gamma,b}).
\end{equation}

\item Moreover, by Lemma \ref{pro: gen-feature}, $A_m^{(\tau_{d-1})}(\cQ^{\gamma,b})=\sum_{j=m+1}^\infty \lambda_j$,   where $\{\lambda_j\}_{j\geq 1}$ denote the eigenvalues corresponding to the kernel: 
 $k^{(\gamma,b)}(x,x')=\EE_{v\sim\tau_{d-1}}[\sigma(\gamma v^Tx+b)\sigma(\gamma v^Tx'+b)]$.  
\end{itemize}
We thus provide a tight characterization of $\omega_m(\cN)$ using the spectrum of $k^{(\gamma,b)}$. For ReLU$^\alpha$  activations, we even have $C_d=O(1)$, where the characterization becomes exact. Here the ReLU$^\alpha$ activation function is defined by $\sigma(z)=\max(0,z^\alpha)$. Note that the kernel $k^{(\gamma,b)}$ is in a dot-product form, for which we can apply the harmonic analysis to obtain explicit estimates of the eigenvalues  \citep{smola2001regularization,bach2017breaking}.

 The above approach does not need the explicit construction of orthogonal functions, thereby working
 for very general activation functions, including the commonly-used ReLU, Gaussian error linear unit (GELU) \citep{hendrycks2016gaussian}, Swish/SiLU \citep{ramachandran2017searching,elfwing2018sigmoid}.  Moreover, it allows obtaining upper bounds, lower bounds, and explicit hard functions simultaneously.  Using this approach, we provide a comprehensive study of how the smoothness of $\sigma$ affects the decay of $\omega_m(\cN)$, in particular the dependence on the input dimension. Our specific findings are summarized as follows.

\begin{itemize}
\item For the nonsmooth ReLU$^\alpha$ activation functions,  we show in Proposition \ref{sec: nonsmooth-single-neuron} that $\omega_m(\cN)\geq C(1/d,\alpha)m^{-\frac{2\alpha+1}{d-1}}$ with $C(1/d,\alpha)$ depending on $1/d$ polynomially. Combining with the known bounds $e_m(\cN)=O(1/m)$,  we establish the first CoD-type separation for general ReLU$^\alpha$ activations, which includes the result of \cite{barron1993universal} as a special case.  In Theorem \ref{thm: gen-1}, we  prove that the preceding lower bound holds for any single neuron, which removes all the restrictions of \cite{yehudai2019power}, including the boundedness of coefficient magnitudes and the adversarial choice of bias term. Thus we can now truly claim the separation from the RFM for this specific function.  Moreover,  our results hold for any ReLU$^\alpha$ activation, whereas
\cite{yehudai2019power} only considers the ReLU case, i.e., $\alpha=1$.

\item 
For smooth activation functions, we show that whether $\omega_m(\cN)$  exhibits the CoD or not depends on the norms of inner-layer weights, i.e., the value of $r$ in Eq.~\eqref{eqn: def-hypothesis}. Specifically, Theorem \ref{thm: 2lnn-smooth-ub} shows that when $r=1$, $\omega_m(\cN)\leq d/m$ for activation functions that satisfy certain derivative boundedness condition.  Proposition \ref{pro: 2lnn-smooth-ub-2} further provides a fine-grained characterization of the dependence on $r$ for the specific arctangent activation, showing $\omega_m(\cN)\leq d^4r^2m^{-\max(1/2, 1/r^2)}$. These  rates are dimension-independent and moreover, they can be  achieved by using spherical harmonics, i.e., the homogeneous harmonic polynomials constrained on $\SS^{d-1}$. Therefore, neural networks in these cases do not perform (significantly) better than polynomials.
On the contrary, Theorems  \ref{thm: 2lnn-arctan-lower-bound} and \ref{thm: low-bound-actan} show that the CoD-type separation can be recovered for sigmoid-like and ReLU-like smooth activations when $r\geq d^C$ for some positive constant $C$. In particular,  $C>1/2$ is enough for the specific arctangent activation function.
\end{itemize}

We notice that there is a concurrent work \citep{siegel2021sharp}, obtaining the same decay rates of  $\omega_m(\cN)$ for the ReLU$^\alpha$ activations. However, the bounds there are not very useful in the high-dimensional regime since \cite{siegel2021sharp} did not show their constants depending on $d$ polynomially. On the other hand, the technique used in \cite{siegel2021sharp} is still based on the orthogonal function argument developed in \cite{barron1993universal,kurkova2002comparison}, whereas ours is completely different. 

Essentially, our spectral-based approach, in particular the bound \eqref{eqn: Kw-bound}, provides a tight characterization of the Kolmogorov width for two-layer neural networks, which is even almost exact for the ReLU$^\alpha$ activation function.
It is also applicable to analyze other properties that are related to Kolmogorov width.  For instance, \cite{siegel2021sharp} obtained a tight lower bound of the metric entropy (logarithm of the covering number) \citep{kolmogorov1958linear} of two-layer neural networks by using our results.

\subsection{Other related work}

\paragraph*{Approximation with random features}
Our work is also related to \cite{ghorbani2021linearized}, which shows that the random feature model effectively fits polynomials.
Let $\gamma=\cN(0,1)$ be the standard normal distribution. Specifically, for the single neuron target function, \cite{ghorbani2021linearized} showed that the approximating error with $N$ random features is roughly controlled by  $\|\sigma_{>\ell}\|^2_{L^2(\gamma)}$ when $d^{\ell+\delta}\leq N\leq d^{\ell+1-\delta}$ for small $\delta>0$. Here, $\sigma_{>\ell}(\cdot)$ is the projection of $\sigma$ orthogonally to the subspace spanned by polynomials of minimum degree $\ell$. However, no explicit estimate is provided, and it is, therefore, unclear how the approximability depends on the input dimension. Moreover, the analysis of \cite{ghorbani2021linearized} is limited to the specific forms of features. By contrast, our analysis is applicable to very general features.

\paragraph*{Learning neural networks activated by smooth functions}
\cite{livni2014computational} proved that two-layer neural networks activated by the sigmoid function can be efficiently approximated by polynomials, if the norms of inner-layer weights are bounded by a constant independent of $d$. Then, they use this approximation result to show that these networks can be learned efficiently in polynomial time (see also \cite{zhang2016l1}).  In this paper, we extend this result to more general smooth activations, which include all the commonly-used ones, such as tanh, softplus, GELU. 
Moreover, we prove that when the  norms of inner-layer weights are large than $d^\beta$ with $\beta$ being a small positive constant, there does not exist fixed features such that the linear method can avoid the CoD.
These improvements over \cite{livni2014computational} benefit from  the spectral-based approach developed in our study, which is quite different from the techniques used in \cite{livni2014computational}. 

\paragraph*{The spectrum  of the associated kernel}
It is well-known that the spectrum of the kernel $k_\pi$ plays an important role in analyzing  the corresponding RFM \citep{carratino2018learning,bach2017equivalence,bach2017breaking,mei2019generalization}. Recently, it is also extensively explored to understand neural networks in the kernel regime \citep{daniely2017sgd,xie2017diverse,jacot2018neural,chen2021deep,bietti2019inductive,bietti2021deep,scetbon2021spectral,bach2017breaking}. In contrast to these works, we reveal that the spectrum of that kernel also play a fundamental role in  separating  neural networks from linear methods, which is essentially a property beyond the kernel regime.

At a technical point, our work also bears similarity with \cite{smola2001regularization} and \cite{bach2017breaking} since we all rely on the harmonic analysis of functions on $\SS^{d-1}$. Specifically, \cite{smola2001regularization} provided the integral representation of eigenvalues for general dot-product kernels; \cite{bach2017breaking} provided detailed calculations for the specific kernel $k_\pi$. In particular,  \cite{bach2017breaking} obtained (complicated) analytic expressions of the eigenvalues for ReLU$^\alpha$ activations, which are expressed in terms of Gamma functions. 
The reduced problem in our analysis is also estimating the eigenvalues of the kernel as explained above. We adopt the integral representation in \cite{smola2001regularization,bach2017breaking} and the analytic expressions in \cite{bach2017breaking}.  The specific differences from \cite{bach2017breaking} are listed below. 
\begin{itemize}
\item For ReLU$^\alpha$ activation functions, we obtain non-asymptotic estimates of the eigenvalues with the constants depending on $d$ polynomially, whereas \cite{bach2017breaking} only considered the asymptotic regime without tracking the constants. This improvement is crucial for understanding the  separation  in high dimensions. 
\item We provide  estimates of the eigenvalues for smooth activation functions, which are not covered in \cite{bach2017breaking}. Moreover, for the specific arctangent activation, we further provide a fine-grained characterization by showing that the eigenvalue can  be expressed analytically using Gaussian hypergeometric functions \citep{olver2010nist}. 
\item  \cite{bach2017breaking} used the spectrum of that kernel to compute the reproducing kernel Hilbert space (RKHS) \citep{aronszajn1950theory} norm induced by that RFM. However, our interest is analyzing how two-layer neural networks outperform linear methods.  
\end{itemize}

\vspace*{-1em}
\section{Preliminaries}
\paragraph*{Notation.} Let $\SS^{d-1}=\{x\in\RR^d : \|x\|_2=1\}$, $\omega_{d-1}=\frac{2\pi^{d/2}}{\Gamma(d/2)}$ be the surface area of $\SS^{d-1}$, and $\tau_{d-1}$ be the uniform distribution over $\SS^{d-1}$. For any $\Omega$, denote by $\cP(\Omega)$ the set of probability measures over $\Omega$. For a probability measure $\gamma$, for simplicity we use $\langle\cdot,\cdot\rangle_\gamma$ and $\|\cdot\|_{\gamma}$ to denote the $L^2(\gamma)$ inner product and norm, respectively. Given a subset $B$, denote by $1_{B}$ the indicator function of $B$, which has value $1$ at points of $B$ and $0$ otherwise. 
Let $\Gamma(x)$ be the Gamma function 
and $\binom{n}{m}=\frac{\Gamma(n+1)}{\Gamma(m+1)\Gamma(n-m+1)}$ be the binomial coefficient. We shall use $\poly(\beta)$ to denote a quantity that depends on  $\beta$ polynomially.
We use $X\sim Y$, if there exist absolute constants $C_1,C_2>0$ such that $C_1 Y\leq X \leq C_2 Y$. $X\lesssim Y$ means $X\leq CY$ for an absolute  constant $C>0$, and $X\gtrsim Y$ is defined analogously.

\subsection{Legendre polynomials and spherical harmonics}
We shall focus on the case of $\rho=\tau_{d-1}$, where the eigenvalues of the kernel of interest can be explicitly estimated. Hence, we need to prepare some basic techniques for analyzing functions on $\SS^{d-1}$.
Denote by $P_k$ the associated Legendre polynomials of degree $k$ in $d$ dimensions \citep[Section~2.6]{atkinson2012spherical}, which satisfies the following recursive formula   \cite[Equation~2.86]{atkinson2012spherical}:
\begin{equation}\label{eq: recursive}
\begin{aligned}
	P_0(t) &= 0,\, P_1(t) = t, \\
    P_k(t) &= \frac{2k +d -4}{k+d-3}t P_{k-1}(t) - \frac{k-1}{k+d-3}P_{k-2}(t),\, k \ge 2.
\end{aligned}
\end{equation}
Note that $\{P_k\}_{k=0}^\infty$ are the orthogonal polynomials with respect to the distribution $p_d(t)=(1-t^2)^{\frac{d-3}{2}}/B(\frac 1 2,\frac{d-1}{2})$, i.e., the distribution of $x_1$ for $x\sim\tau_{d-1}$. Specifically,
\begin{equation}
\int_{-1}^{1} P_{k}(t) P_{j}(t)\left(1-t^{2}\right)^{(d-3) / 2} \dd t=\delta_{j k} \frac{\omega_{d-1}}{\omega_{d-2}} \frac{1}{N(d, k)},
\end{equation}
where
\begin{equation*}
    N(d,k)=\frac{2k+d-2}{k}\binom{k+d-3}{d-2}
\end{equation*}
\cite[Equation~2.67 and 2.68]{atkinson2012spherical}.
Notice that the above equation can be rewritten as follows:
\begin{equation*}
    \int_{\bS^{d-1}}P_k(x\transpose y) P_j(x\transpose y)\rmd \tau_{d-1}(y) = \frac{\delta_{jk}}{N(d,k)}, \,\forall x \in \bS^{d-1}.
\end{equation*}
The Rodrigues's formula  gives a closed-form expression of $P_k$:  
\begin{align}\label{eqn: Rodrigues}
P_{k}(t)=\left(-\frac{1}{2}\right)^{k} \frac{\Gamma((d-1)/ 2)}{\Gamma(k+(d-1)/ 2)}\left(1-t^{2}\right)^{(3-d) / 2}\left(\frac{d}{d t}\right)^{k}\left(1-t^{2}\right)^{k+(d-3) / 2}.
\end{align}
The polynomial $P_k$ is even (resp. odd) when $k$ is even (resp. odd). 

Let $\cY_k^d$ be the space of all homogeneous harmonic polynomials of degree $k$ in $d$ dimensions restricted on $\SS^{d-1}$; the dimension of the space $\cY_k^d$ is $N(d,k)$. Let $\{Y_{k,j}\}_{1 \le j \le N(d,k)}$ be an orthonormal basis of $\cY_{k}^d$ in $L^2(\tau_{d-1})$. Then $Y_{k,j}:\SS^{d-1}\mapsto\RR$ denotes the $j$-th spherical harmonics of degree $k$. Then $\{Y_{k,j}\}_{k \in \mathbb{N}, 1 \le j \le N(d,k)}$  forms an orthonormal basis of $L^2(\tau_{d-1})$  \cite[Section~2.1.3, Corollary~2.15~and Theorem~2.38]{atkinson2012spherical}. 

The spherical harmonics is related to the Legendre polynomials: 
\begin{equation}\label{eqn: sum_of_Y}
    \sum_{j=1}^{N(d,k)} Y_{k,j}(x) Y_{k,j}(y) = N(d,k) P_k(x^Ty)
\end{equation}
\cite[Theorem~2.24]{atkinson2012spherical}.
For any $f:[-1,1]\mapsto\RR$, $x \in \bS^{d-1}$ and  $Y_k\in \cY_k^d$, the Hecke-Funk formula \citep[Theorem~2.22]{atkinson2012spherical} is given by 
\begin{equation}\label{eqn: hecke-funk}
\int_{\mathbb{S}^{d-1}} f(x^{\top} y) Y_{k}(y) \dd \tau_{d-1}(y)=\frac{\omega_{d-2}}{\omega_{d-1}} Y_{k}(x) \int_{-1}^{1} f(t) P_{k}(t)\left(1-t^{2}\right)^{(d-3) / 2} \dd t.
\end{equation}
We refer to \cite{schoenberg1988positive} and \cite{atkinson2012spherical} for more details about the harmonic analysis on $\SS^{d-1}$.


\section{A general result}
Denote by $\varphi:X\times \Omega \mapsto \RR$ a general parametric feature. For any $\pi\in\cP(\Omega)$, define
\begin{equation}\label{eqn: kernel-definition}
\begin{aligned}
   k_\pi: X\times X\mapsto\RR,\qquad k_\pi(x,x') = \langle \varphi(x;\cdot), \varphi(x';\cdot)\rangle_\pi.
\end{aligned}
\end{equation}
Assume that $\varphi(\cdot;v)$ is continuous on $X$ for any $v\in\Omega$ and $X$ is compact. By Mercer's theorem, we have the eigendecomposition:
$
    k_\pi(x,x') = \sum_{j=1}^\infty \lambda_j e_j(x)e_j(x'),
$
where $\{\lambda_j\}_{j\geq 1}$ are the eigenvalues in a non-increasing order and $\{e_j\}_{j\geq 1}$ are the corresponding eigenfunctions that satisfy 
$
\EE_{x'\sim\rho}[k_\pi(x,x')e_j(x')]=\lambda_j e_j(x).
$
The trace of $k_\pi$  satisfies that $\sum_{j=1}^\infty \lambda_j = \EE_{x\sim\rho}[k_\pi(x,x)]$.  In particular, we are interested in the following quantity
\begin{align}\label{eqn: trace-decay}
    \Lambda_{\pi}(m) = \sum_{j=m+1}^\infty \lambda_j.
\end{align}

  Consider the model: 
$
	g_m(x;\theta) = \sum_{j=1}^m a_j\varphi(x;v_j),
$
where $\theta=\{(a_j,v_j)\}_{j=1}^m$ denote the learnable parameters. In the literature, this model is often called \emph{variable-basis approximation} \citep{gnecco2012comparison,kurkova2001bounds}. Two-layer neural  networks correspond to the specical case where $\varphi(x;v)=\sigma(w^Tx+b)$. 
Other examples include the free-node splines \citep[Chapter 11]{devore1993constructive}, trigonometric polynomials with free frequencies \citep{devore1995nonlinear}, etc. 

Recall that $\Phi=\{\varphi(\cdot;v)\,|\, v \in \Omega\}$ is the class of parametric feature functions. Define
\begin{equation}\label{eqn: continuous-formulation}
\cG = \overline{\left\{ \sum_{j=1}^m a_j\varphi(\cdot;v_j)\, \big|\, \sum_{j=1}^m |a_j|\leq 1, m\in\mathbb{N}^{+}\right\}},
\end{equation}
which is just the closure of the convex, symmetric hull of $\Phi$.  When $\varphi(x;v)=\sigma(w^Tx+b)$ and $\Omega=\{(w^T,b)^T\in\RR^{d+1}: \|w\|_2+|b|\leq r\}$, we have $\cG=\cN$.

The following theorem provides a lower bound of the Kolmogorov width of $\cG$.
\begin{proposition}[Spectral-based lower bound]\label{thm: 1}
$
\omega_m(\cG) \geq \sup_{\pi\in\cP(\Omega)}\Lambda_\pi(m).
$
\end{proposition}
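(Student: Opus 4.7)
The plan is to reduce the proof to the already-available Lemma \ref{pro: gen-feature} (which identifies $A_m^{(\pi)}(\Phi)$ with the tail sum $\Lambda_\pi(m)$) via a simple ``sup $\geq$ average'' argument, exploiting the fact that every individual feature $\varphi(\cdot;v)$ lies in $\cG$.

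First I would fix an arbitrary probability measure $\pi \in \cP(\Omega)$ and an arbitrary $m$-tuple of candidate features $\phi_1,\dots,\phi_m \in L^2(\rho)$. Since the definition \eqref{eqn: continuous-formulation} of $\cG$ admits the choice $m=1$, $a_1=1$, $v_1=v$, we have $\varphi(\cdot;v)\in\cG$ for every $v\in\Omega$. Therefore the supremum over $f\in\cG$ in the definition of $\omega_m$ dominates the same quantity restricted to $\{\varphi(\cdot;v):v\in\Omega\}$, and in turn dominates the $\pi$-average over $v$:
\begin{equation*}
\sup_{f\in\cG}\inf_{c\in\RR^m}\Bigl\|f-\sum_{j=1}^m c_j\phi_j\Bigr\|_{L^2(\rho)}^2
\;\geq\; \EE_{v\sim\pi}\inf_{c\in\RR^m}\Bigl\|\varphi(\cdot;v)-\sum_{j=1}^m c_j\phi_j\Bigr\|_{L^2(\rho)}^2.
\end{equation*}

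Next I would take the infimum over $\phi_1,\dots,\phi_m$ on both sides. The left-hand side becomes $\omega_m(\cG)$ by definition, while the right-hand side becomes $A_m^{(\pi)}(\Phi)$ by the definition given just before Lemma \ref{pro: gen-feature}. Note that interchanging the infimum over $\phi$ with the $\pi$-expectation is not an issue here because we are only claiming an inequality of the form $\inf_\phi \sup_f \geq \inf_\phi \EE_v$, which follows termwise from the pointwise inequality above (no min-max swap is needed).

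Finally I would invoke Lemma \ref{pro: gen-feature}, which yields $A_m^{(\pi)}(\Phi)=\sum_{j=m+1}^\infty \lambda_j = \Lambda_\pi(m)$, where $\{\lambda_j\}$ are the eigenvalues of $k_\pi$. Combining gives $\omega_m(\cG)\geq \Lambda_\pi(m)$ for every $\pi\in\cP(\Omega)$, and taking the supremum over $\pi$ completes the proof. There is essentially no technical obstacle: all the real content has already been absorbed into Lemma \ref{pro: gen-feature}, and the remaining work is the one-line monotonicity $\sup\geq\EE_\pi$ together with trivial inf-monotonicity. The only minor care is to ensure that $\varphi(\cdot;v)\in L^2(\rho)$ so that the $L^2$-norms are finite, which follows from the continuity and compactness assumptions made before \eqref{eqn: kernel-definition}.
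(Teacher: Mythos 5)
Your proof is correct and follows essentially the same route as the paper: both reduce to the inclusion $\Phi\subset\cG$ (each single feature $\varphi(\cdot;v)$ lies in $\cG$), bound the worst-case error from below by the $\pi$-average, and then invoke Lemma \ref{pro: gen-feature} to identify that average with $\Lambda_\pi(m)$ before taking the supremum over $\pi$. The only difference is presentational: you fix the features and pass to the infimum at the end, while the paper states the same chain directly via $\omega_m(\cG)\geq\omega_m(\Phi)\geq A_m^{(\pi)}(\Phi)$.
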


The proof of Proposition \ref{thm: 1} needs the following characterization of the average Kolmogorov width of $\Phi$.
\begin{lemma}\label{pro: gen-feature}
$
	A_m^{(\pi)}(\Phi)=\EE_{v \sim \pi} \inf_{c_1,\dots,c_m \in \bR}\|\varphi_v - \sum_{j=1}^{m}c_j e_j\|_{\rho}^2 =\Lambda_\pi(m),
$
where $\varphi_v = \varphi(\cdot;v)$.
\end{lemma}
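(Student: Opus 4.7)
The plan is to establish the second equality by direct spectral computation, and then derive the first equality from the variational (Ky Fan) characterization of the top eigenvalues of the integral operator $T_k$ induced by the Mercer kernel $k_\pi$.

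For the second equality, I would fix $v$ and expand the feature in the orthonormal eigenbasis of $L^2(\rho)$, $\varphi_v = \sum_{j\geq 1} \alpha_j(v)\, e_j$ with $\alpha_j(v) = \langle \varphi_v, e_j\rangle_\rho$. Since $\{e_j\}$ is orthonormal, Parseval yields $\inf_{c_j}\|\varphi_v - \sum_{j=1}^m c_j e_j\|_\rho^2 = \sum_{j=m+1}^\infty \alpha_j(v)^2$. Taking expectations and using Fubini together with the eigenvalue identity, one computes
\[
\EE_{v\sim\pi}[\alpha_j(v)^2] = \int\!\!\int k_\pi(x,x')\, e_j(x)\, e_j(x')\, \rmd\rho(x)\rmd\rho(x') = \langle T_k e_j, e_j\rangle_\rho = \lambda_j.
\]
Summing from $j=m+1$ to $\infty$ gives exactly $\Lambda_\pi(m)$, which is the second equality.

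For the first equality, the $\leq$ direction is immediate by choosing $\phi_j = e_j$ in the definition of $A_m^{(\pi)}(\Phi)$. For the $\geq$ direction, I would reduce to the case of orthonormal $\phi_1,\dots,\phi_m$ via Gram--Schmidt, which is harmless since the best approximation depends only on $\mathrm{span}(\phi_1,\dots,\phi_m)$ whose dimension is at most $m$ (one can pad with arbitrary orthonormal vectors if the original system is linearly dependent). For orthonormal $\{\phi_j\}$, the Parseval identity gives
\[
\inf_{c_j}\|\varphi_v - \textstyle\sum_{j=1}^m c_j \phi_j\|_\rho^2 = \|\varphi_v\|_\rho^2 - \sum_{j=1}^m \langle \varphi_v, \phi_j\rangle_\rho^2.
\]
Taking expectations and using the trace identity $\EE_v\|\varphi_v\|_\rho^2 = \EE_{x\sim\rho}[k_\pi(x,x)] = \sum_{j\geq 1}\lambda_j$ together with $\EE_v\langle \varphi_v,\phi_j\rangle_\rho^2 = \langle T_k \phi_j,\phi_j\rangle_\rho$, one obtains the clean expression $\sum_{j\geq 1}\lambda_j - \sum_{j=1}^m \langle T_k \phi_j,\phi_j\rangle_\rho$. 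The desired lower bound then reduces to the classical Ky Fan inequality $\sum_{j=1}^m \langle T_k \phi_j,\phi_j\rangle_\rho \leq \sum_{j=1}^m \lambda_j$ for any orthonormal $m$-system, a standard consequence of the min-max theorem applied to the compact, positive, self-adjoint operator $T_k$.

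The main technical point to verify is the legitimacy of the Fubini exchange and term-by-term summation used in computing $\EE_v[\alpha_j(v)^2]$ and $\EE_v\|\varphi_v\|_\rho^2$; this follows from the assumed continuity of $\varphi(\cdot;v)$ and compactness of $X$, which guarantees that $T_k$ is trace class with $\sum\lambda_j = \EE_{x\sim\rho}[k_\pi(x,x)] < \infty$, so all rearrangements are justified by dominated convergence. Everything else is a routine manipulation of the Mercer decomposition.
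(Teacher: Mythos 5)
Your proposal is correct and follows essentially the same route as the paper: Gram--Schmidt reduction to orthonormal features, Parseval to write the best approximation error as $\EE_{x\sim\rho}[k_\pi(x,x)]-\sum_{j=1}^m\langle T_k\phi_j,\phi_j\rangle_\rho$, and the variational (PCA/Ky Fan) characterization showing the quadratic-form sum is maximized, with value $\sum_{j=1}^m\lambda_j$, by the leading eigenfunctions. Your added care about Fubini and trace-class justification is a fine (if routine) supplement to what the paper leaves implicit.
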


It is implied that the best-possible features minimizing the average approximation error are the leading eigenfunctions.

\begin{proof} 
Without loss of generality (WLOG), assume $\{\phi_j\}_{j=1}^m$ are orthonormal in $L^2(\rho)$, otherwise we can perform Gram-Schmidt orthonormalization. Then,

\begin{align}\label{pca_property}
    \EE_{v \sim \pi} &\inf_{c_1,\dots,c_m \in \bR}\|\varphi_v - \sum_{j=1}^{m}c_j \phi_j\|_{\rho}^2 = \EE_{v \sim \pi} \|\varphi_v - \sum_{j=1}^{m}\langle\varphi_v,\phi_j \rangle_{\rho} \phi_j\|_{\rho}^2 = \EE_{v \sim \pi}\big[\|\varphi_v\|_{\rho}^2 -\sum_{j=1}^m \langle\varphi_v,\phi_j\rangle_{\rho}^2\big] \notag \\
\notag    &= \EE_{x\sim\rho} [k_\pi(x,x)] - \sum_{j=1}^{m}\EE_{x,x'\sim\rho}[\phi_j(x)\phi_j(x')k(x,x')]\\
    &\geq \sum_{j=1}^\infty \lambda_j - \sup_{\langle \psi_j,\psi_{j'}\rangle_\rho=\delta_{j,j'}}\sum_{j=1}^{m}\EE_{x,x'\sim\rho}[\psi_j(x)\psi_j(x')k_\pi(x,x')].
\end{align}
The second term in \eqref{pca_property} is a standard PCA problem for  $k_\pi$, where the supremum is reached at $\phi_j=e_j$ and 
$
\sum_{j=1}^{m}\EE_{x,x'\sim\rho}[e_j(x)e_j(x')k_\pi(x,x')] = \sum_{j=1}^m \lambda_j.
$
Plugging it back to \eqref{pca_property} completes the proof.
\end{proof}

\paragraph*{Proof of Proposition \ref{thm: 1}} 
Lemma \ref{pro: gen-feature} provides a estimate of the average approximation error. The Kolmogorov width, which is the worst-case error, can be bounded as follows:
\begin{equation}\label{eqn: worst}
    \omega_m(\Phi)=\inf_{\phi_1,\dots,\phi_m} \sup_{v\in\Omega} \inf_{c_1,\dots,c_m \in \bR}\|\varphi_{v} - \sum_{j=1}^{m}c_j\phi_j \|_{\rho}^2 \ge A_{m}^{(\pi)}(\Phi)= \Lambda_{\pi}(m), \quad \forall \pi \in \cP(\Omega).
\end{equation}
Hence, 
$
\omega_m(\cG)\geq  \omega_m(\Phi)\geq  \sup_{\pi\in\cP(\Omega)}\Lambda_\pi(m) 
$
because of $\Phi\subset \cG$.
$\qed$


The spectral-based lower bound of $\omega_m(\cG)$ in Proposition \ref{thm: 1} is quite general, holding for any parametric features. Even when explicit estimates of the eigenvalues of $k_\pi$ are not tractable, we can still numerically compute ones, thereby obtaining a numerical lower bound of $\omega_m(\cG)$.

We notice that a similar lower bound of Kolmogorov width already appeared in \cite{ismagilov1968n} for a completely different purpose. At first glance, the function class considered  in \cite{ismagilov1968n} may look different from $\cG$, but they are in fact equivalent (see \cite[Section 3.4]{pinkus2012n}). However, to the best of our knowledge, our work is the first one exploiting this approach to study variable-basis approximations and two-layer neural networks. In contrast, the existing works on bounding $\omega_m(\cG)$  \citep{kurkova2002comparison,gnecco2012comparison,siegel2021sharp}  all rely on the  orthogonal function argument, whose applicability is limited to some special cases. Moreover, we  obtain an upper bound that match the lower bound in Proposition \ref{thm: 1} for two-layer neural networks.

In the remaining of this paper, we will apply the above general result to the case of two-layer neural networks and the input distribution $\rho=\tau_{d-1}$. In such case, we can have explicit estimates of the eigenvalues, thereby the Kolmogorov widths. Moreover, taking $\pi=\tau_{d-1}$ 
also yields an upper bound that matches the above lower bound.  For simplicity,  we will omit the subscript of $\Lambda_\pi$ since $\pi$ is always fixed to be $\tau_{d-1}$ in the following analyses.

\section{Approximation of single neurons}
\label{sec: single-neuron}

We first consider the single neuron without bias: $x\mapsto \sigma(v^Tx)$.  The results established in this section will be utilized later to analyze two-layer neural networks.

Assume that $\pi=\rho=\tau_{d-1}$. By the rotational invariance,  $k_\pi$ can be written in a dot-product form:
\begin{equation}\label{eqn: dot-product-def}
    k_\pi(x,x')  = \int_{\SS^{d-1}} \sigma(v^Tx)\sigma(v^Tx') \dd \tau_{d-1}(v)=\kappa(x^Tx')
\end{equation}
where $\kappa: [-1,1]\mapsto\RR$. Following \cite{smola2001regularization}, the spectral decomposition of $\kappa$ is given by:
\begin{equation}\label{eqn: def_mu}
    \kappa(x^Tx') = \sum_{k=0}^\infty\sum_{j=1}^{N(d,k)} \mu_k Y_{k,j}(x)Y_{k,j}(x'),
\end{equation}
where $\mu_k$ is the eigenvalue and the spherical harmonics $Y_{k,j}$ is the corresponding eigenfunction that satisfies  
$\EE_{x'\sim\tau_{d-1}}[\kappa(x^Tx')Y_{k,j}(x')]=\mu_k Y_{j,k}(x)$. Note that $\{\lambda_j\}$ are the eigenvalues counted with multiplicity, while $\{\mu_k\}$ are the eigenvalues counted without multiplicity.  We refer to \citep{schoenberg1988positive,smola2001regularization} for more details about the spectral decomposition of a dot-product kernel on $\SS^{d-1}$.

Applying Lemma \ref{pro: gen-feature} to single neurons gives	the following lower bound 
\begin{equation}
    \EE_{v\sim\tau_{d-1}}\inf_{c_1,\dots,c_m} \|\sigma_v - \sum_{j=1}^m c_j \phi_j\|_{\tau_{d-1}}^2 \geq \Lambda(m),
\end{equation}
and the inequality can be achieved by using the spherical harmonics as the fixed features.

By exploiting the rotational invariance, one can show the following upper bound. 
\begin{proposition}[Uniform approximability]\label{pro: smooth-pointwise}
Let $\{\phi_j\}_{j=1}^{m}$ be the leading spherical harmonics. 
For any non-increasing function $L: \mathbb{N}^{+} \rightarrow \bR^+$ that satisfies $\Lambda(m)\le L(m) $, let $q(d,L) = \sup_{k \ge 1} \frac{L(k)}{L((d+1)k)}$. Then we have for any $v\in \SS^{d-1}$,
\begin{equation}\label{eqn: smooth-ub-0}
\inf_{c_1,\dots,c_{m}} \|\sigma_v - \sum_{j=1}^{m}c_j \phi_j\|_{\tau_{d-1}}^2 \lesssim q(d,L)L(m).
\end{equation}
\end{proposition}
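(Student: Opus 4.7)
My plan is to upgrade the average-case identity in Lemma~\ref{pro: gen-feature} into a $v$-uniform bound by exploiting the rotational invariance of the single-neuron feature $\sigma_v(x) = \sigma(v^T x)$. Let $V_K := \bigoplus_{k=0}^K \cY_k^d$ denote the space of spherical harmonics of total degree at most $K$, with dimension $m_K := \sum_{k=0}^K N(d,k)$. I assume (as is standard in this setting) that $\mu_k$ is non-increasing in $k$, so the leading $m_K$ spherical harmonics form a basis of $V_K$.

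The first step is to show that $\inf_{f \in V_K} \|\sigma_v - f\|_{\tau_{d-1}}^2$ is independent of $v \in \SS^{d-1}$. For any $R \in O(d)$ one has $\sigma_{Rv}(x) = \sigma_v(R^T x)$, while both $V_K$ and $\tau_{d-1}$ are $O(d)$-invariant; the change of variables $x \mapsto R x$ then shows the error at $Rv$ equals the error at $v$, and transitivity of $O(d)$ on $\SS^{d-1}$ forces the error to be constant in $v$. This constant equals its $v$-average, which by Lemma~\ref{pro: gen-feature} applied to a basis of $V_K$ is exactly $\Lambda(m_K)$. Since the first $m_K \le m$ leading features span $V_K$ and enlarging the feature set can only decrease the infimum,
\[
\inf_{c_1,\dots,c_m} \big\|\sigma_v - \sum_{j=1}^m c_j \phi_j\big\|_{\tau_{d-1}}^2 \le \Lambda(m_K) \le L(m_K).
\]

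The second step is the combinatorial inequality $m_{K+1} \le (d+1) m_K$ for all $K \ge 0$. Using the explicit ratio $N(d, k+1)/N(d,k) = (2k+d)(k+d-2)/[(k+1)(2k+d-2)]$, elementary algebra gives $N(d,k+1)/N(d,k) \le d$ (with equality at $k = 0$), so $m_{K+1} = m_K + N(d, K+1) \le (1+d) m_K$. Choosing $K$ maximal with $m_K \le m$ yields $m < m_{K+1} \le (d+1) m_K$, hence $m_K \ge k_0 := \lceil m/(d+1) \rceil$. Monotonicity of $L$ and the definition of $q(d,L)$ chain as $L(m_K) \le L(k_0) \le q(d,L)\, L((d+1) k_0) \le q(d,L)\, L(m)$, where the last inequality uses $(d+1) k_0 \ge m$. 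Combined with the first step this proves \eqref{eqn: smooth-ub-0}.

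The main obstacle is setting up the rotational-invariance argument precisely so that the uniform-in-$v$ projection error can be read off from the average formula of Lemma~\ref{pro: gen-feature}. The combinatorial inequality $m_{K+1} \le (d+1) m_K$ is routine, with the boundary case $K = 0$ being the tightest; the monotonicity of $\mu_k$ used to identify the leading $m_K$ features with the $V_K$ basis is standard for the activation functions of interest in subsequent sections, and if it fails one may simply redefine $\{\phi_j\}$ to include a complete basis of $V_K$ among its first $m_K$ members and absorb the resulting slack into the constant.
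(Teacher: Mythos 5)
Your proof follows essentially the same route as the paper's own argument in Appendix A.1: rotational invariance makes the projection error onto the degree-$\le K$ harmonic blocks independent of $v$ and identifies it with the spectral tail (the paper's Lemma~\ref{lemma: point-wise-nonsmooth}), and the block-size bound $m_{K+1}\le (d+1)m_K$ together with the monotonicity of $L$ and the definition of $q(d,L)$ closes the argument exactly as the paper does. The one caveat is your explicit assumption that $\mu_k$ is non-increasing in $k$ — which actually fails for the paper's central examples, since ReLU$^\alpha$ and $\arctan$ have $\mu_k=0$ on one parity — but the paper's proof makes the same tacit identification of the block tail $\sum_{i>K}N(d,i)\mu_i$ with $\Lambda(m_K)$, and your remark about enlarging the first $m_K$ features to a full basis of $V_K$ and absorbing the slack into the constant is the appropriate repair, so this does not change the verdict that the proposal is correct and matches the paper's approach.
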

The proof is deferred to Appendix \ref{sec: single-neuron-smooth-pointwise}. 
When $L(m)\sim m^{-s}$, $q(d,L)\lesssim d^{s}$. Therefore, $q(d,L)$ is at most polynomial in $d$.
In fact, one can  choose $L(m)=\Lambda(m)$ to obtain the tightest bound. The introduction of $L(m)$ is to facilitate the explicit calculation of constant $q(d,L)$, since we hardly have the exact rate of $\Lambda(m)$.

By applying the eigenvalue estimates given in the next subsections, we have the following specific results.
For the nonsmooth activations considered in Section \ref{sec: nonsmooth-single-neuron}, taking $L(m)\sim m^{-\frac{2\alpha+1}{d-1}}$ yields $q(d,L)\lesssim d^{\frac{2\alpha+1}{d-1}}\leq C_\alpha$ with $C_\alpha$ being a positive constant independent of $d$.  For smooth activation functions  that satisfy Assumption \ref{assumption: smooth-activation}, we can take $L(m)=1/m$, for which $q(d,L)\lesssim d$.

What remains is to estimate the eigenvalues of $k_\pi$ and the following integral representation allows both explicit estimations and numerical computations of the eigenvalues.
\begin{lemma}\label{lemma: eigen}
For the dot-product kernel $k_\pi$,  $\mu_k=\eta_k^2$ with 
\begin{align}\label{eqn: eigen-sv}
    \eta_k = \frac{\omega_{d-2}}{\omega_{d-1}}\int_{-1}^1 \sigma(t)P_k(t)(1-t^2)^{(d-3)/2} \dd t.
\end{align}
In addition, assume that $\sigma\in C^{\infty}(\RR)$. Then,
\begin{equation}\label{eqn: smooth-eta_k}
    \eta_k = \frac{\Gamma(d/2)}{2^k\sqrt{\pi}\Gamma(k+(d-1)/2)}\int_{-1}^1 \sigma^{(k)}(t)\left(1-t^{2}\right)^{k+(d-3) / 2} \dd t.
\end{equation}
\end{lemma}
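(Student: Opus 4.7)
The plan is to prove the two formulas separately. For the identity $\mu_k = \eta_k^2$, I would recognize that the operator with kernel $k_\pi$ factors as the square of a simpler Funk--Hecke operator. Specifically, define $T_\sigma: L^2(\tau_{d-1}) \to L^2(\tau_{d-1})$ by $(T_\sigma g)(x) = \int_{\SS^{d-1}} \sigma(v^T x)\, g(v)\, d\tau_{d-1}(v)$. Using Fubini to swap the order of integration in the defining expression \eqref{eqn: dot-product-def}, the integral operator $S_\kappa$ with kernel $\kappa(x^Tx')$ satisfies $(S_\kappa g)(x) = \int \sigma(v^T x) (T_\sigma g)(v)\, d\tau_{d-1}(v) = (T_\sigma \circ T_\sigma)(g)(x)$. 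Since $T_\sigma$ is self-adjoint, the eigenvalues of $S_\kappa$ are the squares of those of $T_\sigma$.

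Next, I would apply the Hecke--Funk formula \eqref{eqn: hecke-funk} directly to compute the eigenvalues of $T_\sigma$: for each $Y_{k,j}\in\cY_k^d$, the formula yields $T_\sigma Y_{k,j} = \eta_k Y_{k,j}$ with $\eta_k = \frac{\omega_{d-2}}{\omega_{d-1}}\int_{-1}^1 \sigma(t)P_k(t)(1-t^2)^{(d-3)/2}\,dt$. Combining with the factorization gives $\mu_k = \eta_k^2$ and recovers \eqref{eqn: eigen-sv}.

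For the smooth case \eqref{eqn: smooth-eta_k}, I would substitute Rodrigues' formula \eqref{eqn: Rodrigues} into the expression for $\eta_k$: the factor $P_k(t)(1-t^2)^{(d-3)/2}$ becomes $(-1/2)^k \frac{\Gamma((d-1)/2)}{\Gamma(k+(d-1)/2)} \frac{d^k}{dt^k}(1-t^2)^{k+(d-3)/2}$, so that the integrand is $\sigma(t)$ times a pure $k$-th derivative of a weight that vanishes at $t=\pm 1$ to order $k+(d-3)/2$. Integration by parts $k$ times transfers all derivatives onto $\sigma$ and contributes a factor $(-1)^k$, which combines with $(-1/2)^k$ to produce $(1/2)^k$. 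Finally, using $\omega_{d-2}/\omega_{d-1} = \Gamma(d/2)/(\sqrt{\pi}\,\Gamma((d-1)/2))$, the $\Gamma((d-1)/2)$ factors cancel, leaving the stated expression.

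The only nontrivial bookkeeping is verifying that the boundary terms in the repeated integration by parts vanish at $t=\pm 1$. This requires the weight $(1-t^2)^{k+(d-3)/2}$ and its first $k-1$ derivatives to vanish at the endpoints, which holds whenever $k+(d-3)/2 > k-1$, i.e., $d \geq 2$; the borderline $d=2$ case works because $(1-t^2)^{k-1/2}$ still vanishes to positive (though half-integer) order. The remaining work is purely symbolic manipulation of Gamma functions, which is the main place for arithmetic error but presents no conceptual obstacle. The operator-factorization step is the key insight that makes part (1) immediate, and once $\eta_k$ is identified by Funk--Hecke, part (2) is a standard Rodrigues-formula exercise.
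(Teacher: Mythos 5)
Your proposal is correct and follows essentially the same route as the paper: the operator-factorization step is just the paper's Fubini swap followed by two applications of the Hecke--Funk formula to get $\mu_k=\eta_k^2$, and the smooth case is the same Rodrigues-formula substitution with $k$-fold integration by parts and the identity $\omega_{d-2}/\omega_{d-1}=\Gamma(d/2)/(\sqrt{\pi}\,\Gamma((d-1)/2))$. Your explicit check that the boundary terms vanish is a detail the paper leaves implicit, but otherwise the arguments coincide.
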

\begin{proof}
By the Hecke-Funk formula \eqref{eqn: hecke-funk}, 
\begin{align*}
\int_{\SS^{d-1}} \kappa(x^Tx') Y_{k,j}(x') d\tau_{d-1}(x') &= \int_{\SS^{d-1}} \left(\int_{\SS^{d-1}}\sigma(w^Tx)\sigma(w^Tx')\dd\tau_{d-1}(w)\right) Y_{k,j}(x') \dd\tau_{d-1}(x')\\
&= \int_{\SS^{d-1}}\sigma(w^Tx) \left(\int_{\SS^{d-1}}\sigma(w^Tx') Y_{k,j}(x') \dd\tau_{d-1}(x')\right) \dd\tau_{d-1}(w)\\
&=\int_{\SS^{d-1}}\sigma(w^Tx) \eta_k Y_{k,j}(w) \dd\tau_{d-1}(w) = \eta_k^2 Y_{k,j}(x).
\end{align*}
Substituting the  Rodrigues formula \eqref{eqn: Rodrigues} into \eqref{eqn: eigen-sv} and applying the  integration by parts give 
\begin{align*}
\notag \eta_k &= \left(-\frac{1}{2}\right)^k \frac{\omega_{d-2}}{\omega_{d-1}}\frac{\Gamma((d-1)/2)}{\Gamma(k+(d-1)/2)} \int_{-1}^1 \sigma(t) \left(\frac{d}{d t}\right)^{k}\left(1-t^{2}\right)^{k+(d-3) / 2} \dd t \\
 &= \frac{1}{2^k} \frac{\omega_{d-2}}{\omega_{d-1}}\frac{\Gamma((d-1)/2)}{\Gamma(k+(d-1)/2)} \int_{-1}^1 \sigma^{(k)}(t)\left(1-t^{2}\right)^{k+(d-3) / 2} \dd t.
\end{align*}
Inserting $\omega_{d-1}=\frac{2\pi^{d/2}}{\Gamma(d/2)}$ completes the proof.
\end{proof}

We remark that the integral representation \eqref{eqn: eigen-sv} has been adopted in \cite[Appendix D]{bach2017breaking} to calculate the eigenvalues of $k_\pi$ for  ReLU$^\alpha$ activations. Eq.~\eqref{eqn: smooth-eta_k} also follows straightforwardly from Eq.~\eqref{eqn: eigen-sv}.  We provide the proof here since it is simple but very helpful for understanding what property of activation functions affect the eigenvalue. In particular, Eq.~\eqref{eqn: smooth-eta_k} shows that the smaller is the $k$-th order derivative, the smaller is the eigenvalue and this formula
will be used later to estimate the eigenvalues for smooth activation functions. 

\paragraph*{Numerical computation of $\Lambda(m)$}  The following procedures provide a numerical way to compute $\Lambda(m)$, which works for any activation functions.
\begin{itemize}
\item First,
$
\sum_{j=1}^\infty\lambda_j=\int_{\SS^{d-1}}\kappa(x^Tx)\dd\tau_{d-1}(x) = \kappa(1).
$
If $\kappa(1)$ does not have an explicit expression, we can use Monte-Carlo integration to numerically compute it by Eq.~\eqref{eqn: dot-product-def}.
\item Second, the eigenvalues $\{\lambda_j\}$ are computed through numerically integrating the right hand side of Eq.~\eqref{eqn: eigen-sv}, where the Legendre polynomials can be efficiently computed using the recursive formula \eqref{eq: recursive}. 
\item The output is: $\Lambda(m)=\kappa(1)-\sum_{j=1}^m \lambda_j$.
\end{itemize}

\subsection{Nonsmooth activations}
\label{sec: nonsmooth-single-neuron}
Consider the ReLU$^\alpha$ activation function: $\sigma(t)=\max(0,t)^\alpha$ with $\alpha\in \mathbb{N}$.  The Heaviside step and ReLU function correspond to $\alpha=0$ and $\alpha=1$, respectively. 
The case of $\alpha>1$ also has many applications in scientific computing \citep{weinan2018deep,siegel2020high,li2019better}. In particular, for $\alpha=0,1$,  \cite{cho2009kernel} shows
\begin{align}\label{eqn: cosine-kernel}
\kappa(t) = 
\begin{cases}
\frac{1}{2\pi}(\pi - \arccos(t)) & \text{if } \alpha=0 \\
\frac{1}{2\pi d}\left((\pi-\arccos(t))t + \sqrt{1-t^2}\right) & \text{if } \alpha=1.
\end{cases}
\end{align}

\begin{proposition}\label{pro: non-smooth-1}
Let $\sigma(t)=\max(0,t)^\alpha$ with $\alpha\in \mathbb{N}$. There exists a constant $C(1/d,\alpha)$ depending on $1/d$ polynomially such that 
$
    \Lambda(m)\geq C(1/d,\alpha) m^{-\frac{2\alpha+1}{d-1}}.
$ In particular, $C(1/d,0)=1/d$.
\end{proposition}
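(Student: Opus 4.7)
The plan is to apply Lemma~\ref{lemma: eigen} to obtain an explicit closed form for $\mu_k = \eta_k^2$, to produce a non-asymptotic lower bound of the form $\eta_k^2 \gtrsim C_0(1/d,\alpha)\,k^{-(d+2\alpha)}$ valid for all sufficiently large $k$ in the non-vanishing parity class, and then to sum the tail $\Lambda(m) = \sum_{j>m}\lambda_j$ by choosing the spherical-harmonic degree cutoff at $K \sim m^{1/(d-1)}$.

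For the eigenvalue computation, I would start from the integral representation
$$
\eta_k = \frac{\omega_{d-2}}{\omega_{d-1}}\int_0^1 t^{\alpha} P_k(t)(1-t^2)^{(d-3)/2}\,dt,
$$
substitute the Rodrigues formula \eqref{eqn: Rodrigues}, and integrate by parts. The boundary terms at $t=1$ vanish thanks to the factor $(1-t^2)^{k+(d-3)/2}$, and the remaining Beta-function integral yields a ratio-of-Gamma-function expression for $\eta_k$ of the kind tabulated in \cite[Appendix D]{bach2017breaking}. The same calculation shows that $\eta_k = 0$ for $k$ of the wrong parity relative to $\alpha$, so I would restrict henceforth to the surviving parity class; the spherical-harmonic degeneracy at these levels is still of order $N(d,k) \sim k^{d-2}/(d-2)!$ for $k$ large.

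For the tail sum, I would apply a non-asymptotic form of Stirling's formula to the Gamma-function ratios to derive $\eta_k^2 \geq C_0(1/d,\alpha)\,k^{-(d+2\alpha)}$ for all $k \geq k_0(\alpha)$ of surviving parity, with $C_0(1/d,\alpha)$ explicitly polynomial in $1/d$. Choosing $K \sim m^{1/(d-1)}$ so that the first $m$ eigenvalues of the Mercer decomposition come from harmonic levels of degree at most $K$, I would then estimate
$$
\Lambda(m) \;\geq\; \sum_{\substack{k > K \\ k\ \text{surviving parity}}} N(d,k)\,\eta_k^2 \;\gtrsim\; C_0(1/d,\alpha)\sum_{k > K} k^{d-2}\cdot k^{-(d+2\alpha)} \;\sim\; C_0(1/d,\alpha)\,K^{-(2\alpha+1)},
$$
which gives the advertised rate $\Lambda(m) \geq C(1/d,\alpha)\,m^{-(2\alpha+1)/(d-1)}$. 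The sharper constant $C(1/d,0)=1/d$ in the Heaviside case can be read off directly from the explicit dot-product kernel $\kappa(t) = (\pi - \arccos t)/(2\pi)$ of \eqref{eqn: cosine-kernel}, whose Fourier--Legendre coefficients admit a classical closed form.

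The main obstacle will be tracking the polynomial-in-$1/d$ dependence of the prefactor $C_0(1/d,\alpha)$. The asymptotic treatment in \cite{bach2017breaking} isolates the $k$-rate $k^{-(d+2\alpha)}$ but absorbs all $d$-dependent factors into implicit constants, which would be vacuous for any CoD-style conclusion. Applying Stirling with explicit two-sided error bounds to ratios such as $\Gamma(k+(d-1)/2)/\Gamma(k+(d+\alpha)/2)$, and then combining them with the surface-area prefactor $\omega_{d-2}/\omega_{d-1} = \Theta(\sqrt{d})$ and the Rodrigues prefactor $\Gamma((d-1)/2)/\Gamma(k+(d-1)/2)$, is elementary but must be done with care, since it is precisely this bookkeeping that ensures the lower bound on $\Lambda(m)$ is meaningful at moderate values of $m$ in high dimension.
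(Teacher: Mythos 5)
Your outline follows the same route as the paper's proof in Appendix~\ref{sec: single-non-smooth} (the closed-form Gamma-ratio expression for $\eta_k$ from \cite{bach2017breaking}, Stirling, then a tail sum over degrees beyond a cutoff), but the step you yourself flag as ``the main obstacle'' is precisely where the plan, as written, fails to deliver the point of the proposition: a prefactor that is polynomial in $1/d$. By the duplication formula, $\mu_k\asymp_\alpha \Gamma(d/2)^2\bigl(\Gamma(\tfrac{k-\alpha}{2})/\Gamma(\tfrac{k+d+\alpha}{2})\bigr)^2$, so the ``clean'' bound $\eta_k^2\ge C_0(1/d,\alpha)\,k^{-(d+2\alpha)}$ with $C_0=\poly(1/d)$ is obtained only after discarding a factor of size roughly $\Gamma(d/2)^2\,2^{d}$. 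The multiplicity, however, obeys $N(d,k)\approx k^{d-2}/(d-2)!$ only with the $1/(d-2)!$ retained; the bound $N(d,k)\gtrsim k^{d-2}$ implicit in your tail display is false, since $\binom{k+d-3}{d-2}\le (k+d)^{d-2}/(d-2)!$. These two super-polynomial factors must be cancelled against each other (indeed $\Gamma(d/2)^2 2^{d}/(d-2)!\asymp\sqrt{d}$, so the true product $N(d,k)\mu_k\asymp_\alpha \sqrt d\,k^{-(2\alpha+2)}$ for $k\gg d$); once you have weakened the eigenvalue bound to a $\poly(1/d)$ constant and then multiply by $N(d,k)\ge k^{d-2}/(d-2)!$, your tail is bounded below only by $\poly(1/d)\,(d-2)!^{-1}K^{-(2\alpha+1)}$, an exponentially small constant, which is vacuous for the CoD-type separation (for $m\le 2^d$ the target is $\Lambda(m)\gtrsim\poly(1/d)$, a constant). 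The paper avoids this by never splitting the pieces: it keeps the exact expressions for $\mu_{2k}$, $N(d,k)$, and the cumulative index $m$ together (Eqs.~\eqref{eqn: m-k-condition}--\eqref{eqn: eigenvalues}) and verifies $\lambda_m m^{(d+2\alpha)/(d-1)}\ge C(1/d,\alpha)$ only after the $d^{\pm d}$-type factors have cancelled, before summing the tail.

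Two further points in the same vein. First, the single-variable power law $\mu_k\asymp C(d)k^{-(d+2\alpha)}$ with a $k$-independent constant is an asymptotic in $k$ for fixed $d$; in the regime where the proposition has content ($m\le e^{d}$, hence degrees $k=O(d)$ by the multiplicity count), you must use the joint estimate $\mu_k\sim d^{d-1}k^{k-\alpha-1}(k+d)^{-(k+d+\alpha-1)}$, since at $k\asymp d$ your bound loses yet another factor of order $(cd)^{d}$ relative to the truth (e.g.\ $\mu_d\asymp 4^{-d}/d$ for $\alpha=0$, versus your floor $\poly(1/d)\,d^{-d}$). Second, the cutoff ``$K\sim m^{1/(d-1)}$ so that the first $m$ eigenvalues come from degrees at most $K$'' is not correct as stated: the number of harmonics of degree at most $K$ is $\approx\binom{K+d-1}{d-1}$, so one needs $K\approx d\,m^{1/(d-1)}$, and for $m\le 2^d$ the cutoff must really be defined through the cumulative multiplicities as in \eqref{eqn: m-k-condition} and \eqref{eqn: N-asym}; this slip alone costs only $\poly(d)$, but it cannot be bypassed by the bare relation $K\sim m^{1/(d-1)}$. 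Your remarks on parity, on re-deriving the closed form via Rodrigues plus integration by parts (mind the distributional derivative of $\relu^\alpha$ at the origin), and on reading off $C(1/d,0)=1/d$ from the arccosine kernel are fine.
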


The proof is quite techinical and deferred to Appendix \ref{sec: single-non-smooth}, where the analytic expression of the eigenvalue $\mu_k$ obtained in \cite[Appendix D]{bach2017breaking} is used.
Figure \ref{fig: approx} compares the above bounds of $\Lambda(m)$ and numerical estimations for various $d$'s for the case of $\alpha=0$. It is clear that the decay suffers from the CoD and the explicit rate $m^{-(2\alpha+1)/(d-1)}$ given in Proposition \ref{pro: non-smooth-1} aligns very well with the ground truth for large $m$'s.

\begin{figure}[!h]
    \centering
    \includegraphics[width=.4\textwidth]{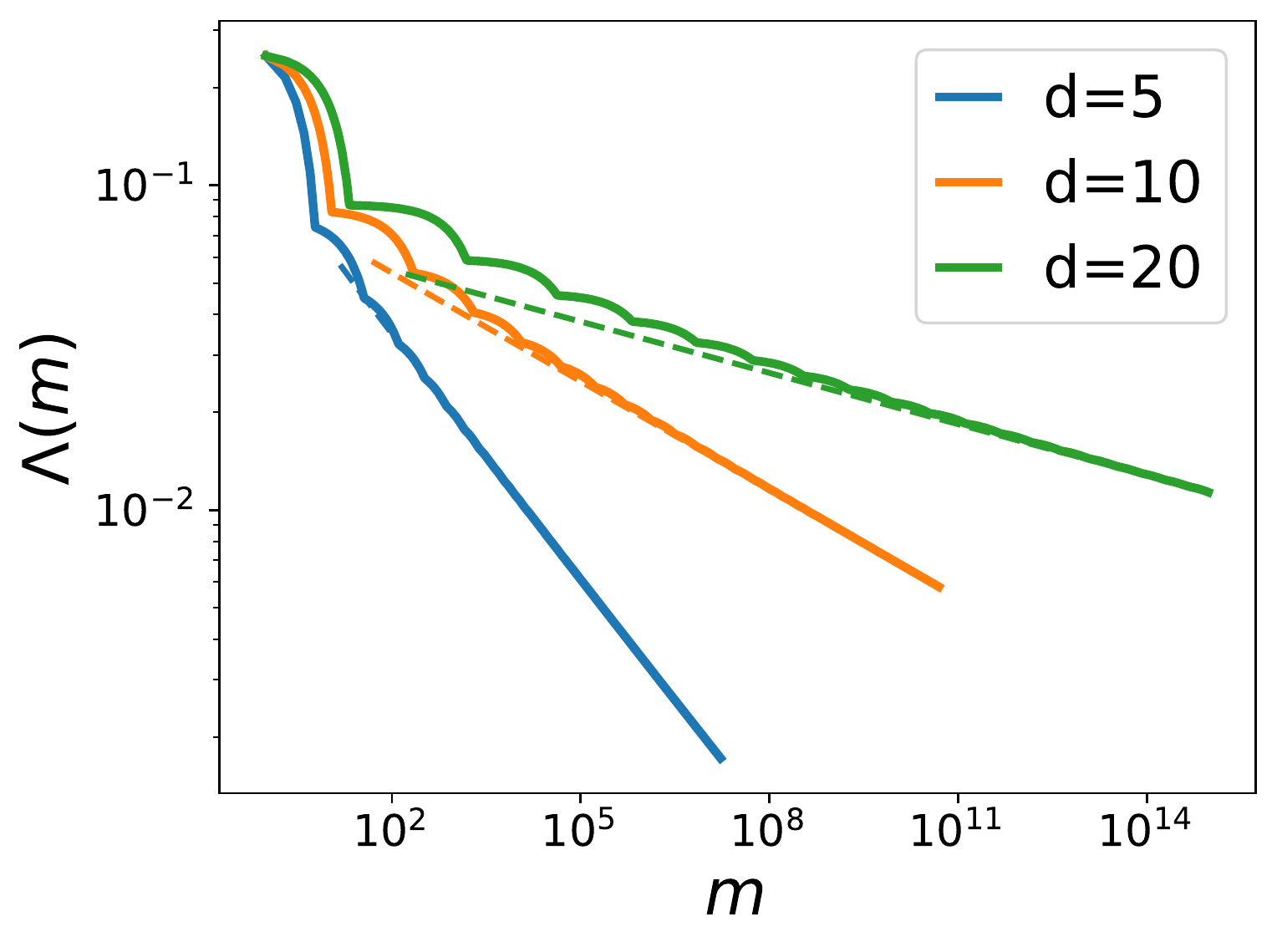}
    \vspace{-3mm}
    \caption{The decay of $\Lambda(m)$ for various  $d$'s and $\alpha=0$. The solid curve corresponds to  the numerical estimate, while the dashed curve corresponds the explicit estimate $m^{-(2\alpha+1)/(d-1)}$ given in Proposition \ref{pro: non-smooth-1}.
    }
    \label{fig: approx}
\end{figure}

Then we have the following theorem, which shows that approximating single neurons activated by  ReLU$^\alpha$ with linear methods suffers from the CoD. 
\begin{theorem}\label{thm: gen-1}
Let $\sigma_v(x)=\max(0,v^Tx)^\alpha$ for $\alpha\in \mathbb{N}$. Then, there exists a constant $C(1/d,\alpha)$ that depends on $1/d$ polynomially such that the following statements hold. 
\begin{itemize}
\item For any fixed features $\{\phi_j\}_{j=1}^m$, we have
\begin{equation}\label{eqn: thm-11}
       \EE_{v \sim \tau_{d-1}} \inf_{c_1,\dots,c_m \in \bR}\|\sigma_{v} - \sum_{j=1}^{m}c_j\phi_j \|_{\tau_{d-1}}^2 \ge C(1/d,\alpha) m^{-\frac{2\alpha+1}{d-1}}.
\end{equation}
\item Consider the random feature: $f_j(\cdot;\cdot):\RR^d\times \RR^{d\times q_j}\mapsto \RR$. We assume $\{f_j\}_{j=1}^m$ are rotationally invariant, i.e., for any $j\in [m]$, $f_j(x;W_j)=f_j(Qx;QW_j)$ for any orthonormal matrix $Q\in\RR^{d\times d}$ and $W_j$ is sampled from a rotation-invariant distribution $\pi_j$. Let $\bar{W}=(W_1,\dots,W_m)$. Then, for any $v\in\SS^{d-1}$, 
\begin{equation}\label{eqn: thm-22}
\EE_{\bar{W}}\inf_{c_1,\dots,c_m} \|\sigma_{v}-\sum_{j=1}^mc_j f_j(\cdot;W_j)\|_{\tau_{d-1}}^2 \geq C(1/d,\alpha) m^{-\frac{2\alpha+1}{d-1}}.
\end{equation}
If $m\leq 2^d$, with probability larger than $C(1/d,\alpha)$ over the sampling of $\bar{W}$, we have  for any $v\in\SS^{d-1}$,
\begin{equation}\label{eqn: thm-33}
\inf_{c_1,\dots,c_m} \|\sigma_{v}-\sum_{j=1}^mc_j f_j(\cdot;W_j)\|_{\tau_{d-1}}^2 \geq C(1/d,\alpha).
\end{equation}
\end{itemize}
\end{theorem}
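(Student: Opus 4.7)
The three parts build on each other, with Part~(1) providing the foundation. For \eqref{eqn: thm-11}, I would directly invoke Lemma~\ref{pro: gen-feature} with $\varphi(x;v)=\sigma(v^T x)$ and $\pi=\tau_{d-1}$, which identifies $A_m^{(\tau_{d-1})}(\cQ)$ with the tail sum $\Lambda(m)=\sum_{j\geq m+1}\lambda_j$ of the associated kernel eigenvalues. Since $A_m^{(\tau_{d-1})}(\cQ)$ is by definition the infimum of the expected $L^2$ error over all choices of $m$ features, any specific $\{\phi_j\}_{j=1}^m$ yields expected error at least $\Lambda(m)$. Combining with the lower bound $\Lambda(m)\geq C(1/d,\alpha)\,m^{-(2\alpha+1)/(d-1)}$ from Proposition~\ref{pro: non-smooth-1} then gives \eqref{eqn: thm-11}.

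For \eqref{eqn: thm-22}, I would exploit rotational invariance to reduce to Part~(1). Write $E(\bar{W},v):=\inf_c\|\sigma_v-\sum_j c_j f_j(\cdot;W_j)\|^2_{\tau_{d-1}}$. For any $v,v'\in\SS^{d-1}$ pick an orthogonal $Q$ with $Qv=v'$; the change of variables $x=Q^T y$ combined with the invariance of $f_j$, the rotation-invariance of $\tau_{d-1}$, and the distributional identity $QW_j\sim W_j$ yields
\begin{equation*}
\EE_{\bar{W}}\,E(\bar{W},v)\;=\;\EE_{\bar{W}}\,E(\bar{W},v'),
\end{equation*}
so this expectation, call it $\gamma$, is independent of $v$. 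Applying Part~(1) to the fixed features $\{f_j(\cdot;W_j)\}$ for each realization of $\bar{W}$ and using Fubini,
\begin{equation*}
\gamma\;=\;\EE_v\,\EE_{\bar{W}}\,E(\bar{W},v)\;=\;\EE_{\bar{W}}\,\EE_v\,E(\bar{W},v)\;\geq\;\EE_{\bar{W}}\,\Lambda(m)\;=\;\Lambda(m),
\end{equation*}
which together with Proposition~\ref{pro: non-smooth-1} yields \eqref{eqn: thm-22}.

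For \eqref{eqn: thm-33}, I would convert the expectation bound from Part~(2) into a probability lower bound via the reverse Markov inequality. When $m\leq 2^d$ and $d\geq 2$, $m^{-(2\alpha+1)/(d-1)}\geq 2^{-(2\alpha+1)d/(d-1)}\geq 4^{-(2\alpha+1)}$, so Part~(2) reduces to $\EE_{\bar{W}}\,E(\bar{W},v)\geq c(1/d,\alpha)$ with $c$ polynomial in $1/d$. Since $v,x\in\SS^{d-1}$ implies $|v^T x|\leq 1$, the ReLU$^\alpha$ activation satisfies $\sigma(v^T x)\in[0,1]$, and hence $E(\bar{W},v)\leq \|\sigma_v\|^2_{L^2(\tau_{d-1})}\leq 1$ almost surely. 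Reverse Markov then gives $\PP_{\bar{W}}\bigl(E(\bar{W},v)\geq c/2\bigr)\geq c/2$, establishing \eqref{eqn: thm-33}.

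The main subtlety lies in Part~(2): one must assemble the three separate invariance properties (the rotation-equivariance of $f_j$, the rotation-invariance of $\tau_{d-1}$, and the distributional identity $QW_j\sim W_j$) in concert to establish that $E(\bar{W},v)$ equals $E(Q\bar{W},v')$ in distribution and hence that $\EE_{\bar{W}}E(\bar{W},v)$ does not depend on $v$. Once this reduction is in place, Part~(1) does the heavy lifting, and Part~(3) is a routine reverse Markov argument exploiting the uniform boundedness of ReLU$^\alpha$ on $[-1,1]$.
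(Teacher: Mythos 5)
Your proposal is correct and follows essentially the same route as the paper: part (1) from Lemma \ref{pro: gen-feature} plus Proposition \ref{pro: non-smooth-1}, part (2) by the same rotational-invariance-plus-Fubini reduction, and part (3) by the same reverse-Markov argument from an almost-sure bound on $\|\sigma_v\|_{\tau_{d-1}}^2$. The only (immaterial) difference is that you bound $\|\sigma_v\|_{\tau_{d-1}}^2\le 1$ where the paper uses the sharper $\lesssim d^{-\alpha}$, which only changes the polynomial-in-$1/d$ constant in \eqref{eqn: thm-33}.
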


Eq.~\eqref{eqn: thm-11} shows that the average error of approximating single neurons with any fixed features suffers from the CoD. This suggests that there  exists  a single neuron $\sigma_v$ such that the approximation is difficult, whereas the specific choice of $v$ is unknown and it depends on the features. Eq.~\eqref{eqn: thm-22} and  \eqref{eqn: thm-33} improve this by showing that the same conclusion holds for any $v\in\SS^{d-1}$  as long as the features are rotationally invariant. Specifically, Eq.~\eqref{eqn: thm-22} bounds the error in expectation with respect to the sampling of random features and Eq.~\eqref{eqn: thm-33} further provides a bound of the failure probability.

A typical form of rotation-invariant random features is $f_j(x;W_j)=g_j(W_j^Tx)$ with $g_j:\RR^{q_j}\mapsto \RR^1$, which include  $f_j(x;w)=\sigma(w^Tx)$ (emerging in analyzing neural networks) and kernel predictors with dot-product kernels. Therefore, our results cover the setting considered in \cite{yehudai2019power} but successfully remove all the limitations of \cite{yehudai2019power}. Specifically, we impose no restriction on the coefficient magnitudes and do not need to adversarially choose the bias term. 
Note that any single neuron can be approximated exactly by two-layer neural network of width $m=1$, whereas the random feature approximation requires exponentially many features. They together provides a CoD-type separation between two methods for approximating this specific function.

\paragraph*{Proof of Theorem \ref{thm: gen-1}}
Eq.~\eqref{eqn: thm-11}  follows from a simple combination of Proposition \ref{pro: non-smooth-1} and Lemma \ref{pro: gen-feature}.  To prove Eq.~\eqref{eqn: thm-22}, we need to exploit the rotational invariance of the random features. 
Let  $S_v(\bar{W})=\inf_{c_1,\dots,c_m}\|\sigma_v - \sum_{j=1}^m c_j  f_j(\cdot;W_j)\|_{\tau_{d-1}}^2$. For any $v\in\SS^{d-1}$, let $Q_v\in\RR^{d\times d}$ be an orthonormal matrix such that $Q v=e_1$. Then,
\begin{align*}
    \EE_{\bar{W}}[S_v(\bar{W})] &= \EE_{\bar{W}}\inf_{c_1,\dots,c_m}\|\sigma_v - \sum_{j=1}^m c_j  f_j(\cdot;W_j)\|_{\rho}^2\stackrel{(i)}{=}\EE_{\bar{W}}\inf_{c_1,\dots,c_m}\|\sigma_{e_1} - \sum_{j=1}^m c_j  f_j(\cdot;Q_vW_j)\|_{\rho}^2\\
    &\stackrel{(ii)}{=} \EE_{\bar{W}}\inf_{c_1,\dots,c_m}\|\sigma_{e_1} - \sum_{j=1}^m c_j  f_j(\cdot;W_j)\|_{\rho}^2=\EE_{\bar{W}}[S_{e_1}(\bar{W})],
\end{align*}
where $(i)$ and $(ii)$ follow from the rotational invariance of $\rho$ and $\{\pi_j\}$, respectively. Therefore, $\EE_{\bar{W}}[S_v(\bar{W})]$ is constant with respect to $v$.  By Lemma \ref{pro: gen-feature}, we have 
\begin{align*}
    \EE_{\bar{W}}[S_v(\bar{W})] &= \EE_{v\sim\tau_{d-1}} \EE_{\bar{W}}[S_v(\bar{W})] = \EE_{\bar{W}} \EE_{v\sim\tau_{d-1}}[S_v(\bar{W})] \geq \EE_{\bar{W}}[\Lambda (m)] = \Lambda(m).
\end{align*}
Then, applying Proposition \ref{pro: non-smooth-1} completes the proof of Eq.~\eqref{eqn: thm-22}.

In addition, we have  
\begin{align*}
S_v(\bar{W})&\leq \|\sigma_v\|_{\tau_{d-1}}^2= \int_{\SS^{d-1}}\sigma(v^Tx)^2 \dd \tau_{d-1}=\int_0^1 t^{2\alpha} p_d(t)\dd t=\frac{\Gamma(d/2)}{\Gamma(d/2+\alpha)} \lesssim \frac{1}{d^\alpha},
\end{align*}
where $p_d(t)=\frac{(1-t^2)^{\frac{d-3}{2}}}{B(\frac 1 2,\frac{d-1}{2})}$ is the density function of $v^Tx$.

Then, 
\begin{equation}
\Lambda(m)\leq \EE[S_v(\bar{W})] \lesssim \PP\{S_v(\bar{W})\geq t\} d^{-\alpha} + \PP\{S_v(\bar{W})\leq t\} t\leq \PP\{S_v(\bar{W})\geq t\} d^{-\alpha} + t.
\end{equation}
Taking $t=0.5\Lambda(m)$, we have 
\[
\PP\{S_v(\bar{W})\geq 0.5\Lambda(m)\}\gtrsim \frac{d^\alpha}{2}\Lambda(m).
\]
When $m\leq 2^d$, $\Lambda(m)\geq C(1/d,\alpha) 2^{-2\alpha-1}$. Hence, we complete the proof of Eq.~\eqref{eqn: thm-33}.

$\qed$

\subsection{Smooth activations}
\label{sec: smooth-single-neuron}

We now turn to smooth activation functions, such as sigmoid,  softplus, arctan, GELU, and Swish/SiLU.
We first have the following lemma, which bounds the eigenvalue  $\mu_k$ by using the $k$-th order derivative of $\sigma$.
\begin{lemma}\label{lemma: smooth-activation}
Assume that $\sigma$ is smooth and let $B_k=\sup_{t\in \RR}|\sigma^{(k)}(t)|$. Then, we have 
\[
\mu_k \leq \frac{B_k^2}{2^{2k}} \frac{\Gamma(d/2)^2}{\Gamma(k+d/2)^2}.
\]
\end{lemma}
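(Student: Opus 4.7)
The plan is to start directly from the smooth-activation integral representation of $\eta_k$ already established in Lemma \ref{lemma: eigen}, namely
\[
\eta_k = \frac{\Gamma(d/2)}{2^k\sqrt{\pi}\,\Gamma(k+(d-1)/2)}\int_{-1}^{1} \sigma^{(k)}(t)\left(1-t^{2}\right)^{k+(d-3)/2}\,\rmd t.
\]
Because $\mu_k=\eta_k^2$, it suffices to bound $|\eta_k|$ and then square. I would pull the uniform bound $|\sigma^{(k)}(t)|\le B_k$ out of the integral, which reduces the problem to evaluating the purely-analytic integral $\int_{-1}^{1}(1-t^{2})^{k+(d-3)/2}\,\rmd t$.

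This integral is a standard Beta integral: substituting $s=t^2$ (or quoting the identity directly) gives
\[
\int_{-1}^{1}(1-t^{2})^{k+(d-3)/2}\,\rmd t \;=\; B\!\left(\tfrac{1}{2},\,k+\tfrac{d-1}{2}\right) \;=\; \frac{\sqrt{\pi}\,\Gamma(k+(d-1)/2)}{\Gamma(k+d/2)}.
\]
Plugging this back into the expression for $\eta_k$, the factor $\sqrt{\pi}\,\Gamma(k+(d-1)/2)$ cancels against the matching factor in the denominator of the prefactor, leaving the clean estimate
\[
|\eta_k| \;\le\; \frac{\Gamma(d/2)\,B_k}{2^{k}\,\Gamma(k+d/2)}.
\]
Squaring yields exactly the claimed bound $\mu_k \le \frac{B_k^2}{2^{2k}}\,\frac{\Gamma(d/2)^2}{\Gamma(k+d/2)^2}$.

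There is no real obstacle here: all the substantive work (the integration-by-parts that produced the $\sigma^{(k)}$ form, and the conversion from $\eta_k$ to the eigenvalue $\mu_k$) has been done in Lemma \ref{lemma: eigen}. The only mildly nontrivial step is recognizing the Beta integral and verifying that the Gamma factors cancel cleanly; I would just cite the standard identity $\int_{-1}^1 (1-t^2)^{\beta-1}\,\rmd t = B(1/2,\beta)$ for $\beta = k+(d-1)/2$ to keep the proof short. This gives a sharp pointwise bound that is then used to control $\Lambda(m)$ for smooth activations by summing the geometric-in-$k$ factor against the multiplicity $N(d,k)$.
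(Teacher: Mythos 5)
Your proposal is correct and is essentially the paper's own argument: both bound $|\sigma^{(k)}(t)|$ by $B_k$ inside the integral representation of $\eta_k$ from Lemma \ref{lemma: eigen}, evaluate the remaining integral as the Beta function $B\!\left(\tfrac12,k+\tfrac{d-1}{2}\right)=\tfrac{\sqrt{\pi}\,\Gamma(k+(d-1)/2)}{\Gamma(k+d/2)}$, cancel the Gamma factors, and square via $\mu_k=\eta_k^2$. The only cosmetic difference is that you start from the already simplified prefactor in Eq.~\eqref{eqn: smooth-eta_k}, while the paper carries the $\omega_{d-2}/\omega_{d-1}$ ratio and substitutes it at the end; the computations are identical.
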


\begin{proof}
By the assumption, 
\begin{align}\label{eqn: smooth-2}
\notag \big|\int_{-1}^1 \sigma^{(k)}(t)&\left(1-t^{2}\right)^{k+(d-3) / 2} \dd t\big| \leq B_k \int_{-1}^1 \left(1-t^{2}\right)^{k+(d-3) / 2} \dd t\\
&= B_k \int_{0}^1 u^{-1/2} (1-u)^{k+(d-3)/2} \dd u = B_k \frac{\Gamma(1/2)\Gamma(k+(d-1)/2)}{\Gamma(k+d/2)},
\end{align}
where the second equality follows from the change of variable $u=t^2$. 
Then, using Eq.~\eqref{eqn: smooth-eta_k} and $\omega_{d-1} = \frac{2\pi^{d/2}}{\Gamma(d/2)}$ gives rise 
\begin{align*}
|\eta_k|\leq \frac{B_k}{2^k}\frac{\omega_{d-2}}{\omega_{d-1}}\frac{\Gamma(1/2)\Gamma((d-1)/2)}{\Gamma(k+d/2)} = \frac{B_k}{2^k}\frac{\Gamma(d/2)}{\Gamma(k+d/2)}.
\end{align*}
Then, applying $\mu_k = \eta_k^2$ completes the proof.
\end{proof}

\begin{assumption}\label{assumption: smooth-activation}
Assume that $B_k:=\max_{t\in\RR}|\sigma^{(k)}(t)|\lesssim \Gamma(k+1)$.
\end{assumption}
All the popular smooth activation functions satisfy the above assumption as shown below.
\begin{itemize}
\item For $\sigma(t)=\sin(t)$ and $\sigma(t)=\cos(t)$, $B_k=1$. 
\item Consider the sigmoid function: $\sigma(z)=1/(1+e^{-z})$, which can be viewed as a complex function $\mathbb{C}\mapsto\mathbb{C}$. The singular points of $\sigma$ are $\{z=(2k+1)\pi i\}_{k\in\mathbb{Z}}$. For any $t\in\RR$, let $C_{t} = \{z\in \mathbb{C}\,:\,|z-t|=2\}$. Then, all the singular points must be outside the curve $C_t$. Using Cauchy's integral formula, for any $t\in\RR$, we have
\begin{align}\label{eqn: Cauchy}
\notag |\sigma^{(k)}(t)| &= \left|\frac{\Gamma(k+1)}{2\pi i}\int_{C_{t}} \frac{\sigma(z)}{(z-t)^{k+1}}\dd z\right|\leq\frac{\Gamma(k+1)}{2\pi}\int_{C_{t}} \frac{|\sigma(z)|}{|z-t|^{k+1}} |\dd z|\\
& \leq \frac{\Gamma(k+1)\max_{z\in C_t}|\sigma(z)|}{2\pi 2^{k+1}}\int_{C_{t}} |\dd z|\leq \frac{\Gamma(k+1)\max_{z\in C_t}|\sigma(z)|}{2^k} \lesssim \frac{\Gamma(k+1)}{2^k}.
\end{align}
\item For all the other commonly-used smooth activation functions, we can obtain similar estimates of the $k$-th order derivatives by using Cauchy's integral formula.
\end{itemize}

\begin{proposition}\label{pro: smooth-lower-bound}
Under Assumption \ref{assumption: smooth-activation},  we have 
$
    \Lambda(m)\lesssim 1/m.
$
\end{proposition}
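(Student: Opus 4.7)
The plan is to apply Lemma~\ref{lemma: smooth-activation} under Assumption~\ref{assumption: smooth-activation} to control each eigenvalue $\mu_k$, combine with the multiplicity factor $N(d,k)$, and carefully resum to obtain the $1/m$ decay uniformly in $d$. Substituting $B_k\lesssim k!$ into Lemma~\ref{lemma: smooth-activation} gives
\[
\mu_k \;\lesssim\; \frac{(k!)^{2}\,\Gamma(d/2)^{2}}{4^{k}\,\Gamma(k+d/2)^{2}}.
\]
Using $N(d,k)\le 2\binom{k+d-2}{k}$ and a short cancellation I will put the combination in the product form
\[
N(d,k)\,\mu_k \;\lesssim\; \beta_k \;:=\; \prod_{i=1}^{k}\frac{i(d-2+i)}{(d-2+2i)^{2}}.
\]
AM-GM applied to the pair $(i,\,d-2+i)$, whose sum equals $d-2+2i$, shows each factor is at most $1/4$, so $(\beta_k)$ decays geometrically with ratio at most $1/4$; hence the tail satisfies $\sum_{k\ge K}N(d,k)\mu_k \lesssim \beta_K$.

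Next I reduce $\Lambda(m)$ to such a tail. Let $S_K := \sum_{k=0}^{K}N(d,k)$ and let $K_m$ be the index with $S_{K_m-1} < m \le S_{K_m}$. Since $\Lambda(m)$ equals the trace minus the sum of the largest $m$ eigenvalues, and any subset of size $m$ of the multiset $\{\mu_k \text{ with multiplicity } N(d,k)\}$ has sum no larger than this top-$m$ sum, choosing the subset that fills the low-$k$ blocks first gives
\[
\Lambda(m) \;\le\; \sum_{k\ge K_m}N(d,k)\,\mu_k \;\lesssim\; \beta_{K_m}.
\]
This argument does not require $(\mu_k)$ to be monotone, which matters because Lemma~\ref{lemma: smooth-activation} only supplies an upper bound.

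The central obstacle, and the only place where the dimension is used in earnest, is the sharp inequality
\[
\beta_K\,S_K \;\le\; 2 \qquad \text{for all } K\ge 0 \text{ and } d\ge 2.
\]
Writing $S_K\le 2(d)_K/K!$ via the hockey-stick identity (with $(a)_K:=a(a+1)\cdots(a+K-1)$), this reduces to the product inequality $(d-1)_K(d)_K \le 4^{K}((d/2)_K)^{2}$, which I will verify factor by factor using the one-line estimate $(d-1+i)(d+i)\le 4(d/2+i)^{2}$. The subtlety worth emphasizing is that the crude bound $\beta_K\le 4^{-K}$ is by itself not sufficient: for small $K$ and large $d$ one has $S_K\sim d^{K}$, so a purely $d$-independent geometric decay of $\beta_K$ would only give $\Lambda(m)\lesssim 1$ in that regime. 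What saves us is that $\beta_K$ is in fact strictly smaller than $4^{-K}$ precisely when $d$ is large, and the product $\beta_KS_K$ stays uniformly bounded. Granted the key inequality, $\beta_{K_m}\le 2/S_{K_m}\le 2/m$ immediately yields $\Lambda(m)\lesssim 1/m$, as claimed.
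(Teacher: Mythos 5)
Your proof is correct. The chain of estimates checks out: from Lemma~\ref{lemma: smooth-activation} with $B_k\lesssim k!$ one indeed gets $N(d,k)\mu_k\lesssim \beta_k=\prod_{i=1}^{k}\frac{i(d-2+i)}{(d-2+2i)^2}$ (using $N(d,k)\le 2\binom{k+d-2}{k}$ and $4(d/2+i-1)^2=(d-2+2i)^2$); AM--GM gives each factor $\le 1/4$, so the block tail is controlled by $\beta_{K_m}$; the reduction $\Lambda(m)\le\sum_{k\ge K_m}N(d,k)\mu_k$ via choosing the top-$m$ sum to dominate the ``fill low blocks first'' selection is valid and, as you note, does not need monotonicity of $\mu_k$; and the key inequality $\beta_K S_K\le 2$ follows factor by factor from $(d-1+i)(d+i)\le(d+2i)^2$, which is elementary.

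Your route differs from the paper's in its technical execution, though the overall strategy (bound $\mu_k$ via Lemma~\ref{lemma: smooth-activation}, compare the block index against the cumulative multiplicity $S_K=\sum_{k\le K}N(d,k)$, then resum) is the same. The paper passes to Stirling asymptotics, proves the stronger pointwise statement that the block-ordered eigenvalues satisfy $\bar{\lambda}_m\lesssim m^{-2}$ by analyzing the auxiliary function $h(t)=(t+1)^{2t+2}/(2t+1)^{2t+1}\le 1$ with $k=td$, and then sums the tail; this yields individual eigenvalue decay as a by-product but hides constants inside asymptotic equivalences. You instead keep everything in exact Pochhammer/product form, bound whole spectral blocks by a geometrically decaying sequence, and prove the clean discrete inequality $\beta_K S_K\le 2$, which gives the tail bound $\Lambda(m)\le 2/m$ (up to the constant from Assumption~\ref{assumption: smooth-activation}) directly, with explicit constants, uniformly in $d$, and without Stirling. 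Your remark that $\beta_K\le 4^{-K}$ alone is insufficient because $S_K\sim d^K$ for small $K$ correctly identifies where the dimension-dependence must be absorbed; the paper's $h(t)\le 1$ lemma plays exactly this role in its version. Both arguments are complete; yours is the more elementary and quantitative of the two.
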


The proof is deferred to Appendix \ref{sec: proof-single-neuron-smooth-trace}. We remark that the above estimate of $\Lambda(m)$ is rather rough  for most smooth activation functions, where $B_k$ is much smaller than $\Gamma(k+1)$ as demonstrated in  Eq.~\eqref{eqn: Cauchy}. A simple combination of Proposition \ref{pro: smooth-lower-bound} and the proof of Lemma \ref{pro: gen-feature} gives  

\begin{equation}\label{eqn: smooth-err}
    \EE_{v\sim\tau_{d-1}}\inf_{c_1,\dots,c_m} \|\sigma_v - \sum_{j=1}^m c_j \phi_j\|_{\tau_{d-1}}^2 \lesssim \frac 1 m,
\end{equation}
where $\{\phi_j\}_{j=1}^{m}$ are the leading spherical harmonics. Applying Proposition \ref{pro: smooth-pointwise}  to activation functions satisfying Assumption \ref{assumption: smooth-activation}, we can obtain that
\begin{equation}
    \sup_{v \in \bS^{d-1}}\inf_{c_1,\dots,c_m} \|\sigma_v - \sum_{j=1}^m c_j \phi_j\|_{\tau_{d-1}}^2 \lesssim \frac d m,
\end{equation}
By comparing with  Theorem \ref{thm: gen-1}, we see that for smooth activations, the approximation with fixed features does not suffer from the CoD. This is very different from the nonsmooth ones.

\section{Kolmogorov widths of two-layer neural networks}
\label{sec: two-layer-network}
We are now ready to estimate $\omega_m(\cN)$, which describes the (in)approximability of $\cN$ by linear methods. In this section, we use $\cN^r$ instead of $\cN$ for emphasizing the dependence on the norms of inner-layer widths.  In addition, in order to deal with the bias term, we  define
$\sigma^{(\gamma,b)}(t)=\sigma(\gamma t+b)$ for $\gamma>0,b\in\RR$ and the associated kernel 
\begin{equation}
k^{(\gamma,b)}(x,x')=\EE_{v\sim\tau_{d-1}}[\sigma(\gamma v^Tx+b)\sigma(\gamma v^Tx'+b)]. 
\end{equation}
Let $\Lambda^{(\gamma,b)}(\cdot)$ denote the trace decay of $k^{(\gamma,b)}$ defined according to  Eq.~\eqref{eqn: trace-decay}. 


\begin{theorem}\label{proposition: 2lnn-complete-characterization}
Let $\Lambda_{r}(m)=\sup_{\gamma+|b|\leq r}\Lambda^{(\gamma,b)}(m)$ and $q(d,r) = \sup_{k \ge 1} \frac{\Lambda_r(k)}{\Lambda_r((d+1)k)}$. Then,
\[
\Lambda_r(m) \le  \omega_m(\cN^r)\lesssim q(d,r) \Lambda_r(m).
\]
\end{theorem}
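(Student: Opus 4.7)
My plan is to prove the two bounds separately, with the lower bound following almost immediately from the machinery already built up, and the upper bound obtained by using the leading spherical harmonics as universal features and then lifting the resulting single-neuron bound to $\cN^r$ by convexity.

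For the lower bound, I would observe that every single neuron $x \mapsto \sigma(\gamma v^\top x + b)$ with $v\in\SS^{d-1}$ and $\gamma + |b|\le r$ lies in $\cN^r$ (as a width-one network with coefficient $1$). Hence $\cQ^{\gamma,b}\subset \cN^r$ for every admissible $(\gamma,b)$, so $\omega_m(\cN^r)\ge \omega_m(\cQ^{\gamma,b})$. The worst-case-dominates-average inequality \eqref{eqn: worst} combined with Lemma \ref{pro: gen-feature} applied with $\pi=\tau_{d-1}$ gives $\omega_m(\cQ^{\gamma,b})\ge A_m^{(\tau_{d-1})}(\cQ^{\gamma,b})=\Lambda^{(\gamma,b)}(m)$. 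Taking the supremum over $\gamma+|b|\le r$ produces $\omega_m(\cN^r)\ge \Lambda_r(m)$.

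For the upper bound, I would fix $\{\phi_j\}_{j=1}^m$ as the leading $m$ spherical harmonics — one common feature set for all activations, all $\gamma$, all $b$. For a fixed admissible pair $(\gamma,b)$, view $\sigma^{(\gamma,b)}(t)=\sigma(\gamma t+b)$ as an activation function in its own right, so that $x\mapsto \sigma(\gamma v^\top x+b)$ is exactly a single-neuron target with activation $\sigma^{(\gamma,b)}$ and inner vector $v$. The associated dot-product kernel is $k^{(\gamma,b)}$ and its trace tail is $\Lambda^{(\gamma,b)}$. Since $\Lambda_r \ge \Lambda^{(\gamma,b)}$ pointwise by definition, I may take $L^{(\gamma,b)}\equiv \Lambda_r$ as a common majorant when invoking Proposition \ref{pro: smooth-pointwise}, which then yields, uniformly in $v\in\SS^{d-1}$ and in $(\gamma,b)$ with $\gamma+|b|\le r$,
\begin{equation*}
\inf_{c_1,\dots,c_m}\|\sigma(\gamma v^\top\cdot + b) - \sum_{j=1}^m c_j\phi_j\|_{\tau_{d-1}}^2 \;\lesssim\; q(d,\Lambda_r)\,\Lambda_r(m) \;=\; q(d,r)\,\Lambda_r(m).
\end{equation*}

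To finish, I would lift this uniform single-neuron bound to the whole class $\cN^r$ by linearity. Any finite-width element has the form $f=\sum_j a_j\,\sigma(w_j^\top\cdot + b_j)$ with $\sum_j|a_j|\le 1$ and $\|w_j\|_2+|b_j|\le r$; writing $\gamma_j=\|w_j\|_2$ and $v_j=w_j/\|w_j\|_2$, picking $c_{i,j}^*$ to be the optimal coefficients for the $j$-th neuron, and setting $\hat c_i=\sum_j a_j c_{i,j}^*$, the triangle inequality gives
\begin{equation*}
\|f-\sum_i \hat c_i \phi_i\|_{\tau_{d-1}} \;\le\; \sum_j |a_j|\,\|\sigma(\gamma_j v_j^\top\cdot+b_j)-\sum_i c_{i,j}^*\phi_i\|_{\tau_{d-1}} \;\lesssim\; \sqrt{q(d,r)\Lambda_r(m)},
\end{equation*}
so the squared error is $\lesssim q(d,r)\Lambda_r(m)$. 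A standard continuity argument passes the bound to the $L^\infty$-closure $\cN^r$.

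The main subtlety, and the step I would pay attention to, is ensuring that one single set of $m$ features suffices uniformly over the three-parameter family $(v,\gamma,b)$. This relies on two facts working together: the spherical harmonics are the common eigenbasis for every rotationally invariant dot-product kernel $k^{(\gamma,b)}$ (so the leading ones simultaneously diagonalize all of them), and $\Lambda_r$ majorizes every $\Lambda^{(\gamma,b)}$ uniformly, allowing a single application of Proposition \ref{pro: smooth-pointwise} with majorant $L\equiv\Lambda_r$ instead of a $(\gamma,b)$-dependent majorant that would have produced a worse factor. Once this uniformity is secured, the rest of the argument is essentially the convexity step, and the constant $q(d,r)$ appearing in the statement matches automatically.
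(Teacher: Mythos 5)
Your proposal is correct and follows essentially the same route as the paper: the lower bound via Lemma \ref{pro: gen-feature} applied to the single-neuron subfamilies $\cQ^{\gamma,b}\subset\cN^r$, and the upper bound via Proposition \ref{pro: smooth-pointwise} with the spherical harmonics as fixed features and the common majorant $L=\Lambda_r$, lifted to $\cN^r$ by convexity of the coefficients and a closure argument. The uniformity subtlety you flag is exactly how the paper handles it, so nothing is missing.
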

\begin{proof}
Let $\sigma_{w,b}(x)=\sigma(w^Tx+b)$. Then, for any  $\phi_1,\dots,\phi_m$,
\begin{align*}
\sup_{f\in \cN^r}   \inf_{c_1,\dots,c_{m}\in\RR}\| f - \sum_{j=1}^{m} c_j \phi_j\|_{\tau_{d-1}}^2&\geq\sup_{\gamma+|b|\leq r}\EE_{v\sim\tau_{d-1}}\inf_{c_1,\dots,c_{m}\in\RR}\| \sigma_{\gamma v,b} - \sum_{j=1}^{m} c_j \phi_j\|_{\tau_{d-1}}^2\\
&\geq \sup_{\gamma+|b|\leq r} \Lambda^{(\gamma,b)}(m) =  \Lambda_r(m),
\end{align*}
where the second inequality follows from Lemma \ref{pro: gen-feature}. Hence, the lower bound is proved. Let 
$
(c_1(w,b),\dots, c_m(w,b))=\argmin_{c_1,\dots,c_m} \|\sigma_{w,b}-\sum_{j=1}^m c_j \phi_j\|_{\tau_{d-1}}^2.
$
Then by Proposition \ref{pro: smooth-pointwise} with taking $L(m)=\Lambda_r(m)$, we have
$$
\|\sigma_{w,b}-\sum_{j=1}^m c_j(w,b) \phi_j\|_{\tau_{d-1}}^2\lesssim q(d,r)\Lambda_r(m).
$$
For any $f\in \cN^r$ and $\varepsilon>0$, there exist $\{(a_i, w_i, b_i)\}_{i}$ such that $\sum_{i}|a_i|\leq 1, \max_{i} (\|w_i\|_2+|b_i|)\leq r$ and
\[
	\|f - \sum_{i} a_i \sigma_{w_i,b_i}\|_{\tau_{d-1}}\leq \varepsilon.
\]
Let $\bar{c}_j = \sum_i a_i c_j(w_i,b_i)$. Then, 
\begin{align*}
\|f - \sum_{j=1}^m \bar{c}_j &\phi_j\|_{\tau_{d-1}} \leq \varepsilon + \Big\|\sum_i a_i \sigma_{w_i,b_i} - \sum_{j=1}^m \sum_i a_i c_j(w_i,b_i) \phi_j\big]\Big\|_{\tau_{d-1}} \\
&\leq \varepsilon+\sum_{i} |a_i|\|\sigma_{w_i,b_i}-\sum_{j=1}^m c_j(w_i,b_i)\phi_j\|_{\tau_{d-1}}\\
&\leq \varepsilon +\sum_{i}|a_i| \sqrt{q(d,r)\Lambda_r(m)}\leq \varepsilon+ \sqrt{q(d,r)\Lambda_r(m)}.
\end{align*} 
Taking $\varepsilon\to 0$,  we complete the proof.
\end{proof}

In the proof, the key ingredient is the uniform approximability of single neurons shown in Proposition \ref{pro: smooth-pointwise}. It is implied that $\Lambda_r(m)$ provide a tight bounds of the Kolmogorov width $\omega_m(\cN^r)$.  When $\Lambda_r(m)\sim m^{-s}$,  $q(d,r)\sim d^s$. In particular, when $\Lambda_r(m)=C_d m^{-\beta/d}$, we have $q(d,r)\leq d^{\beta/d}=O(1)$. This means that when $\Lambda_r(m)$ exhibits the CoD,  $\omega_m(\cN^r)\sim \Lambda_r(m)$, i.e., the spectral decay provides an exact description of the Kolmogorov width. Next, we study how the decay rate  is affected by  the norms of inner-layer widths and the smoothness of activation functions.

\subsection{Influence of the norms of inner-Layer weights }
 For ReLU , $\cN^r$ (up to a rescaling) are obviously the same for different $r$'s because of the homogeneity of $\sigma$. In particular, Theorem \ref{thm: gen-1} implies 
\begin{equation}
    \omega_m(\cN^r)\geq \frac{C(1/d)r^{2}}{m^{3/(d-1)}},
\end{equation}
where $C(1/d)$ depends on $1/d$ polynomially. Hence,  $\omega_m(\cN^r)$ exhibits the CoD for the ReLU activation function and the decay rate is independent of the value $r$. 
 However, for general activation functions, restricting $r$ may affects the decay rate.

 \begin{theorem}\label{thm: 2lnn-smooth-ub}
 Suppose that $\sigma\in C^\infty(\RR)$ satisfies Assumption \ref{assumption: smooth-activation} and $r=1$. Then, $
    \omega_m(\cN^1) \lesssim d/m.
$
The equality is reached by choosing the spherical harmonics as the fixed features.
 \end{theorem}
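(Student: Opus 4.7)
The plan is to combine Theorem \ref{proposition: 2lnn-complete-characterization} with Proposition \ref{pro: smooth-lower-bound}, transferring the smooth-activation bound from $\sigma$ to the shifted-scaled activations $\sigma^{(\gamma,b)}(t)=\sigma(\gamma t+b)$ that appear in the definition of $\Lambda_r$.

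First I would verify that Assumption \ref{assumption: smooth-activation} is preserved under $(\gamma,b)$ with $\gamma+|b|\le 1$. Writing $\sigma^{(\gamma,b)}(t)=\sigma(\gamma t+b)$, the chain rule gives $(\sigma^{(\gamma,b)})^{(k)}(t)=\gamma^{k}\sigma^{(k)}(\gamma t+b)$. Since $\gamma\le 1$, this yields
\begin{equation*}
\sup_{t\in\RR}\,|(\sigma^{(\gamma,b)})^{(k)}(t)|\le \gamma^{k}B_{k}\le B_{k}\lesssim \Gamma(k+1),
\end{equation*}
uniformly over all admissible $(\gamma,b)$. Hence $\sigma^{(\gamma,b)}$ itself satisfies Assumption \ref{assumption: smooth-activation} with constants independent of $(\gamma,b)$.

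Next I would invoke Proposition \ref{pro: smooth-lower-bound} on the single-neuron kernel $k^{(\gamma,b)}(x,x')=\EE_{v\sim\tau_{d-1}}[\sigma^{(\gamma,b)}(v^{T}x)\sigma^{(\gamma,b)}(v^{T}x')]$, which yields an absolute constant $C$ such that $\Lambda^{(\gamma,b)}(m)\le C/m$ for every $(\gamma,b)$ with $\gamma+|b|\le 1$. Taking the supremum over such $(\gamma,b)$,
\begin{equation*}
\Lambda_{1}(m)=\sup_{\gamma+|b|\le 1}\Lambda^{(\gamma,b)}(m)\le C/m.
\end{equation*}

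Finally, I would apply Theorem \ref{proposition: 2lnn-complete-characterization}, but with the freedom (used in Proposition \ref{pro: smooth-pointwise}) to replace $\Lambda_{1}$ by any non-increasing majorant $L\ge\Lambda_{1}$. Choosing the explicit majorant $L(m)=C/m$ gives $q(d,L)=\sup_{k\ge 1}L(k)/L((d+1)k)=d+1$, hence re-running the upper-bound step of the proof of Theorem \ref{proposition: 2lnn-complete-characterization} with this $L$ yields
\begin{equation*}
\omega_{m}(\cN^{1})\lesssim q(d,L)\,L(m)=\frac{(d+1)C}{m}\lesssim \frac{d}{m}.
\end{equation*}
The fact that this bound is attained by spherical harmonics is immediate from Proposition \ref{pro: smooth-pointwise}, whose proof already selects the leading spherical harmonics as the optimal features.

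The only mild subtlety is that the quantity $q(d,r)$ in the statement of Theorem \ref{proposition: 2lnn-complete-characterization} is defined in terms of $\Lambda_{r}$ itself, which a priori could be much smaller than $C/m$ at $(d+1)k$ and make $q(d,1)$ large; the fix is simply to carry Proposition \ref{pro: smooth-pointwise} through with the majorant $L(m)=C/m$, for which the ratio is trivially $d+1$. No new estimates on the eigenvalues are needed beyond what Proposition \ref{pro: smooth-lower-bound} already provides.
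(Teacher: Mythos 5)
Your proof is correct and takes essentially the same route as the paper, which obtains this theorem as a direct combination of Proposition \ref{pro: smooth-lower-bound} and Theorem \ref{proposition: 2lnn-complete-characterization}, relying on the remark after Proposition \ref{pro: smooth-pointwise} that for activations satisfying Assumption \ref{assumption: smooth-activation} one may take the majorant $L(m)=1/m$, for which $q(d,L)\lesssim d$. The details you make explicit --- that the chain rule keeps Assumption \ref{assumption: smooth-activation} valid uniformly over $\gamma+|b|\le 1$, and that one should run the upper-bound step with the majorant $L(m)=C/m$ rather than with $q$ defined through $\Lambda_1$ itself --- are precisely what the paper leaves implicit in its one-line proof.
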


This theorem follows from a  simple combination of Proposition \ref{pro: smooth-lower-bound} and Proposition \ref{proposition: 2lnn-complete-characterization}.  For the specific arctangent activation, we have a fine-grained characterization as follows.
 \begin{proposition}\label{pro: 2lnn-smooth-ub-2}
 Assume $\sigma(t)=\arctan(t)$. We have 
\begin{equation}\label{eqn: 2lnn-arctan-upper}
    \omega_{m}(\cN^r) \lesssim \frac{d^4r^2}{m^{\min(0.5,r^{-2})}}.
\end{equation}  
The equality is reached by choosing the spherical harmonics as the fixed features.
 \end{proposition}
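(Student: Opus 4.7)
The plan is to apply Theorem \ref{proposition: 2lnn-complete-characterization}, which reduces the task to upper bounding $\Lambda_r(m) = \sup_{\gamma+|b|\le r}\Lambda^{(\gamma,b)}(m)$; the ratio $q(d,r)$ then contributes at most $(d+1)^{\min(1/2,1/r^2)} \le d^{1/2}$ on top of this. The spherical-harmonic optimality is automatic, because the upper-bound half of Theorem \ref{proposition: 2lnn-complete-characterization} is realized by the leading spherical harmonics via Proposition \ref{pro: smooth-pointwise}.

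The key step is an analytic formula for $\eta_k^{(\gamma,b)}$. For $\sigma^{(\gamma,b)}(t) = \arctan(\gamma t + b)$, the $k$-th derivative is $\gamma^k\arctan^{(k)}(\gamma t+b)$, so Lemma \ref{lemma: eigen} gives
$$\eta_k^{(\gamma,b)} = \frac{\gamma^k \Gamma(d/2)}{2^k\sqrt{\pi}\,\Gamma(k+(d-1)/2)}\int_{-1}^{1}\arctan^{(k)}(\gamma t + b)(1-t^2)^{k+(d-3)/2}\,dt.$$
Using the classical identity
$$\arctan^{(k)}(z) = \frac{(-1)^{k-1}(k-1)!}{2i}\left(\frac{1}{(z-i)^k}-\frac{1}{(z+i)^k}\right),\quad k\ge 1,$$
the integrand becomes a pair of terms of the form $(\gamma t + b \mp i)^{-k}$. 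The affine substitution $t = 2u-1$ then matches these against the Euler integral representation of Gauss's hypergeometric function, producing a closed-form expression for $\eta_k^{(\gamma,b)}$ in terms of ${}_2F_1$, i.e.\ the fine-grained characterization promised in the introduction.

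From this expression I would extract two complementary bounds on $\mu_k^{(\gamma,b)} = (\eta_k^{(\gamma,b)})^2$. A universal bound, via $|\arctan^{(k)}(z)| \le (k-1)!$ and the beta integral, reads
$$\mu_k^{(\gamma,b)} \;\lesssim\; \gamma^{2k}\frac{((k-1)!)^2\Gamma(d/2)^2}{4^k \Gamma(k+d/2)^2}.$$
A sharper geometric bound, needed once $r\gtrsim 1$, uses $|\arctan^{(k)}(z)| \le (k-1)!/(1+z^2)^{k/2}$ and a direct Laplace-type estimate of $\int_{-1}^{1}(1-t^2)^{k+(d-3)/2}(1+(\gamma t + b)^2)^{-k/2}dt$ (whose integrand is unimodal on $[-1,1]$), extracting a genuinely geometric factor of order $(1+r^2)^{-k}$. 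Since $\lambda_j=\mu_k^{(\gamma,b)}$ with multiplicity $N(d,k)$, summing from the cutoff $K_m$ satisfying $\sum_{k'<K_m}N(d,k')\ge m$ and balancing via Stirling, the first bound gives a universal decay $\Lambda^{(\gamma,b)}(m)\lesssim d^{O(1)}r^2 m^{-1/2}$, while the second gives $\Lambda^{(\gamma,b)}(m)\lesssim d^{O(1)}r^2 m^{-1/r^2}$ for $r\gtrsim 1$. Taking the minimum and multiplying by $q(d,r)\lesssim d^{1/2}$ yields the stated $d^4 r^2 m^{-\min(1/2, 1/r^2)}$ bound.

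The main obstacle is the geometric eigenvalue bound in the regime $\gamma \sim r$ large: the hypergeometric argument approaches modulus $1$ as $\gamma/\sqrt{1+b^2}\to 1$, so a naive hypergeometric estimate loses the correct $r$-dependence. My workaround is to step away from the hypergeometric formula in that regime and bound the underlying integral via Laplace's method directly, which captures the $(1+r^2)^{-k}$ factor cleanly. Tracking polynomial-in-$d$ constants through Stirling and the multiplicity $N(d,k)\lesssim k^{d-2}$ then yields the $d^{O(1)}$ prefactor, and hence $d^4$ after incorporating $q(d,r)\lesssim d^{1/2}$.
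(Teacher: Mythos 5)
Your reduction via Theorem \ref{proposition: 2lnn-complete-characterization} and your route to the closed form (partial fractions $\arctan^{(k)}(z)=\tfrac{(-1)^{k-1}(k-1)!}{2i}[(z-i)^{-k}-(z+i)^{-k}]$ plus Euler's integral) are sound, and your ``universal'' bound is exactly Lemma \ref{lemma: smooth-activation} with $B_k=\gamma^k(k-1)!$; for $r^2\lesssim 2$ it does deliver the $m^{-1/2}$ rate, which is all that regime needs. The genuine gap is in the second, ``geometric'' bound, which is precisely the part that must carry the proposition for $r^2\gtrsim 2$. The integral $\int_{-1}^1(1-t^2)^{k+(d-3)/2}\,(1+(\gamma t+b)^2)^{-k/2}\,dt$ does \emph{not} contain a $(1+r^2)^{-k}$ factor: its integrand equals $(1+b^2)^{-k/2}$ at $t=-b/\gamma$ (equal to $1$ when $b=0$), the mass concentrates there, and a Laplace estimate gives only the polynomially small value $\asymp (k(1+\gamma^2)+d)^{-1/2}$ (it is also bounded below by this order, so no sharper estimate of this integral can help). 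After restoring the $\gamma^{2k}$ prefactor, your eigenvalue bound behaves like $(r/2)^{2k}k^{-(d-1)}$ for $k\gg d$; multiplying by the multiplicity $N(d,k)\asymp k^{d-2}$, the tail $\sum_k N(d,k)\mu_k$ diverges for $r>2$, so the argument cannot even bound $\Lambda_r(m)$, let alone give $m^{-1/r^2}$, and it also fails to give $m^{-1/r^2}$ with polynomial constants for moderate $r^2\in(2,4)$. The smallness of $\eta_k$ for large $\gamma$ comes from \emph{cancellation} in the oscillatory factor $\mathrm{Im}[(\gamma t+b-i)^{-k}]$, which is destroyed the moment you replace $\arctan^{(k)}$ by its absolute value; equivalently, in the Euler-integral form the weight $t^{(k-1)/2}$ pushes the mass to $t\asymp k/(2k+d)$, where $(1+\gamma^2 t)^{-k/2}$ really is geometric. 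That is how the paper proceeds: it keeps the hypergeometric/Euler representation in the large-$r$ regime and splits the integral at $t=1/r^2$ to get $I\lesssim r\,e^{-(k+d)/(2r^2)}$ (Proposition \ref{pro: arctan-case-1}), plus a unimodality bound in the complementary regime. Your stated reason for abandoning that representation is not an actual obstruction: the argument of the hypergeometric function is $-r^2/(1\pm ib)^2$, and the Euler integral representation is valid for it irrespective of its modulus, so no convergence issue arises.

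Two secondary points. First, your claim that the universal bound gives $\Lambda^{(\gamma,b)}(m)\lesssim d^{O(1)}r^2m^{-1/2}$ for all $r$ cannot be right: with the $\gamma^{2k}$ factor the balancing breaks already at $k\asymp d$ once $r\geq 2$, and a truly universal $m^{-1/2}$ upper bound would contradict the paper's own lower bounds (Theorems \ref{thm: 2lnn-arctan-lower-bound} and \ref{thm: low-bound-actan}) for $r\geq d^{1/2+\epsilon}$. Second, ``taking the minimum'' of two simultaneously valid bounds would yield the exponent $\max(1/2,1/r^2)$, not $\min$; what is actually needed is a case split in $r$ (use the universal bound for $r^2\le 2$, a genuinely geometric eigenvalue bound for $r^2\ge 2$), which is how the paper assembles $m^{-1/\max(2,r^2)}$.
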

The key  idea is to estimate the integral \eqref{eqn: smooth-eta_k} in the Fourier domain using the Parseval's theorem. By using the explicit formula of the Fourier transform of $\sigma^{(k)}$ for the arctangent function, we show that the eigenvalue can  be expressed analytically using Gaussian hypergeometric functions. Then the integral representation of Gaussian hypergeometric functions is used for the estimation. The proof is quite technical and  deferred to Appendix \ref{sec: arctan-upper-bound}.

Theorem \ref{thm: 2lnn-smooth-ub} and Proposition \ref{pro: 2lnn-smooth-ub-2} imply that two-layer neural networks  have no clear separation from linear methods when the activation is smooth and the norms of inner-layer weights are bounded. Specifically, in this case, two-layer neural networks behave like polynomials in terms of approximation power. This is quite different from the ReLU case, where the separation of two type of methods is independent of the inner-layer weight norms.

Proposition \ref{pro: 2lnn-smooth-ub-2} implies that the error rate decreases with $r$ but independent of $d$ if $r=O(1)$. We conjecture that similar results hold for general smooth activation functions and some numerical supports are provided in Figure~\ref{fig: r-influence}. 
Specifically, we examine four activation functions including two sigmoid-like activations: Arctan and Sigmoid, and two ReLU-like activations: SiLU and softplus. 
According to Proposition \ref{proposition: 2lnn-complete-characterization}, $\Lambda_r$ is a good proxy of the Kolmogorov width.  The eigenvalues are numerically computed using Eq.~\eqref{eqn: eigen-sv}. In experiments, we find that $\Lambda_r=\Lambda^{(r,0)}$ for all the activation functions  examined.  Figure \ref{fig: r-influence} shows that  for all the cases, the rate is independent of $d$ for a fixed $r$, and  decreases with $r$ for a fixed $d$. This is consistent with Eq.~\eqref{eqn: 2lnn-arctan-upper}, which is only proved for the arctangent activation function.

\begin{figure}[!h]
\begin{subfigure}{0.5\textwidth}
    \includegraphics[width=0.49\textwidth]{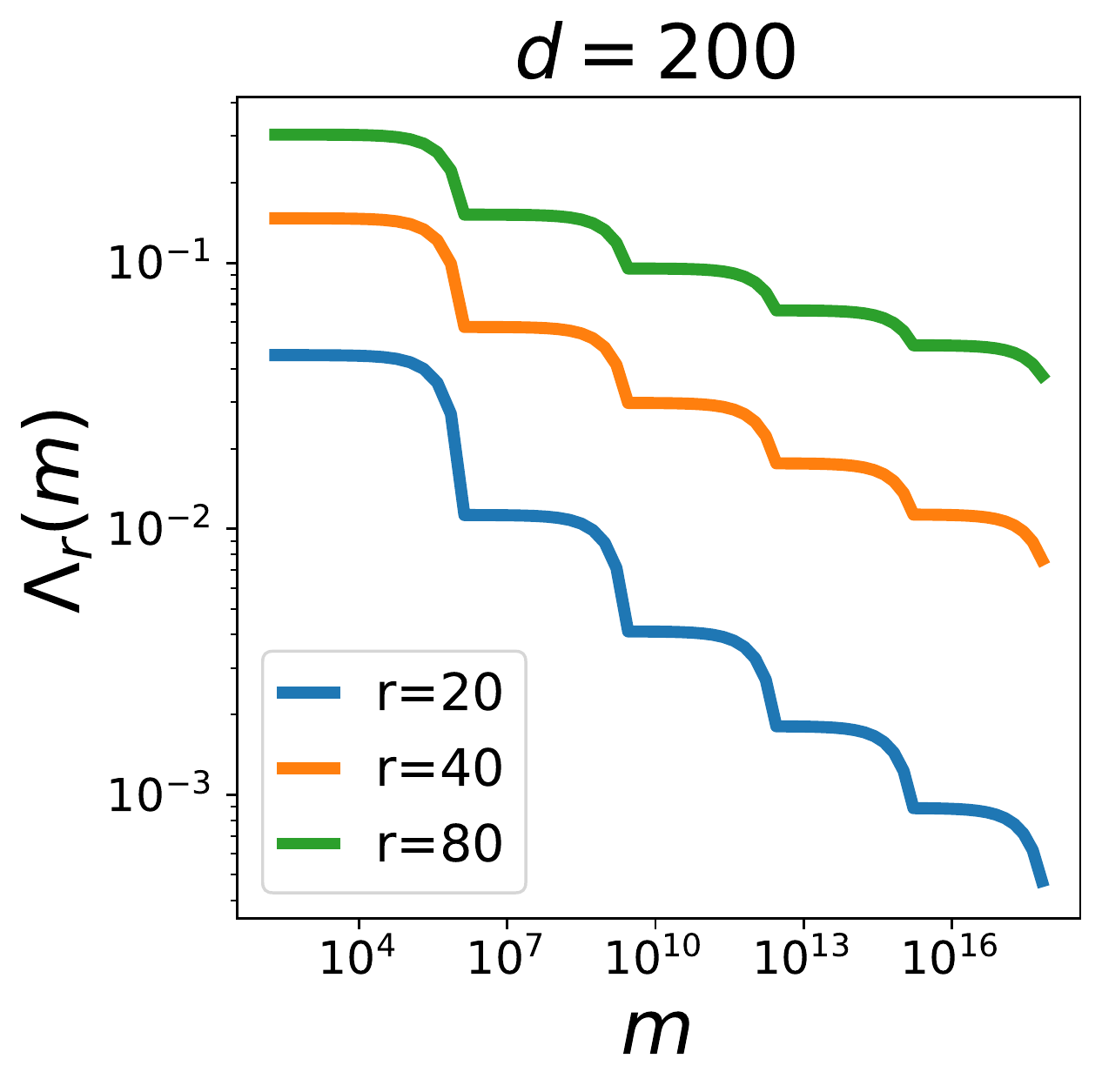}\hspace*{-1mm}
    \includegraphics[width=0.49\textwidth]{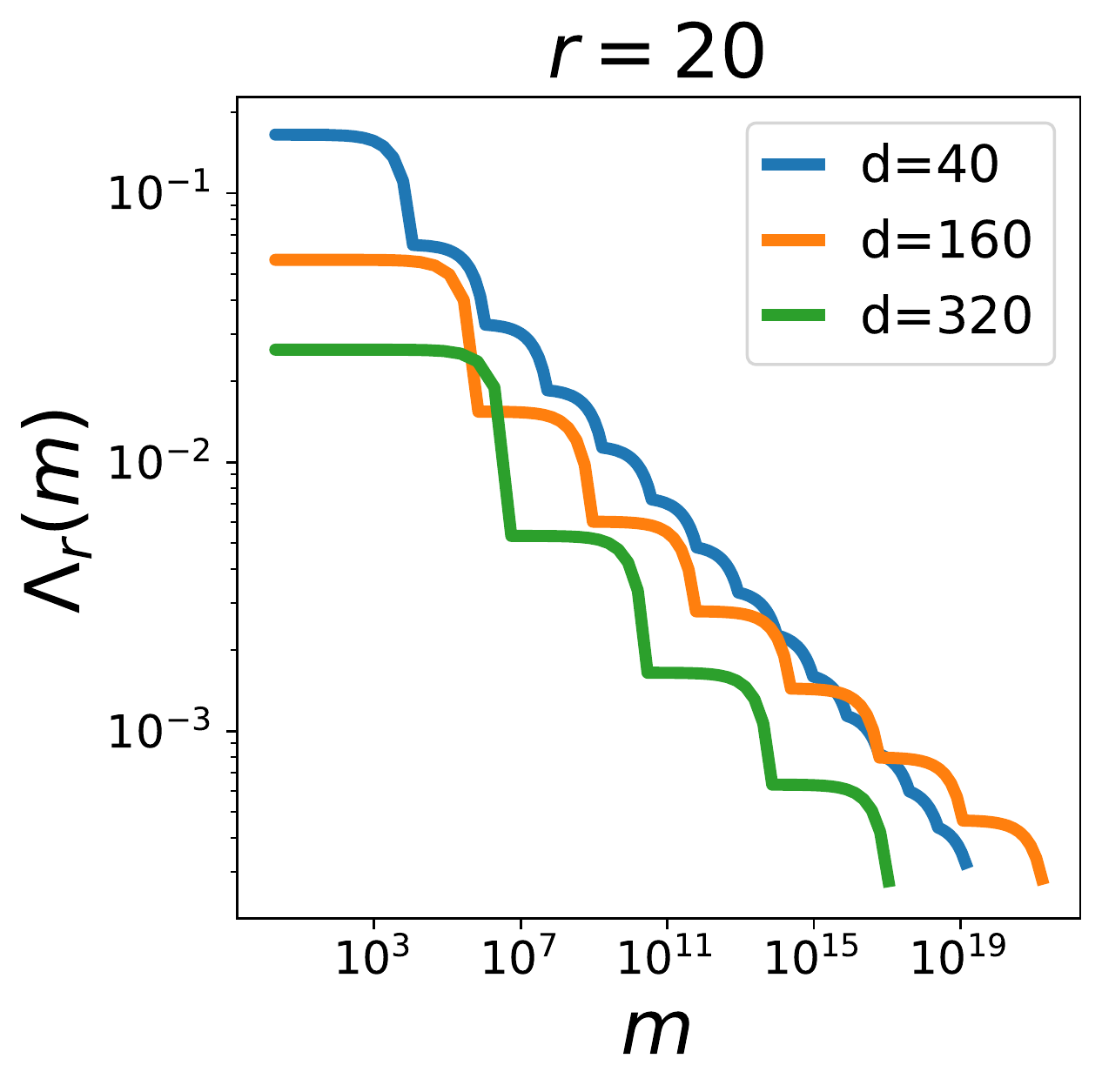}
    \vspace*{-3mm}
    \caption{\small Arctan.}
\end{subfigure}
\hspace*{-2mm}
\begin{subfigure}{0.5\textwidth}
    \includegraphics[width=0.49\textwidth]{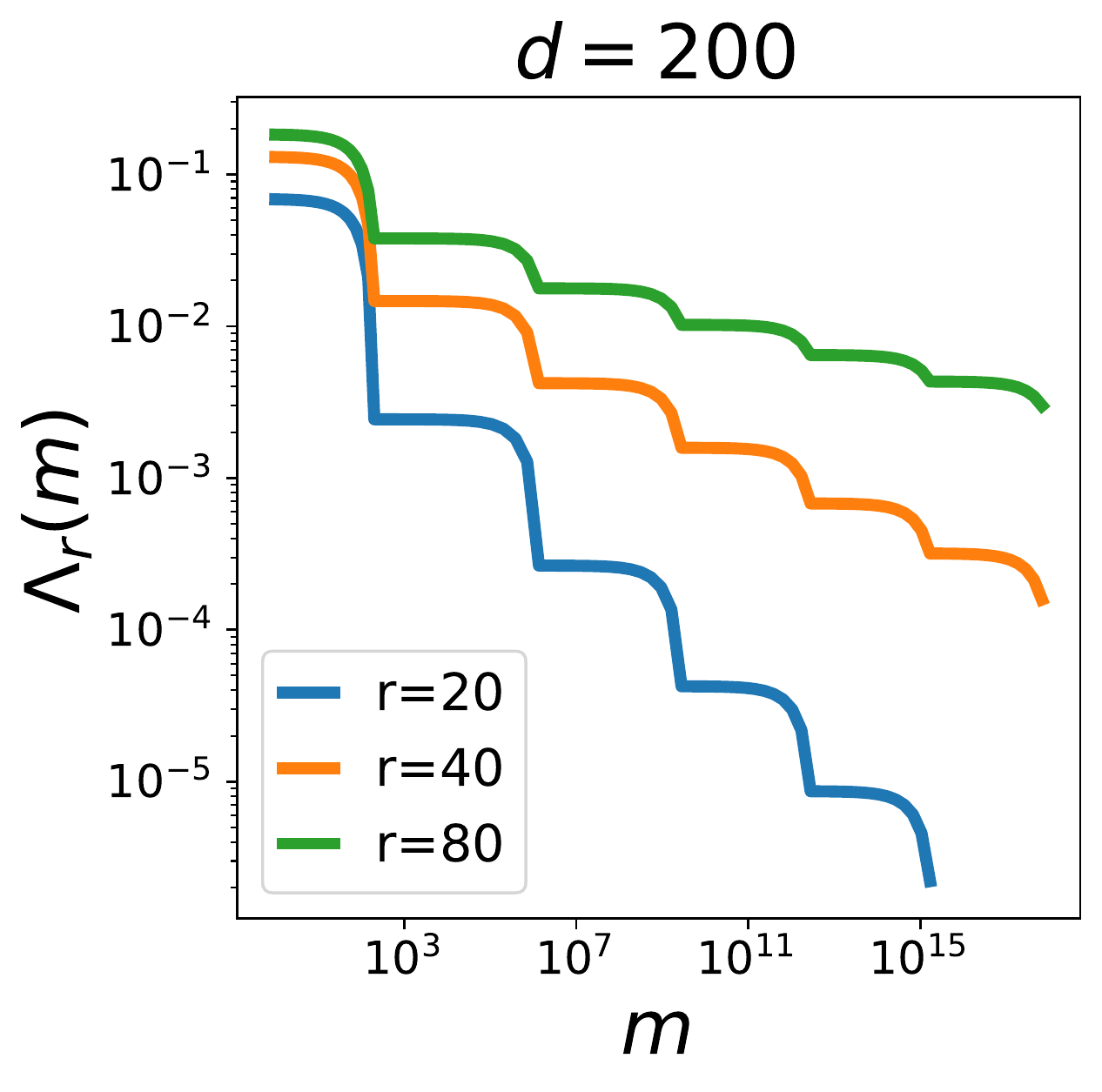}\hspace*{-1mm}
    \includegraphics[width=0.49\textwidth]{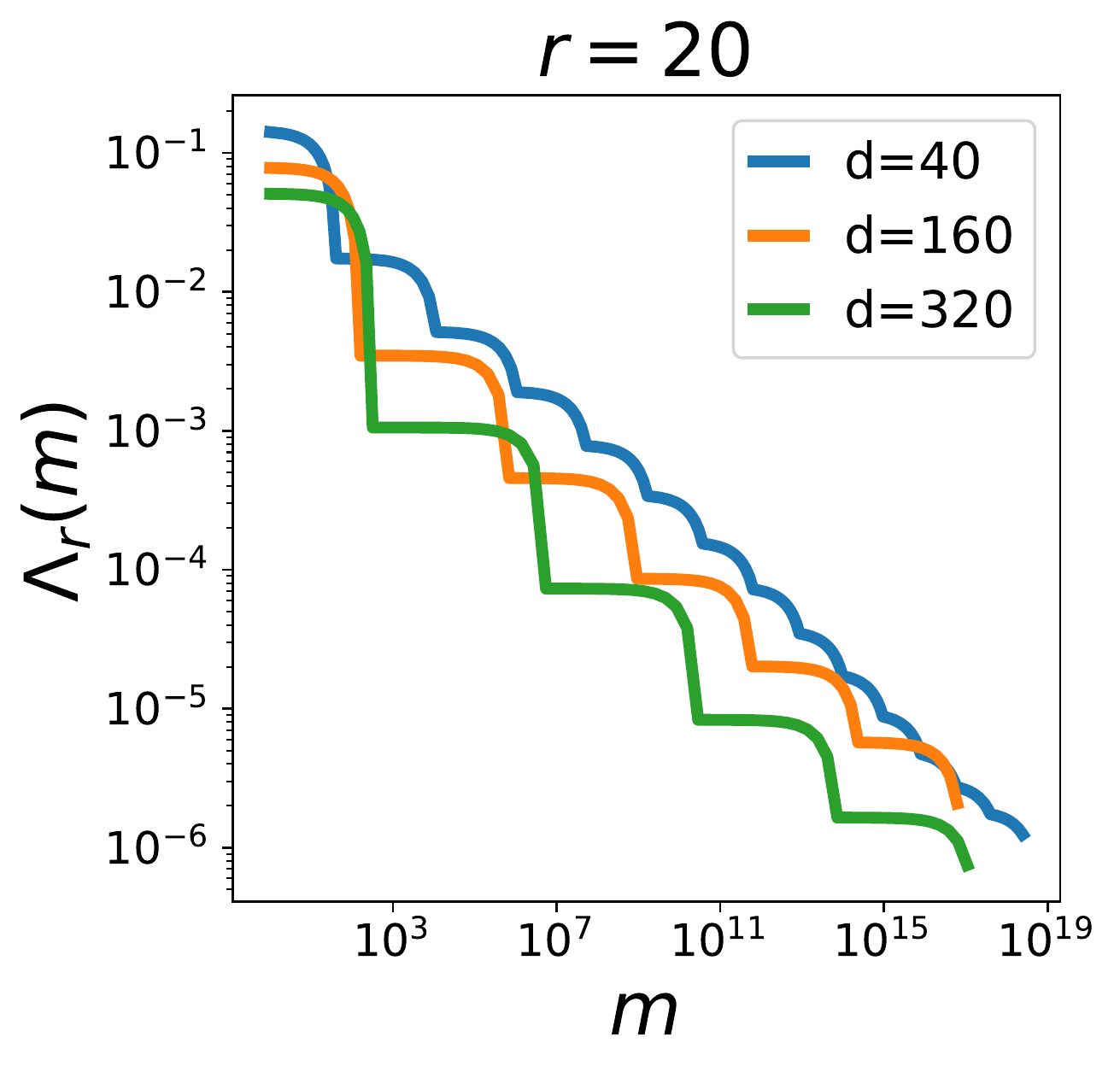}
    \vspace*{-3mm}
    \caption{\small Sigmoid.}
\end{subfigure}

\begin{subfigure}{0.5\textwidth}
    \includegraphics[width=0.49\textwidth]{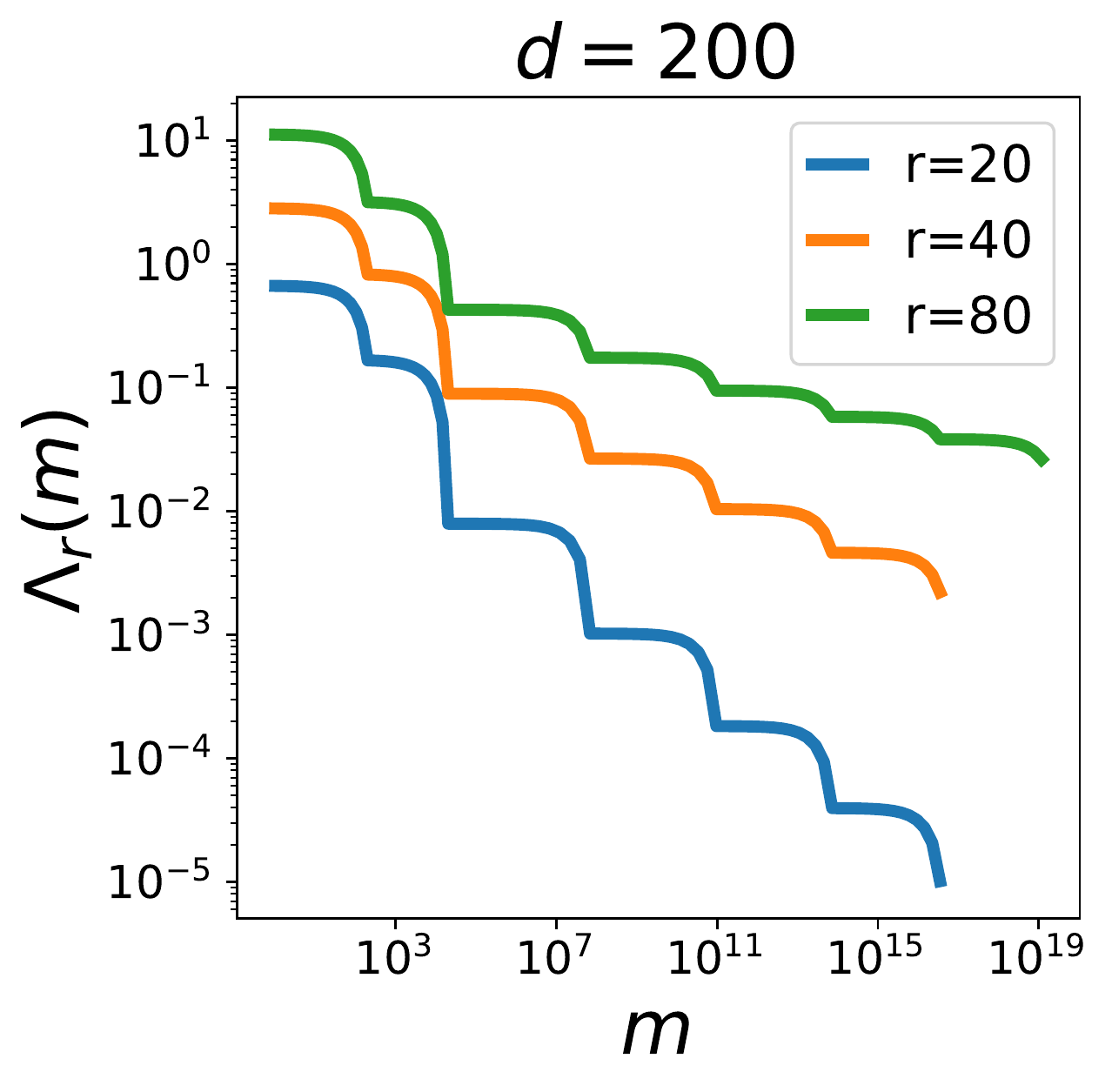}\hspace*{-1mm}
    \includegraphics[width=0.49\textwidth]{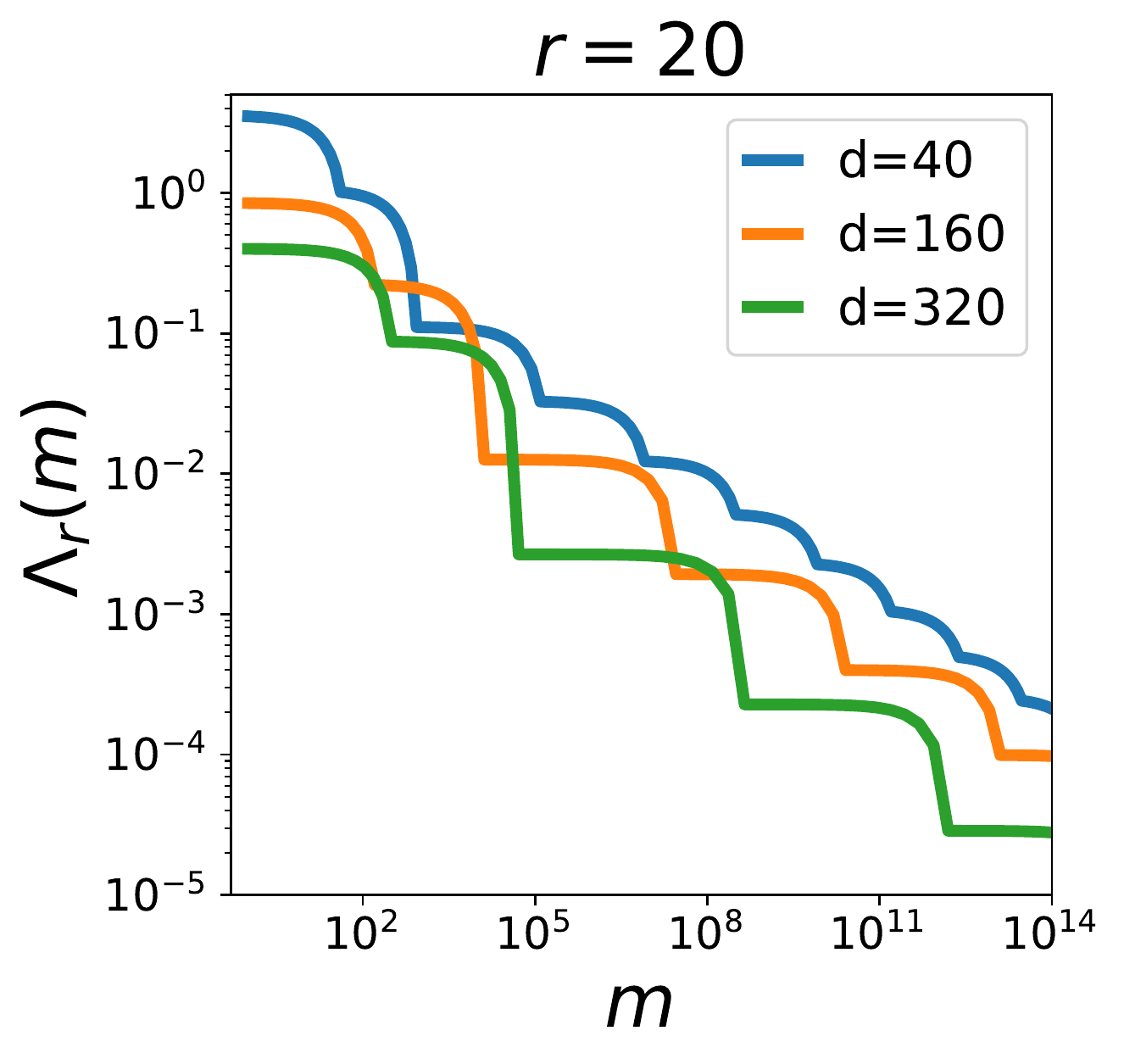}
    \vspace*{-3mm}
    \caption{\small SiLU.}
\end{subfigure}
\hspace*{-2mm}
\begin{subfigure}{0.5\textwidth}
    \includegraphics[width=0.49\textwidth]{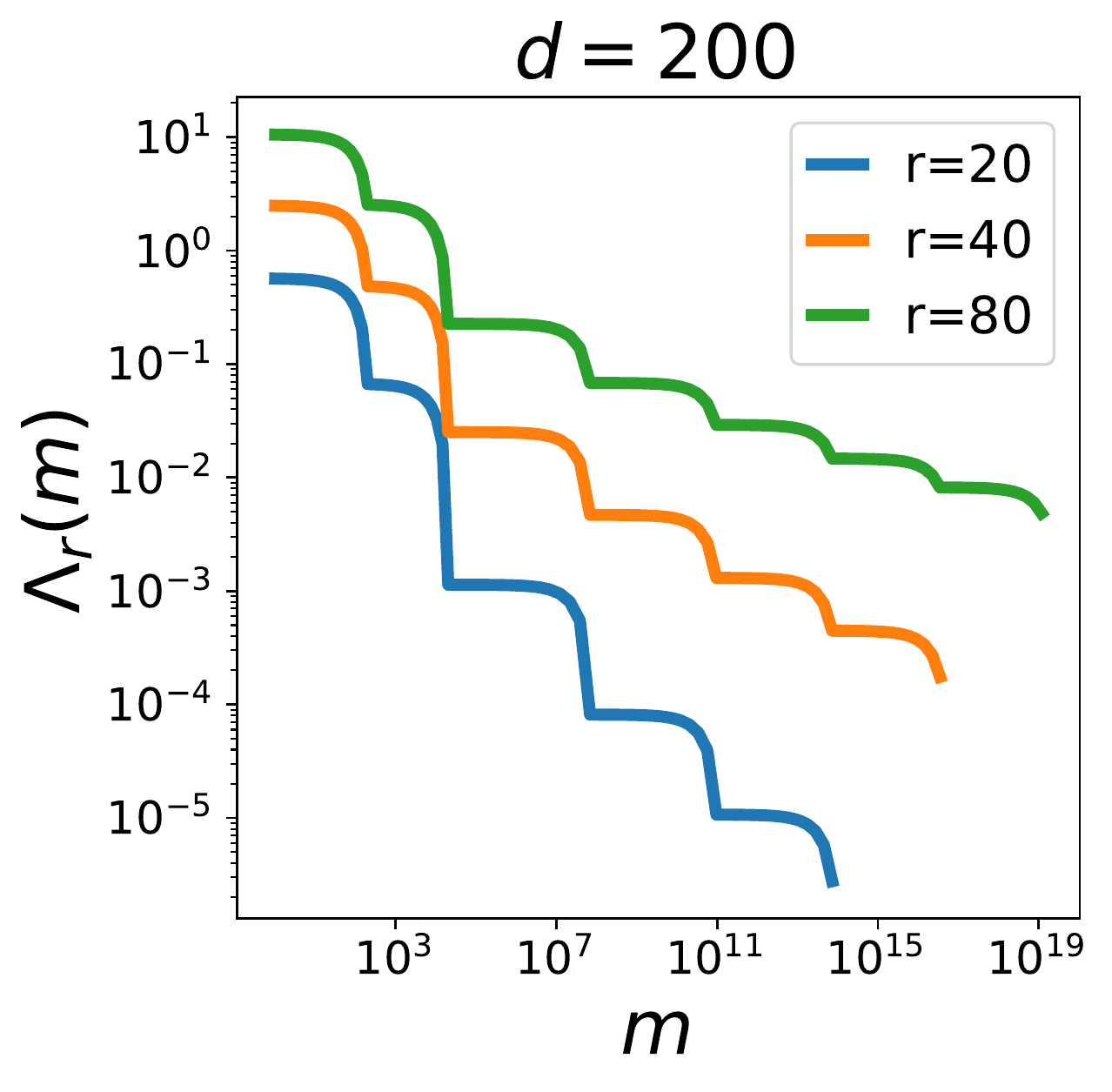}\hspace*{-1mm}
    \includegraphics[width=0.49\textwidth]{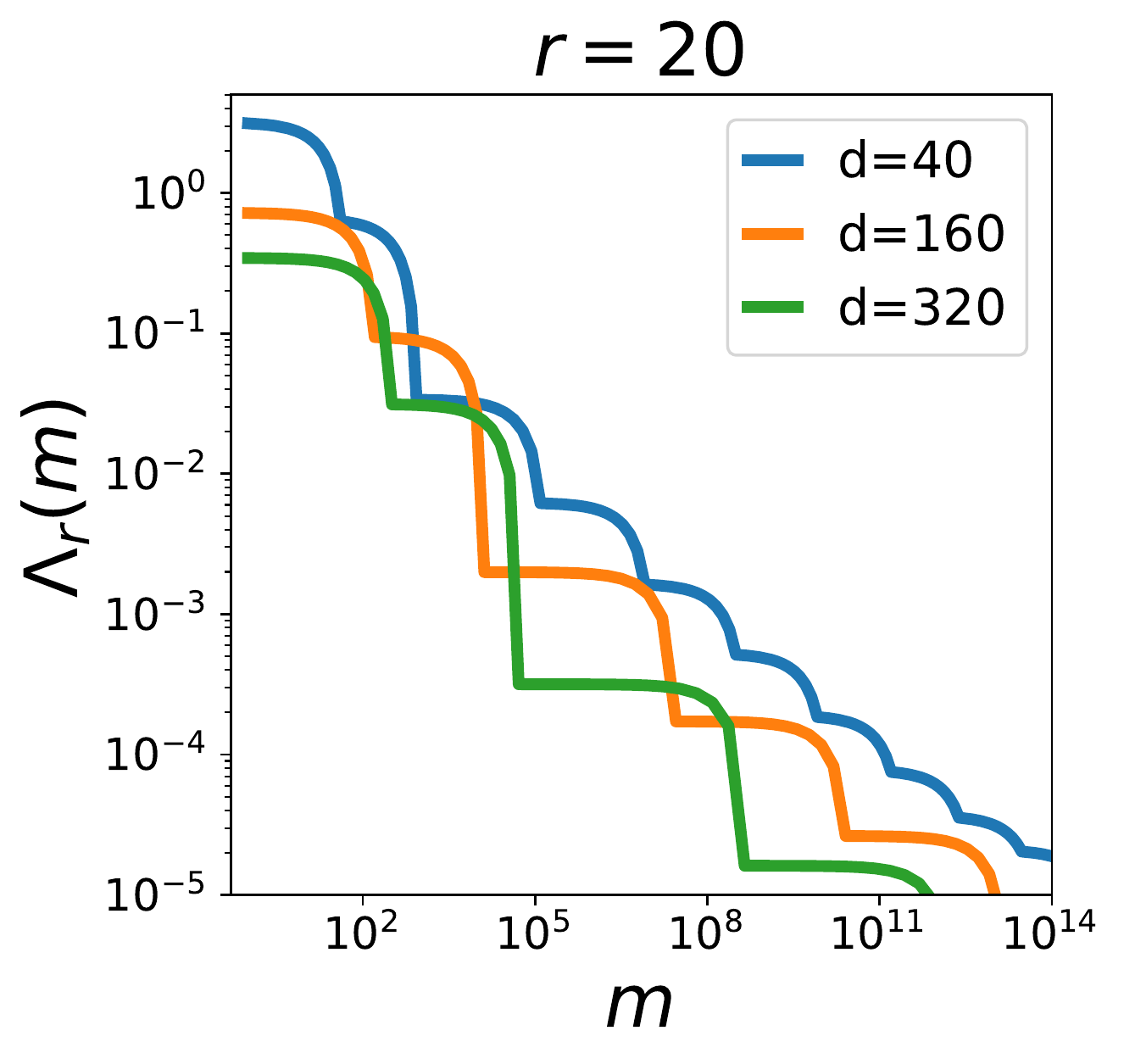}
    \vspace*{-3mm}
    \caption{\small Softplus.}
\end{subfigure}
\vspace*{-2mm}
\caption{How the decay of $\Lambda_r(m)$, thereby the linear approximability, changes with $r$ for fixed $d$ (left), and changes with $d$ for fixed $r$ (right). Two sigmoid-like and two ReLU-like smooth activation functions are examined.
}
\label{fig: r-influence}
\end{figure}

Note that a  result similar to Proposition \ref{pro: 2lnn-smooth-ub-2}  has be provided in \cite{livni2014computational} for the sigmoid activation function. Ours differs from \cite{livni2014computational} in two aspects. First, we show that the same observation holds for more general smooth activation functions by clear numerical evidences. In particular, Proposition \ref{proposition: 2lnn-complete-characterization} combined with Lemma \ref{lemma: eigen} provides us an easy way to numerically compute upper bounds. Second, we can  establish a hardness result given below. These improvements are benefiting from our spectral-based analysis.

Next we then show that when $r$ is polynomially large with respect to $d$, even for smooth activation function, $\omega_m(\mathcal{N}^r)$ exhibits the CoD.

\begin{assumption}\label{assum: smooth}
Suppose that the activation function $\sigma$ satisfies either $|\sigma(rt)-\step(t)|\lesssim  (1+r|t|)^{-\beta}$ or $|\sigma(rt)-r\relu(t)|\leq (1+r|t|)^{-\beta}$ for any $t$ and some constant $\beta>0$.
\end{assumption}
This assumption is satisfied by all the commonly-used activation functions.

\begin{theorem}\label{thm: 2lnn-arctan-lower-bound}
Suppose  that $\sigma$ satisfies Assumption \ref{assum: smooth} and $m\leq 2^d$. Then, there exists constants $C_1(\beta), C_2>0$  such that if  $r\geq  d^{C_1(\beta)}$, we have 
$
    \omega_{m}(\cN^r)\gtrsim d^{-C_2}.
$
\end{theorem}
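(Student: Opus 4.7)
The plan is to combine the general spectral lower bound of Proposition \ref{thm: 1} with a perturbation argument that replaces the smooth activation with its nonsmooth limit (the step or ReLU function), for which we have already proven a CoD-type lower bound in Proposition \ref{pro: non-smooth-1}. Specifically, I would choose $\pi$ to be the pushforward of $\tau_{d-1}$ under $v\mapsto (rv,0)\in\Omega$; this parameter lies in the admissible set because $\|rv\|+0=r$, and the associated kernel is exactly $k^{(r,0)}$. Hence Proposition \ref{thm: 1} gives $\omega_m(\cN^r)\ge \Lambda^{(r,0)}(m)$, and it suffices to lower bound $\Lambda^{(r,0)}(m)$ by a polynomial in $1/d$.

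Next, I would compare the single-neuron family $f_v(x)=\sigma(rv^\top x)$ to a reference family $g_v$ defined as $\step(v^\top x)$ in the first case of Assumption \ref{assum: smooth} and $r\,\relu(v^\top x)=\relu(rv^\top x)$ in the second. For any fixed linear subspace with orthogonal projection $P$ and any $v$, the elementary inequality $\|(I-P)(g_v+\epsilon_v)\|^2\ge \tfrac{1}{2}\|(I-P)g_v\|^2-\|\epsilon_v\|^2$ (obtained by expanding and applying $2ab\le a^2/2+2b^2$) together with Lemma \ref{pro: gen-feature} yields, after averaging over $v\sim\tau_{d-1}$ and taking an infimum over $\{\phi_j\}$,
\begin{equation*}
\Lambda^{(r,0)}(m)\ \ge\ \tfrac{1}{2}\Lambda_g(m)\ -\ \EE_{v\sim\tau_{d-1}}\|\sigma(rv^\top\cdot)-g_v\|_{\tau_{d-1}}^{2},
\end{equation*}
where $\Lambda_g(m)$ is the trace decay of the kernel $\EE_v[g_v(x)g_v(x')]$.

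The key technical step is then to estimate the error term. By rotational invariance, for $v\sim\tau_{d-1}$ and $x\sim\tau_{d-1}$ independently, $t=v^\top x$ has density $p_d(t)=(1-t^2)^{(d-3)/2}/B(1/2,(d-1)/2)$, whose maximum is $\sim\sqrt{d}$. Applying Assumption \ref{assum: smooth} and the change of variable $s=rt$ gives
\begin{equation*}
\EE_{v}\|\sigma(rv^\top\cdot)-g_v\|^{2}_{\tau_{d-1}}\ \lesssim\ \int_{-1}^{1}(1+r|t|)^{-2\beta}p_d(t)\,\dd t\ \lesssim\ \frac{\sqrt{d}}{r^{\min(1,2\beta)}},
\end{equation*}
up to a logarithmic factor in the borderline case $2\beta=1$. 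The main obstacle will be making sure the constants in this tail integral are tracked cleanly and uniformly across the regimes $2\beta<1$, $2\beta=1$, and $2\beta>1$.

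Finally, I would plug in Proposition \ref{pro: non-smooth-1}: in the step case ($\alpha=0$) it gives $\Lambda_g(m)\ge (1/d)\,m^{-1/(d-1)}\ge 1/(4d)$ for $m\le 2^d$; in the ReLU case ($\alpha=1$), scaling by $r^2$ gives $\Lambda_g(m)\ge r^{2}C(1/d,1)/16$, which only helps. Choosing $C_1(\beta)$ so that $r\ge d^{C_1(\beta)}$ forces $\sqrt{d}/r^{\min(1,2\beta)}\le \tfrac{1}{4}\Lambda_g(m)$---concretely, any $C_1(\beta)>3/(2\min(1,2\beta))$ suffices in the step case, with a strictly weaker bound in the ReLU case---we obtain $\Lambda^{(r,0)}(m)\ge \tfrac{1}{4}\Lambda_g(m)\gtrsim d^{-C_2}$ for an absolute constant $C_2$ (one can take $C_2=1$ from the step case), completing the proof.
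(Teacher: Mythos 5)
Your proposal is correct and follows essentially the same route as the paper's proof: reduce $\omega_m(\cN^r)$ to the averaged single-neuron approximation error at scale $r$ (via Lemma \ref{pro: gen-feature} and Proposition \ref{thm: 1}), compare $\sigma(rv^\top\cdot)$ with its $\step$/$\relu$ limit under Assumption \ref{assum: smooth}, and invoke the nonsmooth CoD lower bound of Proposition \ref{pro: non-smooth-1}/Theorem \ref{thm: gen-1}. The only cosmetic difference is the perturbation estimate, which you obtain by integrating $(1+r|t|)^{-2\beta}$ directly against the bounded density (yielding $\sqrt{d}\,r^{-\min(1,2\beta)}$ up to a log at $2\beta=1$) instead of the paper's split at $\delta$ and optimization (yielding $(d/r^2)^{\beta/(2(1+2\beta))}$); this only changes the admissible exponent $C_1(\beta)$, not the conclusion.
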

\begin{proof}
We only present the proof for the sigmoid-like activation functions. The proof for ReLU-like ones  is similar can be found in Appendix \ref{sec: missing-proof-thm3}.
For any $v\in\SS^{d-1}$,
\begin{align*}
    \|\sigma(rv^T\cdot)-&\step(v^T\cdot)\|_{\tau_{d-1}}^2 = \frac{1}{B(\frac{1}{2}, \frac{d-1}{2})}\int_{-1}^1  (\sigma(rt)-\step(t))^2 (1-t^2)^{(d-3)/2}\dd t,
\end{align*}
where we use the fact that the density function of $v^Tx$ is $p(t)=\frac{(1-t^2)^{\frac{d-3}{2}}}{B(\frac 1 2,\frac{d-1}{2})}$. Then, we have 

\begin{align*}
    \|\sigma(rv^T\cdot)-&\step(v^T\cdot)\|_{\tau_{d-1}}^2 \leq \frac{2}{B(\frac{1}{2}, \frac{d-1}{2})}\left(\int_0^{\delta}(1-t^2)^{(d-3)/2}\dd t + \int_{\delta}^1  \frac{1}{(1+rt)^{2\beta}}(1-t^2)^{(d-3)/2}\dd t\right)\\
    &\lesssim d^{1/2}\delta +\left(\frac{1}{r \delta}\right)^{2\beta},
\end{align*}
Let $h(t)=\sqrt{\frac{d}{r^2}} t + t^{-2\beta}$. Then, $h'(t)=\sqrt{\frac{d}{r^2}}  - 2\beta t^{-2\beta-1}=0$ leads to $\bar{t}=(2\beta\sqrt{r^2/d})^{1/(1+2\beta)}$. Hence,
\[
\|\sigma(rv^T\cdot)-\step(v^T\cdot)\|_{\tau_{d-1}}^2\lesssim \inf_{\delta\in [0,1]}\Big(d^{1/2}\delta +\big(\frac{1}{r \delta}\big)^{2\beta}\Big) = C(\beta) \left(\frac{d}{r^2}\right)^{\frac{\beta}{2(1+2\beta)}}.
\]
By the triangle inequality,
\begin{align*}
\notag \|\sigma(rv^T\cdot)-\sum_{j=1}^m c_j \phi_j\|^2_{\tau_{d-1}}&\gtrsim \|\step(v^T\cdot)-\sum_{j=1}^m c_j \phi_j\|^2_{\tau_{d-1}} - \|\step(v^T\cdot)-\sigma(rv^T\cdot)\|^2_{\tau_{d-1}}\\
&\gtrsim \|\step(v^T\cdot)-\sum_{j=1}^m c_j \phi_j\|^2_{\tau_{d-1}} - C(\beta) \left(\frac{d}{r^2}\right)^{\frac{\beta}{2(1+2\beta)}}.
\end{align*}
Using Theorem \ref{thm: gen-1}, there exist a constant $C_2>0$ such that 
\begin{align*}
\notag \EE_{v\sim\tau_{d-1}}\inf_{c_1,\dots,c_m}\|\sigma(rv^T\cdot)-\sum_{j=1}^m c_j \phi_j\|^2_{\tau_{d-1}}
&\gtrsim \frac{1}{d^{C_2}m^{1/(d-1)}} - C(\beta) \left(\frac{d}{r^2}\right)^{\frac{\beta}{2(1+2\beta)}}.
\end{align*}
Let $C(\beta) \left(\frac{d}{r^2}\right)^{\frac{\beta}{2(1+2\beta)}}\lesssim d^{-C_2}$. This leads to $r\geq C(\beta)^{\frac{(1+2\beta)}{\beta}} d^{\frac{1}{2}+\frac{C_2(1+2\beta)}{\beta}}$. Hence,
there exist $C_1(\beta)>0$ such that if $m\leq 2^d$ and $r\geq d^{C_1(\beta)}$, we must have 
$$
\EE_{v\sim\tau_{d-1}}\inf_{c_1,\dots,c_m}\|\sigma(rv^T\cdot)-\sum_{j=1}^m c_j \phi_j\|^2_{\tau_{d-1}} \gtrsim \frac{1}{d^{C_2}}.
$$ 
\end{proof}

In particular, for the arctangent activation function, we have the following refined result, whose proof is deferred to Appendix \ref{sec: proof-lower-bound-actan}.
\begin{theorem}\label{thm: low-bound-actan}
Suppose that $\sigma(t)=\arctan(t)$ and $r=d^\alpha$. If $\alpha>\frac{1}{2}$, 
there exists constants $C_1, C_2(\alpha)>0$ such that if $m\leq d^{-C_1}2^{C_2(\alpha)d^{2\alpha-1}}$,
$\omega_m(\cN^r)\gtrsim d^{-3}$.
\end{theorem}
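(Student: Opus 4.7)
The plan is to invoke the spectral lower bound of Theorem~\ref{proposition: 2lnn-complete-characterization}, namely $\omega_m(\cN^r) \ge \Lambda_r(m) \ge \Lambda^{(r,0)}(m)$, which reduces the claim to a tail-eigenvalue bound for the dot-product kernel $k^{(r,0)}$ associated with $\sigma(t)=\arctan(t)$. Since arctan is odd, only odd-degree spherical harmonic modes contribute, and each mode of degree $k$ carries multiplicity $N(d,k)$. Writing $\Lambda^{(r,0)}(m) \ge \sum_{j>m} \lambda_j \ge \lambda_{M}$ whenever $m < M$, it suffices to produce an integer $M \gtrsim d^{-C_1}2^{C_2(\alpha) d^{2\alpha-1}}$ such that the $M$-th eigenvalue (with multiplicity) is still $\gtrsim d^{-3}$, or equivalently, a collection of eigenvalues of total mass $\gtrsim d^{-3}$ whose indices cover everything beyond the top $m$.

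Next, I would reuse the analytic machinery developed for the upper bound in Proposition~\ref{pro: 2lnn-smooth-ub-2}. Lemma~\ref{lemma: eigen} combined with a Fourier/Parseval argument yields a closed-form expression for $\mu_k^{(r,0)} = (\eta_k^{(r,0)})^2$ in terms of a Gaussian hypergeometric function ${}_2F_1$ whose parameters depend on $k$, $d$, and $r$. Using the Euler-type integral representation of ${}_2F_1$, I would extract a two-sided asymptotic estimate and isolate a ``plateau'' range of odd indices $k \in [1, K^*]$ on which each eigenvalue satisfies $\mu_k^{(r,0)} \gtrsim d^{-3}/N(d,k)$, so that summing over the plateau gives a tail mass $\gtrsim d^{-3}$. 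The critical scale $K^* \asymp d^{2\alpha-1}$ should emerge as the threshold beyond which the ${}_2F_1$ factor begins to decay substantially, driven by the specific pole structure of $\arctan'(t)=1/(1+t^2)$. Summing multiplicities $M := \sum_{k \text{ odd},\, k \le K^*} N(d,k)$ via $N(d,k)=\tfrac{2k+d-2}{k}\binom{k+d-3}{d-2}$ together with a Stirling bound then yields $M \gtrsim d^{-C_1}2^{C_2(\alpha) d^{2\alpha-1}}$, giving the claimed threshold for $m$.

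The main obstacle is the sharp two-sided asymptotic analysis of the ${}_2F_1$ representation of $\mu_k^{(r,0)}$: tracking the joint dependence on $r=d^\alpha$, $k$, and $d$ with enough precision to pin down the transition scale $K^* \asymp d^{2\alpha-1}$ and to certify a floor of exactly $d^{-3}$ (rather than an unspecified polynomial). This is delicate because, unlike the ReLU$^\alpha$ case of Section~\ref{sec: nonsmooth-single-neuron}, the eigenvalue decay for smooth activations is not governed purely by an order-of-singularity count. A naive triangle-inequality reduction to the Heaviside step function, as in the proof of Theorem~\ref{thm: 2lnn-arctan-lower-bound}, would only produce the weaker threshold $r \gtrsim d^{C_1(\beta)}$ with $C_1(\beta) > 1/2$, because the approximation error $\|\arctan(r v^\top \cdot)-(\pi/2)\step(v^\top\cdot)\|_{\tau_{d-1}}^2$ is itself only $\lesssim \sqrt{d}/r$. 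To push the threshold down to $\alpha > 1/2$, one must bypass the step-function comparison and work directly with the spectral data of $k^{(r,0)}$, leveraging the exact hypergeometric structure peculiar to $\arctan$.
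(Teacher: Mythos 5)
Your plan coincides with the paper's actual proof strategy: lower bound $\omega_m(\cN^r)$ by $\Lambda_r(m)\ge\Lambda^{(r,0)}(m)$ via Theorem~\ref{proposition: 2lnn-complete-characterization}, use the arctan-specific spectral formula (Lemma~\ref{pro: arctan-analytic-expression}, where $b=0$ gives equality, so $\mu_{2k-1}=Q_{d,2k-1}^2 I_k^2$ with $I_k$ an Euler-type integral), locate a plateau of odd degrees up to $K^*\asymp d^{2\alpha-1}$ whose per-mode mass $N(d,2k-1)\mu_{2k-1}$ is $\gtrsim d^{-3}$, and convert $K^*$ into an exponentially large number of eigenvalues by summing multiplicities; you also correctly explain why the step-function comparison of Theorem~\ref{thm: 2lnn-arctan-lower-bound} cannot reach $\alpha>1/2$. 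The problem is that the proposal stops exactly where the theorem's content begins: the plateau estimate is the whole proof, and you label it ``the main obstacle'' without giving an argument. Asserting that a two-sided asymptotic of the hypergeometric factor ``should'' reveal the transition at $d^{2\alpha-1}$ with a floor of exactly $d^{-3}$ is a conjecture, not a step; as written, nothing certifies either the scale $K^*$ or the polynomial floor, so the proof is incomplete. A second, smaller error: your first suggested reduction, producing a single index $M\gtrsim d^{-C_1}2^{C_2(\alpha)d^{2\alpha-1}}$ with $\lambda_M\gtrsim d^{-3}$, is impossible, since the trace of $k^{(r,0)}$ is at most $\|\arctan\|_\infty^2=O(1)$ while $M\lambda_M$ would be exponentially large; only your alternative formulation (collective tail mass) can work, and making it rigorous also uses that $\mu_k$ is non-increasing in (odd) degree so that the sorted tail beyond the first $\sim\Gamma(2k+d-3)/(\Gamma(d)\Gamma(2k-2))$ eigenvalues consists of the modes of degree $\ge 2k-1$ — a point you should state.

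For comparison, the missing estimate is resolved in the paper by an elementary localization rather than delicate ${}_2F_1$ asymptotics: writing $I_k=\int_0^1\bigl(\tfrac{d^{2\alpha}}{1+d^{2\alpha}t}\bigr)^{\frac{2k-1}{2}}t^{k-1}(1-t)^{\frac{2k+d-4}{2}}\,\rmd t$ and restricting to $t\in[\tfrac1{2d},\tfrac1d]$, one uses $\tfrac{d^{2\alpha}t}{1+d^{2\alpha}t}\ge\tfrac{d^{2\alpha-1}}{2+d^{2\alpha-1}}$, $t^{-1/2}\ge d^{1/2}$ and $(1-t)^{(2k+d-4)/2}\gtrsim1$ for $k\lesssim d$ to get $I_k\gtrsim d^{-3/2}\bigl(\tfrac{d^{2\alpha-1}}{2+d^{2\alpha-1}}\bigr)^{2k}$; combined with the Stirling bound $N(d,2k-1)Q_{d,2k-1}^2\gtrsim1$, each mode has mass $\gtrsim d^{-3}\bigl(1+2d^{1-2\alpha}\bigr)^{-4k}$, which is $\gtrsim d^{-3}$ uniformly for $k\lesssim\tfrac12 d^{2\alpha-1}$; the case $\alpha>1$ is reduced to $\alpha\in(\tfrac12,1]$ by monotonicity of $\Lambda$ in $\alpha$ (which also legitimizes $k\lesssim d$), and the multiplicity count gives the threshold $d^{-C_1}2^{C_2(\alpha)d^{2\alpha-1}}$. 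So your architecture is right and your identification of the critical scale is correct, but without this (or an equivalent) quantitative lower bound on the eigenvalues the proposal does not yet prove the theorem.
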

Let $r=d^\alpha$. Theorem \ref{thm: low-bound-actan} improves Theorem \ref{thm: 2lnn-arctan-lower-bound} by showing that $\alpha>1/2$ is sufficient to establish the CoD-type lower bound. We conjecture that the same results also hold for more general activation functions. We leave this to future work. These theorems imply that $\omega_m(\cN^r)$ also exhibits the CoD for sigmoid-like and ReLU-like smooth activations as long as the norms of inner-layer weights are polynomial in $d$.  

This result is related to \cite[Theorem 5]{livni2014computational}, which shows that the time complexity of learning $\cN^{\poly(d)}$ is exponential in $d$.  However, \cite[Theorem 5]{livni2014computational}  relies on  cryptographic assumptions. These assumptions mean that  some standard hard problems cannot be learned in polynomial time; otherwise modern cryptosystems can be broken in polynomial time. The theoretical justification of these assumptions remains an open problem, although they are believed to be true.  
By contrast, our result is unconditional and does not rely on any hardness assumption. Note that two results are generally not comparable, since ours is for the approximation complexity whereas \cite[Theorem 5]{livni2014computational} is for training complexity.

Lastly, we mention that  some previous studies of two-layer neural networks constrain $r$ to be finite (see, e.g., \citealt{chen2020dynamical}). Our result suggests that one should be careful about the value of $r$, otherwise the result may be not able to distinguish neural networks from linear methods.

\section{Conclusion}
In this paper, we provide a systematic study of the separation between two-layer neural networks and linear methods in terms of approximation functions in high dimension. 
To this end, we develop a spectral-based approach, which reduces the problem to computing the eigenvalues of an associated kernel. Our approach allows obtaining upper bounds, lower bounds, and identifying explicit hard examples simultaneously.  We extend and improve the previous separation results for the sigmoidal and ReLU$^\alpha$ activation functions to general nonsmooth activation functions. We also find that for smooth activation functions, whether the separation exists or not crucially depends on the inner-layer weight norms.

Technically speaking, our spectral-based approach provides a way to accurately compute the Kolmogorov width of two-layer neural networks. This approach should be also applicable to analyze other properties that are related to the Kolmogorov width, e.g., the metric entropy. We leave this to future work.

\appendix

\section{Missing proofs of Section \ref{sec: single-neuron} }\label{eqn: single-neuron-proof}
Here, we present the missing lengthy proofs of Section \ref{sec: single-neuron}.

\subsection{Proof of Proposition \ref{pro: smooth-pointwise}}
\label{sec: single-neuron-smooth-pointwise}
To prove Proposition \ref{pro: smooth-pointwise}, we first need the following lemma.
\begin{lemma}\label{lemma: point-wise-nonsmooth}
For any activation function $\sigma$ and any $v \in \bS^{d-1}$,
\begin{equation*}
    \inf_{\{c_{i,j}\}_{1\le i \le k, 1\le j \le N(d,i)}}\|\sigma_v - \sum_{i = 0}^k \sum_{j=1}^{N(d,i)}c_{i,j}Y_{i,j}\|_{\tau_{d-1}}^2 = \sum_{i = k+1}^{\infty}N(d,i)\mu_i.
\end{equation*}
\end{lemma}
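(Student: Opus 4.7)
The strategy is to expand $\sigma_v$ explicitly in the orthonormal basis $\{Y_{i,j}\}$ of $L^2(\tau_{d-1})$ and then read off the best-approximation error from Parseval's identity. Since the $\{Y_{i,j}\}$ form a complete orthonormal basis, the infimum on the left is the truncation error of this expansion, so everything reduces to computing the Fourier coefficients of the zonal function $\sigma_v$ and summing the squared tails.

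First I would compute, for each $(i,j)$, the coefficient $\hat c_{i,j}(v):=\langle \sigma_v, Y_{i,j}\rangle_{\tau_{d-1}}=\int_{\SS^{d-1}}\sigma(v^\top x)Y_{i,j}(x)\,d\tau_{d-1}(x)$. This is exactly the integral in the Hecke--Funk formula \eqref{eqn: hecke-funk}, which yields $\hat c_{i,j}(v)=\eta_i\, Y_{i,j}(v)$, where $\eta_i$ is the quantity defined in Lemma \ref{lemma: eigen}. Consequently,
\begin{equation*}
\sigma_v=\sum_{i=0}^{\infty}\sum_{j=1}^{N(d,i)}\eta_i\,Y_{i,j}(v)\,Y_{i,j}\quad\text{in }L^2(\tau_{d-1}).
\end{equation*}

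Next, by Parseval's identity and the orthonormality of $\{Y_{i,j}\}$, the best $L^2(\tau_{d-1})$ approximation of $\sigma_v$ by a linear combination of the harmonics of degree $\le k$ is obtained by simply truncating this expansion at degree $k$, so
\begin{equation*}
\inf_{\{c_{i,j}\}}\|\sigma_v-\sum_{i=0}^{k}\sum_{j=1}^{N(d,i)}c_{i,j}Y_{i,j}\|_{\tau_{d-1}}^2=\sum_{i=k+1}^{\infty}\sum_{j=1}^{N(d,i)}\eta_i^2\,Y_{i,j}(v)^2.
\end{equation*}

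Finally, I would apply the addition formula \eqref{eqn: sum_of_Y} with $x=y=v$. Using $P_i(1)=1$, this gives $\sum_{j=1}^{N(d,i)}Y_{i,j}(v)^2=N(d,i)$ for every $v\in\SS^{d-1}$, which is crucially independent of $v$. Substituting this and using $\mu_i=\eta_i^2$ from Lemma \ref{lemma: eigen} yields the claimed identity $\sum_{i=k+1}^{\infty}N(d,i)\mu_i$. The only mild subtlety is that the identity holds pointwise in $v$ because of the rotational invariance encoded in the addition formula; everything else is a direct bookkeeping computation. No serious obstacle is expected.
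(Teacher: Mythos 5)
Your proof is correct, but it follows a different route from the paper's. You compute the Fourier coefficients of $\sigma_v$ explicitly: by the Hecke--Funk formula, $\langle \sigma_v, Y_{i,j}\rangle_{\tau_{d-1}} = \eta_i Y_{i,j}(v)$ (the same identity that underlies the proof of Lemma \ref{lemma: eigen}, so invoking $\mu_i=\eta_i^2$ afterwards is consistent, not circular), then apply Parseval and the addition formula \eqref{eqn: sum_of_Y} at $x=y=v$ with $P_i(1)=1$ to get $\sum_j Y_{i,j}(v)^2=N(d,i)$. The paper never computes individual coefficients: it writes the truncation error as $\|\sigma_v\|^2-\sum_{i\le k}\sum_j\langle Y_{i,j},\sigma_v\rangle^2$, observes by rotational invariance of $\tau_{d-1}$ that this is constant in $v$, averages over $v\sim\tau_{d-1}$ to turn everything into integrals against the kernel $\kappa$, and then reads off the answer from the spectral decomposition \eqref{eqn: def_mu}. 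Your version is more direct and makes the independence of $v$ transparent through the pointwise identity $\sum_j Y_{i,j}(v)^2=N(d,i)$; the paper's averaging argument sidesteps explicit coefficient formulas and mirrors the kernel/PCA viewpoint of Lemma \ref{pro: gen-feature}, which is the structural theme of the whole section. The only point worth flagging is your use of $P_i(1)=1$: this is standard in the Atkinson--Han normalization (and is forced by the addition formula together with orthonormality of the $Y_{i,j}$), notwithstanding the typo $P_0(t)=0$ in the recursion \eqref{eq: recursive}; alternatively you could avoid it entirely by integrating $\sum_j Y_{i,j}(v)^2$ over $v$ and using rotational invariance.
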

\begin{proof}
Recalling that $\{Y_{i,j}\}_{0\le i \le k,1\le j \le N(d,i)}$ is orthonormal in $L^2(\tau_{d-1})$, we have
\begin{equation*}
     \inf_{\{c_{i,j}\}_{0\le i \le k, 1\le j \le N(d,i)}}\|\sigma_v - \sum_{i = 0}^k \sum_{j=1}^{N(d,i)}c_{i,j}Y_{i,j}\|_{\tau_{d-1}}^2 = \|\sigma_v\|_{\tau_{d-1}}^2 - \sum_{i=0}^k\sum_{j=1}^{N(d,i)}\langle Y_{i,j},\sigma_v\rangle_{\tau_{d-1}}^2.
\end{equation*}
Hence, using Eq.~\eqref{eqn: sum_of_Y}
\begin{align*}
    &\|\sigma_v\|_{\tau_{d-1}}^2 - \sum_{i=0}^k\sum_{j=1}^{N(d,i)}\langle Y_{i,j},\sigma_v\rangle_{\tau_{d-1}}^2\\ 
    =& \int_{\bS^{d-1}}|\sigma(v\transpose x)|^2\dd\tau_{d-1}(x) - \sum_{i=0}^k\int_{\bS^{d-1}}\int_{\bS^{d-1}}\sigma(v\transpose x)\sigma(v\transpose x')\sum_{j=1}^{N(d,i)}Y_{i,j}(x)Y_{i,j}(x')\dd\tau_{d-1}(x)\dd\tau_{d-1}(x') \\
    =&\int_{\bS^{d-1}}|\sigma(v\transpose x)|^2\dd\tau_{d-1}(x) - \sum_{i=0}^kN(d,i)\int_{\bS^{d-1}}\int_{\bS^{d-1}}\sigma(v\transpose x)\sigma(v\transpose x')P_i(x\transpose x')\dd\tau_{d-1}(x)\dd\tau_{d-1}(x').
\end{align*}
By the rotational invariance of $\tau_{d-1}$, we know that the above equation is constant for all $v \in \bS^{d-1}$. Hence,
\begin{align*}
     &\inf_{\{c_{i,j}\}_{0\le i \le k, 1\le j \le N(d,i)}}\|\sigma_v - \sum_{i = 0}^k \sum_{j=1}^{N(d,i)}c_{i,j}Y_{i,j}\|_{\tau_{d-1}}^2 \\=&\int_{\bS^{d-1}}[\int_{\bS^{d-1}}|\sigma(v\transpose x)|^2\dd\tau_{d-1}(v)]\dd\tau_{d-1}(x) \\
     -&\sum_{i=0}^kN(d,i)\int_{\bS^{d-1}}\int_{\bS^{d-1}}[\int_{\bS^{d-1}}\sigma(v\transpose x)\sigma(v\transpose x')\dd\tau_{d-1}(v)]P_i(x\transpose x')\dd\tau_{d-1}(x)\dd\tau_{d-1}(x')\\
     =& \int_{\bS^{d-1}}\kappa(x\transpose x)\dd\tau_{d-1}(x) - \sum_{i=0}^k N(d,i)\int_{\bS^{d-1}}\int_{\bS^{d-1}}\kappa(x\transpose x')P_i(x\transpose x')\dd\tau_{d-1}(x)\dd\tau_{d-1}(x').
\end{align*}
Combining the last equation and Eq.~\eqref{eqn: sum_of_Y} and Eq.~\eqref{eqn: def_mu}, we obtain that
\begin{align*}
     &\inf_{\{c_{i,j}\}_{0\le i \le k, 1\le j \le N(d,i)}}\|\sigma_v - \sum_{i = 0}^k \sum_{j=1}^{N(d,i)}c_{i,j}Y_{i,j}\|_{\tau_{d-1}}^2 \\
     =&\sum_{i=0}^{\infty}\sum_{j=1}^{N(d,i)}\mu_i - \sum_{i=0}^{k}\sum_{j=1}^{N(d,i)}\mu_i[\int|Y_{i,j}(x)|^2\dd \tau_{d-1}(x)]^2 =\sum_{i=k+1}^{\infty}N(d,i)\mu_i.
\end{align*}
\end{proof}

\paragraph*{Proof of Proposition \ref{pro: smooth-pointwise}}
Let $m_k=\sum_{i=0}^k N(d,i)$. By Lemma \ref{lemma: point-wise-nonsmooth} and Proposition \ref{pro: smooth-lower-bound}, we have 
\begin{equation*}
    \inf_{c_1,\dots,c_{m_k}}\|\sigma_v -  \sum_{j=1}^{m_k}c_j \phi_j\|_{\tau_{d-1}}^2 = \Lambda(m_k) \le L(m_k).
\end{equation*}
For any $m$, assume $m\in [m_{k-1}+1, m_{k}]$. Then, 
\begin{align}\label{eqn: mk-ratio}
   \inf_{c_1,\dots,c_{m}}\|\sigma_v -  \sum_{j=1}^{m}c_j \phi_j\|_{\tau_{d-1}}^2 &\leq \inf_{c_1,\dots,c_{m_{k-1}}}\|\sigma_v -  \sum_{j=1}^{m_{k-1}}c_j \phi_j\|_{\tau_{d-1}}^2 \le L(m_{k-1}) \le \frac{L(m_{k-1})}{L(m_k)}L(m)
\end{align}
By \eqref{eqn: x2} and \eqref{eqn: N-asym}, we have
\begin{align*}
m_{k} &= \frac{\Gamma(k+d)}{\Gamma(d)\Gamma(k+1)} + \frac{\Gamma(k+d-1)}{\Gamma(d)\Gamma(k)}, 
\end{align*}
when $k \ge 1$ and $m_0 = 0$
Then,
\begin{align*}
    \frac{m_{k+1}}{m_{k}} &\le \max\{\frac{\Gamma(k+d+1)}{\Gamma(d)\Gamma(k+2)}\Big/\frac{\Gamma(k+d)}{\Gamma(d)\Gamma(k+1)},\frac{\Gamma(k+d)}{\Gamma(d)\Gamma(k+1)}\Big/\frac{\Gamma(k+d-1)}{\Gamma(d)\Gamma(k)}\} \\
    &= \max\{\frac{k+d}{k+1},\frac{k+d-1}{k}\}\le d+1,
\end{align*}
and 
\begin{equation*}
    \frac{m_1}{m_0} = d + 1.
\end{equation*}
Therefore, 
\[
    \frac{m_{k}}{m_{k-1}}\le d+1,
\]
which means that
\begin{equation*}
    \frac{L(m_{k-1})}{L(m_k)} \le \frac{L(m_{k-1})}{L((d+1)m_{k-1})}\le q(d,L).
\end{equation*}
Plugging it into \eqref{eqn: mk-ratio},   we complete the proof.

\subsection{Proof of Proposition \ref{pro: non-smooth-1}}
\label{sec: single-non-smooth}

We only give the proof when $d \ge 3$. The simple case $d=2$ can also be proven using similar argument.
According to Appendix D.2 of \cite{bach2017breaking}~\footnote{Note that the one provided in \cite{bach2017breaking} is not correct due to the miscalculation of $\omega_{d-2}/\omega_{d-1}$.},   we have 
\begin{align*}\label{eqn: non-smooth-eigenvalue} 
\eta_k = \begin{cases}
C_1(\frac 1 d,\alpha) & \text{if } k\leq \alpha, \\
0  & \text{if } k\geq \alpha+1, \text{and } k\equiv \alpha\pmod{2}\\
\frac{\Gamma(\alpha+1)}{\sqrt{2\pi}\,2^k}\frac{\Gamma(d/2)\Gamma(k-\alpha)}{\Gamma(\frac{k-\alpha+1}{2})\Gamma(\frac{k+d+\alpha}{2})} & \text{otherwise},
\end{cases}
\end{align*}
where $C_1(\frac 1 d,\alpha)$ depends on $\frac 1 d $ polynomially.
By Stirling formula, $\Gamma(t)\sim \sqrt{2\pi} t^{t-1/2}e^{-t+1}$ for any $t\geq 1$. Then, up to a constant only depending on $\alpha$, we have for $k\geq \alpha+1$ and $k\equiv (\alpha+1)\pmod{2}$, 
\begin{equation*}\label{eqn: x1}
    \mu_k \sim   d^{d-1} k^{k-\alpha-1}(k+d)^{-k-d-\alpha + 1}.
\end{equation*}
Note that 
\begin{equation*}\label{eqn: x2}
N(d,k) =  \frac{\Gamma(k+d)}{\Gamma(d)\Gamma(k+1)} - \frac{\Gamma(k+d-2)}{\Gamma(d)\Gamma(k-1)}
\end{equation*}
where up to a constant, we have 
\begin{equation}\label{eqn: N-asym}
    \frac{\Gamma(k+d)}{\Gamma(d)\Gamma(k+1)}\sim (k+d)^{k+d-\frac{1}{2}}k^{-\frac{1}{2}-k}d^{\frac{1}{2}-d}
\end{equation}

WLOG, assume that $\alpha$ is odd. 
By definition,  
\begin{align}\label{eqn: m-k-condition}
    \lambda_m=\mu_{2k},\notag \text{for } 
     m \in \Big[&\sum_{i=0}^{\alpha} N(d,i)+1 +\frac{\Gamma(2k+d-2)}{\Gamma(d)\Gamma(2k-1)}-\frac{\Gamma(\alpha + d - 1)}{\Gamma(d)\Gamma(\alpha)},\notag\\
     &\sum_{i=0}^{\alpha} N(d,i)+ \frac{\Gamma(2k+d)}{\Gamma(d)\Gamma(2k+1)}-\frac{\Gamma(\alpha + d - 1)}{\Gamma(d)\Gamma(\alpha)}\Big]
\end{align}
By \eqref{eqn: N-asym}, when 
\begin{equation}\label{eqn: m-condition}
    m \ge 1 + \sum_{i=0}^{\alpha}N(d,i) = C(\frac{1}{d},\alpha),
\end{equation}
and there exists $k$ such that \eqref{eqn: m-k-condition} is satisfied, 
we have
\[
m \gtrsim (2k+d)^{2k+d - \frac{5}{2}}(2k)^{\frac{3}{2}-2k}d^{\frac{1}{2}-d},
\]
and, 
\[
    \lambda_m = \mu_{2k} \sim (2k+d)^{-2k-d-\alpha+ 1}(2k)^{2k-\alpha-1}d^d,
\]
which means
\begin{equation}\label{eqn: eigenvalues}
    \lambda_m m^{\frac{d+2\alpha}{d-1}} \ge C(\frac{1}{d},\alpha) \Big(\frac{2k+d}{2k}\Big)^{\frac{2\alpha+1}{d-1}(2k+d-\frac{5}{2}) - \alpha - \frac{3}{2}}\ge C(\frac{1}{d},\alpha).
\end{equation}
Noticing that when \eqref{eqn: m-condition} is not satisfied, the above inequality still holds, we have 
\[
\Lambda(m)=\sum_{j=m+1}^\infty \lambda_j \geq C(\frac{1}{d}, \alpha) m^{-\frac{2\alpha+1}{d-1}}.
\]
In particular, when $\alpha=0$, $C(1/d,0)=1/d$.

\subsection{Proof of Proposition \ref{pro: smooth-lower-bound}}
\label{sec: proof-single-neuron-smooth-trace}

First, by \eqref{eqn: x1} and \eqref{eqn: x2},
$$
\sum_{j=0}^k N(d,j)\lesssim \frac{2\Gamma(k+d)}{\Gamma(d)\Gamma(k+1)}\sim (k+d)^{k+d-1/2}k^{-1/2-k} d^{-d+1/2}.
$$
Second, plugging $B_k\lesssim k!$ into Lemma \ref{lemma: smooth-activation} and using the Stirling formula, we have  
\[
    \mu_k \lesssim  \frac{k^{2k+1}e^{-2k}}{2^{2k}}\left(\frac{(d/2)^{d/2-1/2}e^{-d/2+1}}{(k+d/2)^{k+d/2-1/2}e^{-k-d/2+1}}\right)^2\sim k^{2k+1}d^{d-1}(2k+d)^{-2k-d+1}.
\]
Let $\bar{\lambda}_m = \mu_k$ if $m \in [\sum_{j=0}^{k-1}N(d,j) +1, \sum_{j=0}^kN(d,j)]$, then
\begin{equation*}
    \Lambda(m)\le \sum_{j= m+1}^{\infty}\bar{\lambda}_j
\end{equation*}
because $\{\lambda_j\}_{j \ge 0}$ is the non-increasing rearrangement of $\{\bar{\lambda}_j\}_{j \ge 0}$.

By definition, if $\bar{\lambda}_m=\mu_k$, we must have $m\leq \sum_{j=0}^k N(d,j)$. Therefore, 
\begin{align*}
\bar{\lambda}_m m^2 &\lesssim \mu_k  (\sum_{j=0}^k N(d,j))^2 \leq d^{-d} (k+d)^{2k+2d-1} (2k+d)^{-2k-d+1}  \\
&= \left(\frac{k+d}{d}\right)^{d} \left(\frac{k+d}{2k+d}\right)^{2k+d-1}\leq \left(\frac{k+d}{d}\right)^{d} \left(\frac{k+d}{2k+d}\right)^{2k+d}.
\end{align*}
Let $k=t d$, then we have 
\begin{align}\label{eqn: lambda_m-2}
\bar{\lambda}_m m^2 \lesssim (t+1)^d \left(\frac{t+1}{2t+1}\right)^{(2t+1)d} =: h(t)^d,
\end{align}
where $h(t)=(t+1)^{2t+2}/(2t+1)^{2t+1}$. Hence, $\log h(t)=(2t+2)\log(t+1) - (2t+1)\log(2t+1)$.
\begin{align}
\frac{\dd\log h(t)}{\dd t} = 2\log(t+1) + 2 - 2 \log(2t+1) - 2 \leq 0,\, \forall\, t\geq 0.
\end{align}
Combing with $\log h(0)=0$, we have $\log h(t)\leq 0$, i.e., $h(t)\leq 1$, for any $t\geq 0$. Plugging it into \eqref{eqn: lambda_m-2} leads to $\bar{\lambda}_m m^2 \lesssim 1$. In other words, $\bar{\lambda}_m \lesssim 1/m^2$. Therefore, we have 
\[
\Lambda(m) \le \sum_{j=m+1}^\infty \bar{\lambda}_j \lesssim \frac{1}{m}.
\]

$\qed$

\section{Missing proofs of Section \ref{sec: two-layer-network}}

\subsection{Proof of Proposition \ref{pro: 2lnn-smooth-ub-2}}
\label{sec: arctan-upper-bound}

We show that for the arctangent activation function, the eigenvalues can be expressed analytically using the hypergeometric functions. 
\begin{lemma}\label{pro: arctan-analytic-expression}
Assume $\sigma(z)=\arctan(rz+b)$. Then, 
\begin{align}
|\eta_k|\le Q_{d,k}  \int_{0}^1 \left(\frac{r^2}{1+r^2t}\right)^{\frac{k}{2}} t^{\frac{k-1}{2}}(1-t)^{\frac{k+d-3}{2}} \dd t,
\end{align}
where $Q_{d,k}=\frac{\Gamma(k)\Gamma(\frac{d}{2})}{2^k\Gamma(\frac{k+1}{2})\Gamma(\frac{k+d-1}{2})}$. The equality is reached when $b=0$.
\end{lemma}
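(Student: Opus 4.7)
The proof should start from the integral representation established in the eigenvalue lemma, namely
$$\eta_k = \frac{\Gamma(d/2)}{2^k\sqrt{\pi}\,\Gamma(k+(d-1)/2)}\int_{-1}^1 \sigma^{(k)}(t)\,(1-t^2)^{\nu}\,dt,\qquad \nu = k+(d-3)/2,$$
and evaluate the integral in the Fourier domain. For $\sigma(z)=\arctan(rz+b)$, I first note that $\sigma'(z) = r/(1+(rz+b)^2)$ and, using the classical identity $\mathcal{F}[1/(1+x^2)](\xi)=\pi e^{-|\xi|}$ together with scaling and translation, I obtain $\hat{\sigma}'(\xi)=\pi e^{-|\xi|/r}e^{ib\xi/r}$ and hence $\hat{\sigma}^{(k)}(\xi)=\pi(i\xi)^{k-1}e^{-|\xi|/r}e^{ib\xi/r}$ for $k\ge 1$. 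Pairing this with the standard Fourier transform of the weight, $\widehat{(1-t^2)^\nu_+}(\xi) = \sqrt{\pi}\,\Gamma(\nu+1)(2/|\xi|)^{\nu+1/2}J_{\nu+1/2}(|\xi|)$, Parseval's theorem converts the $t$-integral into a one-dimensional integral of the form $C(\nu)\int_\mathbb{R}\xi^{k-1}|\xi|^{-\nu-1/2}e^{-|\xi|/r}e^{ib\xi/r}J_{\nu+1/2}(|\xi|)\,d\xi$.

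Next, I would use $|e^{ib\xi/r}|=1$ to peel off the $b$-dependence, reducing to the $b=0$ case (where the phase is trivially $1$, giving the "equality at $b=0$" part of the claim). After exploiting the $\xi\mapsto-\xi$ symmetry, the integral collapses to a Hankel/Laplace transform
$$\int_0^\infty \xi^{k-\nu-3/2}\,e^{-\xi/r}\,J_{\nu+1/2}(\xi)\,d\xi,$$
which I would evaluate by the classical formula
$$\int_0^\infty x^{\mu-1}e^{-ax}J_\beta(bx)\,dx=\frac{(b/2a)^{\beta}\,\Gamma(\mu+\beta)}{a^{\mu}\,\Gamma(\beta+1)}\;{}_2F_1\!\Big(\tfrac{\mu+\beta}{2},\tfrac{\mu+\beta+1}{2};\beta+1;-\tfrac{b^2}{a^2}\Big),$$
applied with $\mu=k-\nu-1/2$, $\beta=\nu+1/2$, $a=1/r$, $b=1$. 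This gives, up to explicit $\Gamma$-factors, a hypergeometric value ${}_2F_1\!\big((k+1)/2,\,k/2;\,(k+d)/2;\,-r^2\big)$.

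The final step is Euler's integral representation, with the parameters arranged so that the $t^{a-1}$ factor has exponent $(k-1)/2$:
$${}_2F_1\!\Big(\tfrac{k+1}{2},\tfrac{k}{2};\tfrac{k+d}{2};-r^2\Big)=\frac{\Gamma((k+d)/2)}{\Gamma((k+1)/2)\,\Gamma((k+d-1)/2)}\int_0^1 t^{(k-1)/2}(1-t)^{(k+d-3)/2}(1+r^2t)^{-k/2}\,dt.$$
The exponents match the target integral exactly, since $c-a-1=(k+d-3)/2$ and $-zt=r^2t$. What remains is to collect every constant that has been accumulated—the $\eta_k$ prefactor, Parseval's $1/(2\pi)$, the Bessel-transform constants, the Laplace-transform factor, and Euler's Beta-type constant—and show they collapse to $Q_{d,k}=\Gamma(k)\Gamma(d/2)/(2^k\Gamma((k+1)/2)\Gamma((k+d-1)/2))$. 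The Legendre duplication formula $\Gamma(k)=2^{k-1}\Gamma(k/2)\Gamma((k+1)/2)/\sqrt{\pi}$ is the key identity for this collapse.

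The main obstacle is twofold. First, the bookkeeping of $\Gamma$-functions through the chain Parseval $\to$ Bessel FT $\to$ Laplace transform of Bessel $\to$ Euler representation is tedious and easy to slip on; the duplication formula at the end must exactly cancel several half-integer shifts. Second, the reduction from general $b$ to $b=0$ is more delicate than a naive $|e^{ib\xi/r}|\le 1$ because the integrand is oscillating; the cleanest way to justify it is to fold the exponential into $\cos(b\xi/r)$ via the symmetry of the remaining even factors (valid for $k$ odd, where $\eta_k$ is nonzero for $b=0$) and then to use that the Euler-integral form makes the $b=0$ quantity manifestly positive and hence an upper bound on the oscillating cosine-weighted version.
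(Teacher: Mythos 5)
Your $b=0$ chain is essentially the paper's own proof: start from Eq.~\eqref{eqn: smooth-eta_k}, pass to the Fourier side by Parseval, use $\cF(\sigma^{(k)})(\xi)=\pi(i\xi)^{k-1}e^{ib\xi/r-|\xi|/r}$ together with the Bessel-type Fourier transform of $(1-t^2)^{k+(d-3)/2}$, evaluate the resulting Laplace--Bessel integral by the classical formula (DLMF 10.22.49), and finish with Euler's integral representation of ${}_2F_1$. One bookkeeping slip: with your own substitution ($\mu=1-\tfrac d2$, $\beta=k+\tfrac{d-2}{2}$) the third hypergeometric parameter is $\beta+1=k+\tfrac d2$, not $\tfrac{k+d}{2}$; as written your Euler representation is internally inconsistent, since with $c=\tfrac{k+d}{2}$ and $a=\tfrac{k+1}{2}$ one gets $c-a-1=\tfrac{d-3}{2}$ rather than the claimed $\tfrac{k+d-3}{2}$. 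With the correct $c=k+\tfrac d2$ everything matches the target integral, and the numerator $\Gamma(k+\tfrac d2)$ from Euler's formula cancels the $\Gamma(\beta+1)$ from the Laplace--Bessel formula, leaving the $\Gamma(k)$ in $Q_{d,k}$; this is where the constants collapse, rather than through the duplication formula.

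The genuine gap is the case $b\neq 0$, which is what the lemma actually asserts (the inequality is for $\arctan(rz+b)$, with equality only claimed at $b=0$). Discarding the phase via $|e^{ib\xi/r}|=1$ bounds $|\eta_k|$ by an integral involving $|J_{k+\frac{d-2}{2}}|$, which is not the $b=0$ quantity, and your proposed repair --- fold the phase into $\cos(b\xi/r)$ and argue that the manifestly positive $b=0$ value dominates the cosine-weighted integral --- is a non sequitur: the inequality $\bigl|\int_0^\infty g(\xi)\cos(c\xi)\,\mathrm{d}\xi\bigr|\le\int_0^\infty g(\xi)\,\mathrm{d}\xi$ requires $g\ge 0$, whereas here $g(\xi)=\xi^{-d/2}e^{-\xi/r}J_{k+\frac{d-2}{2}}(\xi)$ changes sign, so the cosine can suppress negative lobes and in principle push the integral above its $b=0$ value; positivity of the $b=0$ integral gives no comparison. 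The paper handles $b$ differently: it never drops the phase, but applies the Laplace--Bessel formula with the complex parameter $p=(1\mp ib)/r$, expressing $\eta_k$ through ${}_2F_1\bigl(\tfrac k2,\tfrac{k+1}{2};k+\tfrac d2;-r^2/(1\pm bi)^2\bigr)$, and only then invokes Euler's representation and bounds the modulus of the resulting (complex) integrand pointwise. Your argument needs this kind of mechanism that carries the $b$-dependence inside the special-function identities; as written, the reduction to $b=0$ would fail. (Your parity remark is apt, though: for even $k$ one has $\eta_k=0$ at $b=0$, so the equality statement is genuinely about odd $k$.)
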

\begin{proof}
Recall  \eqref{eqn: Rodrigues},
\[
\eta_k = \frac{1}{2^k} \frac{\omega_{d-2}}{\omega_{d-1}}\frac{\Gamma((d-1)/2)}{\Gamma(k+(d-1)/2)} \int_{-1}^1 \sigma^{(k)}(t)\left(1-t^{2}\right)^{k+(d-3) / 2} \dd t.
\]
By Parseval's theorem, 
\begin{equation}\label{eqn: parseval}
    \eta_k = \frac{1}{2^k} \frac{\omega_{d-2}}{\omega_{d-1}}\frac{\Gamma((d-1)/2)}{\Gamma(k+(d-1)/2)} \frac{1}{2\pi} \int \cF(\sigma^{(k)})(\xi)\cF(\nu_{d,k})(\xi)\dd\xi,
\end{equation}
where $\nu_{k,d}(t)= (1-t^2)^{k + \frac{d-3}{2}}$ and $\cF(\cdot)$ denote the Fourier transform, i.e.,  $\cF(f)(\xi)=\int f(t)e^{-i\xi t}\dd t$.

 Notice that 
\begin{equation}\label{eqn: sigma-fourier}
\sigma'(z)=\frac{r}{1+(rz+b)^2}\qquad \cF(\sigma')(\xi)=\pi e^{i\frac{b}{r}\xi-|\xi|/r}\qquad \cF(\sigma^{(k)})(\xi) = (i\xi)^{k-1} \pi e^{i\frac{b}{r}\xi-|\xi|/r}.
\end{equation}
Also, using Eqn. (10.9.4) of \cite{olver2010nist}, we know that
\begin{equation}\label{eqn: nu-fourier}
    \mathcal{F}(\nu_{k,d})(\xi) = \sqrt{2\pi}2^{k + \frac{d-3}{2} }\Gamma\left(k + \frac{d-1}{2}\right)|\xi|^{-k - \frac{d-2}{2}} J_{k + \frac{d-2}{2}}(|\xi|),
\end{equation}
where $J_{k + \frac{d-2}{2}}$ is the Bessel function of the first kind with order $k +\frac{d-2}{2}$ (see, e.g., \cite[10.2.2]{olver2010nist}).
Plugging \eqref{eqn: sigma-fourier} and \eqref{eqn: nu-fourier} into \eqref{eqn: parseval} leads to 
\begin{align*}
   \eta_k &=2^{\frac{d-2}{2}}\Gamma\left(\frac{d}{2}\right) \frac{1}{2\pi}\int_{-\infty}^\infty |\xi|^{k-1}\pi e^{-|\xi|/r+i\frac{b}{r}\xi} |\xi|^{-k-\frac{d-2}{2}} J_{k+\frac{d-2}{2}}(|\xi|)\dd\xi\\
    &= 2^{\frac{d-2}{2}}\Gamma\left(\frac{d}{2}\right)\int_0^\infty \xi^{-d/2} J_{k+\frac{d-2}{2}}(\xi) e^{-\xi/r} i^{k-1}[e^{i\frac{b}{r}\xi} +e^{-i\frac{b}{r}\xi}] \dd\xi.
\end{align*}
By Eq.~(10.22.49) of \cite{olver2010nist}, the above integration can be expressed by using the Gaussian hypergeometric function $\bF$ (see \cite[Chapter 15]{olver2010nist}):
\begin{align}\label{eqn: eta-k-2}
\eta_k =\Gamma(k)\Gamma(\frac{d}{2})\left(\frac{r}{2}\right)^{k}\frac{i^{k-1}}{2}[\bF\left(\frac{k}{2},\frac{k+1}{2};k+\frac{d}{2};-\frac{r^2}{(1+bi)^2}\right) + \bF\left(\frac{k}{2},\frac{k+1}{2};k+\frac{d}{2};-\frac{r^2}{(1-bi)^2}\right)],
\end{align}
Notice that $\bF(p,q;u;z)$ has the integral representation \cite[Section 15.6]{olver2010nist} as follows:
\[
    \bF(p,q;u;z) = \frac{1}{\Gamma(q)\Gamma(u-q)}\int_0^1 \frac{t^{q-1}(1-t)^{u-q-1}}{(1-zt)^p}\dd t.
\]
Plugging it into \eqref{eqn: eta-k-2} gives us
\begin{align*}
|\eta_k|&\le\frac{\Gamma(k)\Gamma(\frac{d}{2})}{2^k\Gamma(\frac{k+1}{2})\Gamma(\frac{k+d-1}{2})}\int_0^1\left(\frac{r^2}{1+r^2t}\right)^{k/2} t^{\frac{k-1}{2}}(1-t)^{\frac{k+d-3}{2}} \dd t.
\end{align*}
Thus, we complete the proof.
\end{proof}

Let 
\begin{equation}\label{eqn: definition-I}
I=\int_{0}^1 \left(\frac{r^2}{1+r^2t}\right)^{\frac{k}{2}} t^{\frac{k-1}{2}}(1-t)^{\frac{k+d-3}{2}} \dd t.
\end{equation}
Now, our task is to estimate how $I$ depends on $r,k$, and $d$. In order to achieve this, we consider two cases: (1) $r^2\geq \frac{2k+d-4}{k-1}$; (2) $r^2\leq \frac{2k+d-4}{k-1}$, separately.

\begin{lemma}\label{lemma: h}
Let $h_{\alpha, \beta}(t) = t^{\alpha}(1-t)^{\beta}$.  Then, $h_{\alpha,\beta}$ is increasing in $[0,\frac{\alpha}{\alpha+\beta}]$ and decreasing in $[\frac{\alpha}{\alpha+\beta},1]$.
\end{lemma}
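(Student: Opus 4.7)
The plan is straightforward differential calculus: I will compute $h'_{\alpha,\beta}$, factor it cleanly, and read off the sign to locate the unique critical point at $t^\star = \alpha/(\alpha+\beta)$.

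Concretely, I will first note that on the open interval $(0,1)$ the function $h_{\alpha,\beta}$ is smooth, and differentiate using the product rule:
\begin{equation*}
h'_{\alpha,\beta}(t) = \alpha t^{\alpha-1}(1-t)^{\beta} - \beta t^{\alpha}(1-t)^{\beta-1} = t^{\alpha-1}(1-t)^{\beta-1}\bigl[\alpha(1-t) - \beta t\bigr].
\end{equation*}
The prefactor $t^{\alpha-1}(1-t)^{\beta-1}$ is strictly positive on $(0,1)$, so the sign of $h'_{\alpha,\beta}(t)$ matches the sign of the linear factor $\alpha - (\alpha+\beta)t$. This factor is positive for $t < \alpha/(\alpha+\beta)$, zero at $t = \alpha/(\alpha+\beta)$, and negative for $t > \alpha/(\alpha+\beta)$. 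Combining with continuity of $h_{\alpha,\beta}$ on the closed interval $[0,1]$ gives monotonicity on each of the two subintervals.

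The only caveat I will mention is that the lemma is applied in Lemma \ref{pro: arctan-analytic-expression} with $\alpha = (k-1)/2$ and $\beta = (k+d-3)/2$, both nonnegative (and positive once $k \ge 2$, $d \ge 3$). For the boundary cases $\alpha = 0$ or $\beta = 0$ the statement degenerates to a pure power $(1-t)^\beta$ or $t^\alpha$, which is trivially monotone; and the interior argument above goes through as soon as $\alpha,\beta > 0$, so no obstacle arises. There is no hard step — the whole argument is one line of differentiation plus a sign analysis.
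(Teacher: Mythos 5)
Your proof is correct and is essentially identical to the paper's: both differentiate, factor the derivative as $\bigl(\alpha-(\alpha+\beta)t\bigr)t^{\alpha-1}(1-t)^{\beta-1}$, and read off the sign. The extra remark about the degenerate cases $\alpha=0$ or $\beta=0$ is fine but not needed.
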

\begin{proof}
A simple calculation gives us 
$$
h'_{\alpha,\beta}(t)=\alpha t^{\alpha-1}(1-t)^{\beta} -\beta t^\alpha (1-t)^{\beta-1} = \Big(\alpha -(\alpha+\beta) t\Big) t^{\alpha-1} (1-t)^{\beta-1}.
$$
Hence, $h'_{\alpha,\beta}(t)\geq 0$ for $t\in [0,\frac{\alpha}{\alpha+\beta}]$, and $h'_{\alpha,\beta}(t)\leq 0$ for $t\in [\frac{\alpha}{\alpha+\beta},1]$
\end{proof}

\begin{proposition}\label{pro: arctan-case-1}
Assume $r^2\geq \frac{2k+d-4}{k-1}$. Then, $I\lesssim r e^{-\frac{k+d}{2r^2}}$.
\end{proposition}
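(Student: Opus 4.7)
The plan is to split $I$ at the point $t_0 := 1/r^2$, which under the hypothesis lies below the peak $t^\ast := (k-1)/(2k+d-4)$ of the auxiliary function $h(t) := t^{(k-1)/2}(1-t)^{(k+d-3)/2}$ from Lemma \ref{lemma: h}. Indeed, $r^2 \geq (2k+d-4)/(k-1)$ is exactly $t_0 \leq t^\ast$, so Lemma \ref{lemma: h} tells us $h$ is increasing on $[0, t_0]$. Write $I = I_A + I_B$ with $I_A = \int_0^{t_0}$ and $I_B = \int_{t_0}^1$, and bound each piece by different elementary inequalities tailored to the size of $r^2 t$.

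For $I_A$, I would use monotonicity of $h$ to pull out $h(t_0) = r^{-(k-1)}(1 - 1/r^2)^{(k+d-3)/2}$, and then evaluate the remaining factor by the substitution $u = 1 + r^2 t$:
\begin{equation*}
\int_0^{1/r^2}\Big(\frac{r^2}{1+r^2 t}\Big)^{k/2} dt = r^{k-2}\int_1^2 u^{-k/2}\,du \le r^{k-2},
\end{equation*}
valid for $k \ge 2$. Together with $(1 - 1/r^2)^{(k+d-3)/2} \le e^{-(k+d-3)/(2r^2)}$, this yields $I_A \lesssim r^{-1} e^{-(k+d-3)/(2r^2)}$.

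For $I_B$, the key estimate is the trivial $1 + r^2 t \ge r^2 t$, which gives $(r^2/(1+r^2 t))^{k/2} \le t^{-k/2}$ and collapses the integrand to $t^{-1/2}(1-t)^{(k+d-3)/2}$. Bounding $(1-t)^{(k+d-3)/2} \le e^{-(k+d-3)t/2}$ and substituting $s = (k+d-3)t/2$ reduces $I_B$ to a Gamma-type tail integral, which I would control by the elementary inequality $\int_a^\infty s^{-1/2} e^{-s}\,ds \le a^{-1/2} e^{-a}$ applied at $a = (k+d-3)/(2r^2)$. This produces $I_B \lesssim \frac{r}{k+d-3}\, e^{-(k+d-3)/(2r^2)}$.

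Finally, I combine the two bounds. The hypothesis implies $r^2 \ge (2k+d-4)/(k-1) \ge 2$ for $k, d \ge 2$, so $e^{3/(2r^2)} \le e^{3/4}$ is an absolute constant and $r^{-1} \le r$. Thus $e^{-(k+d-3)/(2r^2)} = e^{3/(2r^2)} e^{-(k+d)/(2r^2)} \lesssim e^{-(k+d)/(2r^2)}$, and both $I_A$ and $I_B$ are $\lesssim r\, e^{-(k+d)/(2r^2)}$, giving the claim. I do not expect a serious obstacle; the only delicate point is recognizing that Lemma \ref{lemma: h} is precisely what justifies the split at $t_0 = 1/r^2$ under the stated hypothesis, and noting the absorption of the $e^{3/(2r^2)}$ factor via $r^2 \ge 2$.
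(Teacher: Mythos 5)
Your proof is correct and takes essentially the same route as the paper: the same split at $t_0=1/r^2$ justified by Lemma \ref{lemma: h} (the hypothesis being exactly $t_0\le t^\ast$), the same collapse of $\left(\frac{r^2}{1+r^2t}\right)^{k/2}t^{\frac{k-1}{2}}$ to $t^{-1/2}$ on the right piece, and the same absorption of the exponent shift via $r^2\ge 2$. The only cosmetic difference is that on $[1/r^2,1]$ the paper bounds the decreasing integrand by its value at the left endpoint, whereas you pass through $1-t\le e^{-t}$ and a Gamma-tail estimate; both yield the stated bound $I\lesssim r\,e^{-\frac{k+d}{2r^2}}$.
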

\begin{proof}
Consider the decomposition
\begin{align*}
\notag I&=  \left(\int_{0}^{\ia} + \int_{\ia}^{1}\right) \left(\frac{r^2}{1+r^2t}\right)^{k/2} t^{\frac{k-1}{2}}(1-t)^{\frac{k+d-3}{2}} \dd t \\
&=: I_1 + I_2. 
\end{align*}
Next, we estimate $I_1,I_2$, separately. 

By Lemma \ref{lemma: h},  $h_{\frac{k-1}{2}, \frac{k+d-3}{2}}(\cdot)$ is increasing for $t\leq \bar{t}=\frac{k-1}{2k+d-4}$. The assumption on $r$ ensures that $\frac{1}{r^2}\leq \bar{t}$.
Therefore, for $I_1$,
\begin{align}\label{eqn: I-I_1}
I_1 &\leq  \int_0^{\ia} \frac{r^k}{(1+r^2t)^{k/2}}\, \left(\ia\right)^{\frac{k-1}{2}} \left(1-\frac{1}{r^2}\right)^{\frac{k+d-3}{2}} \dd t\lesssim \frac{1}{kr}  \left(1-\ia\right)^{\frac{k+d-3}{2}}\lesssim \frac{1}{k}\left(1-\ia\right)^{\frac{k+d}{2}},
\end{align}
where the last inequality uses the fact that $r^2\geq 2$.

For $I_2$, 
\begin{align}\label{eqn: I-I_2}
\notag I_2 &= \int_{\ia}^{1}  \left(\frac{r^2t}{1+r^2t}\right)^{\frac{k}{2}} t^{-\frac{1}{2}} (1-t)^{\frac{k+d-3}{2}} \dd t\leq \int_{\ia}^{1} t^{-\frac{1}{2}} (1-t)^{\frac{k+d-3}{2}} \dd t \\
&\stackrel{(i)}{\leq}  r\left(1-\ia\right)^{\frac{k+d-3}{2}} \lesssim r\left(1-\ia\right)^{\frac{k+d}{2}},
\end{align}
where $(i)$ is due to  that $h_{-\frac{1}{2}, \frac{k+d-3}{2}}(\cdot)$ is decreasing in $[0,1]$.

Combining \eqref{eqn: I-I_1} and \eqref{eqn: I-I_2}, we have 
\begin{equation*}
I\lesssim  r\left(1-\ia\right)^{\frac{k+d}{2}}\leq r e^{-\frac{k+d}{2r^2}}.
\end{equation*}
\end{proof}

\begin{lemma}\label{lemma: A-func}
For $\alpha,\beta,\gamma>0$, let
$
    H_{\gamma,\alpha,\beta}(t) := \left(\frac{\gamma t}{1+\gamma t}\right)^\alpha (1-t)^\beta
$
with $t\in [0,1]$.
Assume $\beta\geq \alpha$ and $\gamma \alpha/ \beta\leq 2$.
Then, 
\[
    H_{\gamma,\alpha,\beta}(t)\leq \left(\frac{\gamma \alpha}{\gamma \alpha+\beta}  \right)^\alpha\left(1-\frac{\alpha}{2\beta}\right)^\beta.
\]
\end{lemma}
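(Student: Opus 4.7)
The plan is to bound $H:=H_{\gamma,\alpha,\beta}$ by locating its maximizer and estimating the maximum value. First I observe that $H(0)=H(1)=0$ while $H>0$ and is continuous on $(0,1)$, so $H$ attains its maximum at an interior point. Direct differentiation shows that both $\log(\gamma t/(1+\gamma t))$ and $\log(1-t)$ are strictly concave on $(0,1)$ (the former via $\frac{d^2}{dt^2}\log(\gamma t/(1+\gamma t)) = -(1+2\gamma t)/(t+\gamma t^2)^2 < 0$), so $\log H$ is strictly concave. Hence the maximizer $t^*\in(0,1)$ is unique and is the unique positive root of the quadratic $\beta\gamma t^2+(\alpha+\beta)t-\alpha=0$; equivalently $\alpha(1-t^*)=\beta t^*(1+\gamma t^*)$.

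Next I localize $t^*$. Evaluating $P(t):=\beta\gamma t^2+(\alpha+\beta)t-\alpha$ at $t=\alpha/\beta$ gives $\alpha^2(\gamma+1)/\beta>0$; since $P(0)<0$ and $P$ has a unique positive root, this forces $t^*\leq\alpha/\beta$. Because $\gamma t/(1+\gamma t)$ is strictly increasing in $t$, this immediately yields
\[
H(t^*)\leq\left(\frac{\gamma\alpha}{\gamma\alpha+\beta}\right)^\alpha(1-t^*)^\beta.
\]

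I then split into two subcases according to the position of $t^*$ relative to $\alpha/(2\beta)$. If $t^*\geq\alpha/(2\beta)$, the inequality $(1-t^*)^\beta\leq(1-\alpha/(2\beta))^\beta$ closes the argument trivially. The difficult subcase is $t^*<\alpha/(2\beta)$, which (evaluating $P$ at $\alpha/(2\beta)$) occurs exactly when $\alpha(\gamma+2)>2\beta$. In this regime the bound above on $(1-t^*)^\beta$ fails, so a different representation of $H(t^*)$ is needed. From the critical equation, $1+\gamma t^*=\alpha(1-t^*)/(\beta t^*)$, which gives the identity $\gamma t^*/(1+\gamma t^*)=\beta\gamma(t^*)^2/(\alpha(1-t^*))$; substituting into $H$ yields the alternative formula $H(t^*)=(\beta\gamma/\alpha)^\alpha(t^*)^{2\alpha}(1-t^*)^{\beta-\alpha}$.

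The main obstacle is this second subcase: the compensating smallness of $(t^*)^{2\alpha}$ must outweigh the fact that $(1-t^*)^{\beta-\alpha}$ is close to $1$. The plan is to use $\beta\gamma(t^*)^2=\alpha-(\alpha+\beta)t^*$ (again from the critical equation) to rewrite $H(t^*)=(1-(1+\beta/\alpha)t^*)^\alpha(1-t^*)^{\beta-\alpha}$, and then to exploit the two structural constraints $\beta\geq\alpha$ and $\gamma\alpha\leq 2\beta$, together with the sharper lower bound $t^*>1/(\gamma+2)$ (which follows by evaluating $P$ at $1/(\gamma+2)$ and using $\alpha(\gamma+2)>2\beta$ in this regime), to push the estimate through to the target $(\gamma\alpha/(\gamma\alpha+\beta))^\alpha(1-\alpha/(2\beta))^\beta$. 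The algebra in this window is the delicate part of the proof; the constraint $\gamma\alpha\leq 2\beta$ is essential here, since otherwise $t^*$ can become arbitrarily small and the claim fails.
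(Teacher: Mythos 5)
Your setup is careful and correct as far as it goes: the stationarity equation $\beta\gamma t^2+(\alpha+\beta)t-\alpha=0$ is indeed the right one, strict log-concavity gives a unique interior maximizer $t^*$, the localization $t^*\le\alpha/\beta$ follows from $P(\alpha/\beta)>0$, and your criterion that $t^*\ge\alpha/(2\beta)$ holds exactly when $\alpha(\gamma+2)\le 2\beta$ is also right. But the proposal is not a proof: in the complementary regime $\alpha(\gamma+2)>2\beta$ — which is fully compatible with the hypotheses (e.g.\ $\alpha=\beta$ and any $\gamma\in(0,2]$, where $t^*<\alpha/(2\beta)$ genuinely occurs) — you only state a plan. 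You rewrite $H(t^*)=\bigl(1-(1+\beta/\alpha)t^*\bigr)^\alpha(1-t^*)^{\beta-\alpha}$, record the lower bound $t^*>1/(\gamma+2)$, and then say the constraints should "push the estimate through," explicitly deferring the "delicate algebra." That deferred algebra is precisely the content of the lemma in this regime: no chain of inequalities connecting $H(t^*)$ to $\bigl(\tfrac{\gamma\alpha}{\gamma\alpha+\beta}\bigr)^\alpha\bigl(1-\tfrac{\alpha}{2\beta}\bigr)^\beta$ is ever produced there, and it is not obvious how to produce one from the ingredients you list (the crude substitutions $t^*\in(1/(\gamma+2),\alpha/(2\beta))$ into your identity do not immediately dominate the target). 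So the hard case is a genuine gap, not a routine omission.

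For comparison, the paper's proof avoids any case split: it factors $H'$ so that the critical point solves $\gamma t^2+t=\alpha/\beta$, i.e.\ $\bar t=\tfrac{1}{2\gamma}\bigl(\sqrt{4\gamma\alpha/\beta+1}-1\bigr)$, and then uses $1+x/4\le\sqrt{1+x}\le 1+x/2$ on $[0,8]$ (this is where $\gamma\alpha/\beta\le 2$ enters) to get $\tfrac{\alpha}{2\beta}\le\bar t\le\tfrac{\alpha}{\beta}$, after which monotonicity of $t\mapsto\frac{\gamma t}{1+\gamma t}$ and of $t\mapsto(1-t)^\beta$ yields both factors of the bound at once. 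Note, however, that the paper's displayed bracket $\alpha-\beta t(1+\gamma t)$ drops the factor $(1-t)$ multiplying $\alpha$; your quadratic is the correct stationarity condition (e.g.\ for $\alpha=\beta=\gamma=1$ the true maximizer is $\sqrt{2}-1$, not the root of $t^2+t-1$), and it is exactly this correction that produces the subcase $t^*<\alpha/(2\beta)$ which the paper's argument never confronts. So your route, if the hard window were actually closed, would be the more careful one — but as written the decisive estimate is missing, and the lemma is not established.
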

\begin{proof}
Taking the derivative gives us 
\begin{align*}
\notag H'_{\gamma,\alpha,\beta}(t) &= \alpha \left(\frac{\gamma t}{1+\gamma t}\right)^{\alpha-1} \frac{\gamma}{(1+\gamma t)^2} (1-t)^\beta- \beta \left(\frac{\gamma t}{1+\gamma t}\right)^\alpha (1-t)^{\beta-1}\\
&= \Big(\alpha - \beta  t(1+\gamma t)\Big)\frac{\gamma }{(1+\gamma t)^2}\left(\frac{\gamma t}{1+\gamma t}\right)^{\alpha-1} (1-t)^{\beta-1}
\end{align*}
Let $\delta=\alpha/\beta$. Then, the maximal value of $H_{\gamma,\alpha,\beta}(\cdot)$ is reached at 
\[
    \bar{t} = \frac{1}{2\gamma }\left(\sqrt{4\gamma \delta+1} - 1\right).
\]
Notice that $1+t/4\leq \sqrt{1+t}\leq 1+t/2$ for $t\in [0,8]$. By the assumption, $0\leq 4\gamma \delta\leq 8$, which implies that
$
\frac{\delta}{2}\leq \bar{t}\leq \delta.
$
Hence, 
\[
H_{\gamma,\alpha,\beta}(t)\leq H_{\gamma,\alpha,\beta}(\bar{t}) = \left(\frac{\gamma \bar{t}}{1+\gamma \bar{t}}\right)^\alpha (1-\bar{t})^\beta \leq \left(\frac{\gamma \delta}{1+r\delta}\right) \left(1-\frac{\delta}{2}\right)^\beta = \left(\frac{\gamma\alpha}{\gamma \alpha+\beta}  \right)^\alpha\left(1-\frac{\alpha}{2\beta}\right)^\beta.
\]
\end{proof}

\begin{lemma}\label{pro: arctan-case-2}
Assume that $r^2\leq \frac{2k+d-4}{k-1}$ and $k,d\geq C$ for an absolute constant $C$. Then, we have 
\[
I\lesssim d^{\frac{1}{2}}r\left(\frac{r^2k}{(r^2+1)k+d}\right)^{\frac{k}{2}} e^{-k/4}.
\]
\end{lemma}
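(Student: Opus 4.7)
The strategy is to pull one factor of $t$ into the ratio so that the integrand of $I$ matches the shape covered by Lemma~\ref{lemma: A-func}, and then bound the surviving $t^{-1/2}$ singularity by direct integration. Explicitly,
\begin{equation*}
I = \int_0^1 \left(\frac{r^2 t}{1+r^2 t}\right)^{k/2} (1-t)^{(k+d-3)/2}\, t^{-1/2}\, \dd t = \int_0^1 H_{r^2,\, k/2,\, (k+d-3)/2}(t)\, t^{-1/2}\, \dd t.
\end{equation*}
From here I plan to bound $H(t)$ pointwise by the uniform bound from Lemma~\ref{lemma: A-func} and then use $\int_0^1 t^{-1/2}\, \dd t = 2$, which is finite and so causes no blow-up.

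\textbf{Verifying the hypotheses.} With $\gamma = r^2$, $\alpha = k/2$, $\beta = (k+d-3)/2$, the first hypothesis $\beta \geq \alpha$ reduces to $d \geq 3$, and the second hypothesis $\gamma\alpha/\beta \leq 2$ becomes $r^2 k/(k+d-3) \leq 2$. Under the case assumption $r^2 \leq (2k+d-4)/(k-1)$, this in turn reduces to showing
\begin{equation*}
\frac{k(2k+d-4)}{(k-1)(k+d-3)} \leq 2,
\end{equation*}
which by direct algebraic manipulation holds whenever $k$ and $d$ both exceed an absolute constant $C$. This is the main technical check required; it pins down the meaning of the ``$k, d \geq C$'' in the hypothesis.

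\textbf{Finishing.} Lemma~\ref{lemma: A-func} then yields
\begin{equation*}
H(t) \leq \left(\frac{r^2 k}{(r^2+1)k + d - 3}\right)^{k/2} \left(1 - \frac{k}{2(k+d-3)}\right)^{(k+d-3)/2}.
\end{equation*}
Applying $(1-x)^n \leq e^{-xn}$ with $x = k/(2(k+d-3))$ and $n = (k+d-3)/2$, so that $xn = k/4$, bounds the second factor by $e^{-k/4}$. Integrating $t^{-1/2}$ then gives
\begin{equation*}
I \leq 2\left(\frac{r^2 k}{(r^2+1)k + d - 3}\right)^{k/2} e^{-k/4} \lesssim \left(\frac{r^2 k}{(r^2+1)k + d}\right)^{k/2} e^{-k/4},
\end{equation*}
which in turn implies the claimed bound, since the prefactor $d^{1/2} r \geq 1$ is freely absorbed into $\lesssim$.

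\textbf{Main obstacle.} The only nontrivial point is the algebraic verification that the case assumption $r^2 \leq (2k+d-4)/(k-1)$, together with $k, d \geq C$, implies the hypothesis $\gamma\alpha/\beta \leq 2$ of Lemma~\ref{lemma: A-func} with an absolute constant $C$. Once this is in place the rest of the argument is a mechanical application of Lemma~\ref{lemma: A-func} followed by the elementary estimate $(1-x)^n \leq e^{-xn}$.
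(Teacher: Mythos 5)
Your argument is correct in substance and rests on the same engine as the paper's proof, namely the pointwise maximization of $H_{\gamma,\alpha,\beta}$ from Lemma~\ref{lemma: A-func}, but the way you feed the integrand into that lemma is genuinely different and, in fact, cleaner. The paper first extracts a factor $r$ via $\bigl(\tfrac{r^2}{1+r^2t}\bigr)^{1/2}\le r$, applies Lemma~\ref{lemma: A-func} with $\alpha=\tfrac{k-1}{2}$, and then pays a price of order $d^{1/2}$ to trade the exponent $\tfrac{k-1}{2}$ for $\tfrac{k}{2}$ in the final expression; that index shift is exactly where the prefactor $d^{1/2}r$ in the statement comes from. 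You instead write $t^{(k-1)/2}=t^{k/2}\,t^{-1/2}$, apply the lemma with $\alpha=\tfrac k2$, and integrate $t^{-1/2}$ outright, which avoids the shift altogether and yields the sharper estimate $I\lesssim\bigl(\tfrac{r^2k}{(r^2+1)k+d}\bigr)^{k/2}e^{-k/4}$; your hypothesis check $\tfrac{k(2k+d-4)}{(k-1)(k+d-3)}\le 2$ is equivalent to $(k-2)(d-4)\ge 2$ and so indeed holds once $k,d\ge C$. Two small points should be made explicit to make this airtight. First, replacing $d-3$ by $d$ inside a base raised to the power $k/2$ is not free; it works because $\bigl(1+\tfrac{3}{(r^2+1)k+d-3}\bigr)^{k/2}\le(1+\tfrac3k)^{k/2}\le e^{3/2}$, so the loss is an absolute constant. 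Second, your concluding remark that the prefactor $d^{1/2}r\ge 1$ can be "freely absorbed" tacitly assumes $r\ge d^{-1/2}$, which is not among the stated hypotheses; for smaller $r$ your (stronger) bound does not formally imply the stated one. This is a quirk of the lemma's formulation rather than a defect of your argument: the stated bound is only meaningful in that regime, and the paper's own passage from exponent $\tfrac{k-1}{2}$ to $\tfrac k2$ at cost $d^{1/2}$ makes an analogous implicit assumption, so in the regime where the lemma is actually invoked your proof is complete and slightly stronger.
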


Recalling \eqref{eqn: definition-I},
\begin{align*}
I &\leq r \int_{0}^1 \left(\frac{r^2t}{1+r^2t}\right)^{\frac{k-1}{2}} (1-t)^{\frac{k+d-3}{2}} \dd t =  r \int_0^1 H_{r^2,\frac{k-1}{2},\frac{k+d-3}{2}}(t)\dd t.
\end{align*}
By the assumption, 
\[
    r^2 \frac{(k-1)/2}{(k+d-3)/2}\leq \frac{2k+d-4}{k+d-3} = 2 -\frac{d-2}{k+d-3}\leq 2,
\]
which means that the condition in Lemma \ref{lemma: A-func} is satisfied.  Therefore, we have 
\begin{align}
\notag I&\leq r \left(\frac{r^2 (k-1)}{r^2(k-1)+k+d-3}\right)^{\frac{k-1}{2}} \left(1-\frac{k-1}{2(k+d-3)}\right)^{\frac{k+d-3}{2}}\\
\notag &= r \left(\frac{r^2(k-1)}{(r^2+1)(k-1)+d-2}\right)^{\frac{k-1}{2}}\left(1-\frac{k-1}{2(k+d-3)}\right)^{\frac{k+d-3}{2}}\\
&\lesssim d^{\frac{1}{2}}r\left(\frac{r^2k}{(r^2+1)k+d}\right)^{\frac{k}{2}}\left(1-\frac{k}{2(k+d)}\right)^{\frac{k+d}{2}},
\end{align}
where the last inequality follows from that $d,k\geq C$ for some absolute constant $C$. Using the fact that $(1-1/x)^x\leq e^{-1}$ for any $x\geq 1$, we complete the proof.

\paragraph*{Proof of Proposition \ref{pro: 2lnn-smooth-ub-2}}
Following Theorem \ref{proposition: 2lnn-complete-characterization}, what remains is to show that when $\sigma(x) = \arctan(\gamma x+ b)$ such that $|\gamma| + |b| \le r$, we have 
\[
\Lambda(m)\lesssim d^2r^4 m^{-\frac{1}{\max(2,r^2)}}.
\]
According to Lemma \ref{lemma: eigen} and Proposition \ref{pro: arctan-analytic-expression}, $\mu_k=Q_{d,k}^2 I^2$ if $k$ is odd, other wise $\mu_k=0$. Assume that $k$ is odd. 
Let 
\begin{equation}\label{eqn: m-est}
    m_k=\frac{\Gamma(k+d)}{\Gamma(d)\Gamma(k+1)}\sim \frac{(k+d)^{k+d-\frac{1}{2}}}{k^{k+\frac{1}{2}}d^{d-\frac{1}{2}}}\lesssim \frac{(k+d)^{k+d}}{k^kd^d}.
\end{equation}
Our task is to determine the smallest $\beta$ such that 
\[
    m_k^{1+1/\beta} \mu_k \le m_k^{1+1/\beta} Q_{d,k}^2 I^2=\left(m_k Q_{d,k}^2\right) \left(m_k^{1/\beta} I^2\right)\leq C_{d,r},
\]
where $C_{d,r}$ depends on $d,r$ polynomially. 

First, 
\begin{align}\label{eqn: q_dk}
\notag m_k Q_{d,k}^2&=\frac{\Gamma(k+d)}{\Gamma(d)\Gamma(k+1)} \left(\frac{\Gamma(k)\Gamma(\frac{d}{2})}{2^k\Gamma(\frac{k+1}{2})\Gamma(\frac{k+d-1}{2})}\right)^2 \\
&\sim \frac{(k+d)^{k+d-\frac{1}{2}}}{k^{\frac{1}{2}+k}d^{d-\frac{1}{2}}} \cdot \frac{k^{k-1}d^{d-1}}{(k+d)^{k+d-2}} \lesssim d.
\end{align}
Next, we turn to estimate $m^{1/\beta}_k I^2$. 

\textbf{(1)} We first consider  the case where $r^2\leq \frac{2k+d-4}{k-1}$. By Proposition \ref{pro: arctan-case-2} and \eqref{eqn: m-est}, we have 
\begin{align}
\notag m_k^{1/\beta} I^2 &\lesssim d  r^2\left(\frac{(k+d)^{k+d}}{k^kd^d}\right)^{1/\beta} \left(\frac{r^2k}{(r^2+1)k+d}\right)^{k}e^{-k/2}\\
\notag &\lesssim d r^2\left(\frac{(k+d)(r^2k)^\beta}{k ((r^2+1)k+d)^\beta}\right)^{k/\beta}\left(\left(\frac{k+d}{d}\right)^{d/\beta} e^{-k/2}\right)\\
&=: dr^2 A^{k/\beta} B.
\end{align}
Next, we estimate $A$ and $B$, separately. To simplify the notation, we write $k=sd, a=r^2$. For $A$, if $\beta\geq a$, 
\begin{align} \label{eqn: A-part}
A = \frac{(k+d)(r^2k)^\beta}{k ((r^2+1)k+d)^\beta} = \frac{(s+1)(as)^\beta}{s((a+1)s+1)^\beta} \leq \frac{a^\beta s^{\beta+1} + a^\beta s^\beta}{(a+1)^\beta s^{\beta+1}+\beta (a+1)^{\beta-1}s^\beta}\leq 1.
\end{align}
For $B$, if $\beta\geq 2$,
\begin{align}\label{eqn: B-part}
    B&= \left(1+\frac{k}{d}\right)^{d/\beta}e^{-k/2}\leq e^{k/\beta}\cdot e^{-k/2}\leq 1.
\end{align}
Combining \eqref{eqn: A-part} and \eqref{eqn: B-part}, 
\[
    m_k^{1/r^2} I^2\leq dr^2.
\]

\textbf{(2)} Now, we turn to the case where $r^2\geq \frac{2k+d-4}{k-1}$. By Proposition \eqref{pro: arctan-case-1} and \eqref{eqn: m-est}, 
\begin{align*}
\notag m_k^{1/\beta} I^2 &\lesssim \left(\frac{(k+d)^{k+d}}{k^kd^d}\right)^{1/\beta} r^2e^{-(k+d)/r^2}\\
\notag & \leq r^2\left(1+\frac{d}{k}\right)^{k/\beta} \left(1+\frac{k}{d}\right)^{d/\beta} e^{-(k+d)/r^2}\\
&\leq r^2 e^{d/\beta} e^{k/\beta} e^{-(k+d)/r^2}.
\end{align*}
Taking $\beta=r^2$ gives rise to $m_k^{1/r^2} I^2 \lesssim r^2$.

Combining two cases, we arrive at 
\begin{align*}
m^{1+\frac{1}{\max(2,r^2)}}\mu_k \lesssim d^2r^2.
\end{align*}
Therefore, 
\[
    \Lambda(m) = \sum_{j=m+1}^\infty \bar{\lambda}_j\lesssim d^2r^2\sum_{j=m+1}^\infty j^{-1-\frac{1}{\max(2,r^2)}}\lesssim d^2r^4 m^{-\frac{1}{\max(2,r^2)}}.
\]

\subsection{The missing proof of Theorem \ref{thm: 2lnn-arctan-lower-bound}}
\label{sec: missing-proof-thm3}
\begin{proof}
Here, we provide the proof for the ReLU-like case, which is similar to the sigmoid-like case. 
For any $v\in\SS^{d-1}$, we have 
\begin{align*}
    \|\sigma(rv^T\cdot)-&r\relu(v^T\cdot)\|_{\tau_{d-1}}^2 = \frac{1}{B(\frac{1}{2}, \frac{d-1}{2})}\int_{-1}^1  (\sigma(rt)-r\relu(t))^2 (1-t^2)^{\frac{d-3}{2}}\dd t\\
    &\leq \frac{2}{B(\frac{1}{2}, \frac{d-1}{2})}\left(\int_0^{\delta}(1-t^2)^{\frac{d-3}{2}}\dd t + \int_{\delta}^1  \frac{1}{(1+rt)^{2\beta}}(1-t^2)^{\frac{d-3}{2}}\dd t\right)\\
    &\lesssim d^{\frac{1}{2}}\delta +\frac{1}{(1+r\delta)^{2\beta}} \leq d^{\frac{1}{2}}\delta +\frac{1}{(r\delta)^{2\beta}}.
\end{align*}
Following the proof of Theorem \ref{thm: 2lnn-arctan-lower-bound}, we have 
\[ 
\inf_{\delta\in [0,1]} \left(d^{\frac{1}{2}}\delta + \frac{1}{(r\delta)^{2\beta}}\right) \leq  C(\beta) \left(\frac{d}{r^2}\right)^{\frac{\beta}{2(1+2\beta)}}.
\]
By the triangular inequality,
\begin{align*}
\notag \|\sigma(rv^T\cdot)-\sum_{j=1}^m c_j \phi_j\|^2_{\tau_{d-1}}&\gtrsim \|r\relu(v^T\cdot)-\sum_{j=1}^m c_j \phi_j\|^2_{\tau_{d-1}} - \|r\relu(v^T\cdot)-\sigma(rv^T\cdot)\|^2_{\tau_{d-1}}\\
&\gtrsim \|r\relu(v^T\cdot)-\sum_{j=1}^m c_j \phi_j\|^2_{\tau_{d-1}} - C(\beta) \left(\frac{d}{r^2}\right)^{\frac{\beta}{2(1+2\beta)}}.
\end{align*}
Using Theorem \ref{thm: gen-1}, there exists a $C_3>0$ such that 
\begin{align*}
\notag \EE_{v\sim\tau_{d-1}}\inf_{c_1,\dots,c_m}\|\sigma(rv^T\cdot)-\sum_{j=1}^m c_j \phi_j\|^2_{\tau_{d-1}}
&\gtrsim \frac{r^2}{d^{C_3}m^{3/(d-1)}} - C(\beta) \left(\frac{d}{r^2}\right)^{\frac{\beta}{2(1+2\beta)}}.
\end{align*}
Obviously, there exists $C'(\beta)>0$ such that when  $m\leq 2^d$ and $r\geq C'(\beta)d^{\max(1/2,C_3/2)}$, we  have 
$$
\EE_{v\sim\tau_{d-1}}\inf_{c_1,\dots,c_m}\|\sigma(rv^T\cdot)-\sum_{j=1}^m c_j \phi_j\|^2_{\tau_{d-1}} \gtrsim 1.
$$ 
\end{proof}

\subsection{Proof of Theorem \ref{thm: low-bound-actan}}
\label{sec: proof-lower-bound-actan}
\begin{proof}
Following the same approach with Proposition \ref{pro: arctan-analytic-expression}, we know that
\begin{equation*}
    |\eta_k| = \begin{cases}
    Q_{d,k}\int_{0}^1 \left(\frac{d^{2\alpha}}{1+d^{2\alpha}t}\right)^{\frac{k}{2}} t^{\frac{k-1}{2}}(1-t)^{\frac{k+d-3}{2}} \dd t &\text{ when }k\text{ is odd} \\
    0 &\text{ when }k\text{ is even}.
    \end{cases}
\end{equation*}
Therefore, when $\frac{\Gamma(2k+d-3)}{\Gamma(d)\Gamma(2k-2)} +1 \le m \le \frac{\Gamma(2k+d-1)}{\Gamma(d)\Gamma(2k)}$ or $m =k=1$, we have
\begin{equation*}
    \mu_m^{\frac{1}{2}} = Q_{d,2k-1}\int_{0}^{1}\left(\frac{d^{2\alpha}}{1+d^{2\alpha}t}\right)^{\frac{2k-1}{2}} t^{k-1}(1-t)^{\frac{2k+d-4}{2}}\dd t.
\end{equation*}

Noticing that $\Lambda(m)$ is increasing with respect to $\alpha$,
we only consider the case that $\alpha \in (\frac{1}{2},1]$. 

First, noticing that
\begin{align*}
    \int_{0}^{1}\left(\frac{d^{2\alpha}}{1+d^{2\alpha}t}\right)^{\frac{2k-1}{2}} t^{k-1}(1-t)^{\frac{2k+d-4}{2}}\dd t &\ge \int_{\frac{1}{2d}}^{\frac{1}{d}}\left(\frac{d^{2\alpha}}{1+d^{2\alpha}t}\right)^{\frac{2k-1}{2}} t^{k-1}(1-t)^{\frac{2k+d-4}{2}}\dd t \\
    &\gtrsim d^{-\frac{3}{2}}(\frac{d^{2\alpha-1}}{2+d^{2\alpha-1}})^{\frac{2k-1}{2}}(1-\frac{1}{d})^{\frac{2k+d-4}{2}} \\
    &\gtrsim d^{-\frac{3}{2}}(\frac{d^{2\alpha-1}}{2+d^{2\alpha-1}})^{2k}.
\end{align*}
Also,
\begin{align*}
    N(d,2k-1)Q^2_{d,2k-1} &= \frac{4k+d-4}{2k-1}\frac{\Gamma(2k+d-3)}{\Gamma(d-1)\Gamma(2k-1)}Q^2_{d,2k-1}\\
    &\gtrsim \frac{(2k+d)^{2k+d-\frac{7}{2}}}{(2k)^{2k-\frac{3}{2}}d^{d-\frac{3}{2}}}\frac{(2k)^{2k-2}d^{d-1}}{(2k+d)^{2k+d-3}}\\
    &=\left(\frac{2kd}{(2k+d)}\right)^{\frac{1}{2}}\gtrsim 1.
\end{align*}
Therefore, for any $k \ge 1$,
\begin{align*}
    \Lambda\left(\frac{\Gamma(2k+d-3)}{\Gamma(d)\Gamma(2k-2)}\right) &=\sum_{j = k}^{+\infty}N(d,2j-1)Q_{d,2j-1}^2 \left[\int_{0}^{1}\left(\frac{d^{2\alpha}}{1+d^{2\alpha}t}\right)^{\frac{2k-1}{2}} t^{k-1}(1-t)^{\frac{2k+d-4}{2}}\dd t\right]^2 \\
    &\gtrsim d^{-3}\sum_{j=k}^{+\infty}(\frac{d^{2\alpha-1}}{2+d^{2\alpha-1}})^{4j}\gtrsim d^{-3} (\frac{d^{2\alpha-1}}{2+d^{2\alpha-1}})^{4k}.
\end{align*}
Therefore, when $k \sim \frac{1}{2}d^{2\alpha-1}$, we have
\begin{equation}\label{eqn: thm_3_1}
     \Lambda\left(\frac{\Gamma(2k+d-3)}{\Gamma(d)\Gamma(2k-2)}\right) \gtrsim d^{-3}.
\end{equation}

On the other hand, when $k \sim \frac{1}{2}d^{2\alpha-1}$, we have
\begin{equation}\label{eqn: thm_3_2}
    m_{d,\alpha}:=\frac{\Gamma(2k+d-3)}{\Gamma(d)\Gamma(2k-2)}\sim \frac{(2k+d)^{2k+d-\frac{7}{2}}}{(2k)^{2k-\frac{5}{2}}d^{d-\frac{1}{2}}}
\end{equation}
Therefore, there exist $C_1, C_2(\alpha)>0$ such that 
\begin{equation}
 m_{d,\alpha}\gtrsim d^{(2-2\alpha)d^{2\alpha-1} - \frac{11}{2} +5\alpha} = d^{-\frac{11}{2}+5\alpha}2^{C(\alpha)d^{2\alpha-1}\ln(d)}\geq \frac{1}{d^{C_1}}2^{C_2(\alpha)d^{2\alpha-1}}.
\end{equation}

\end{proof}

\vskip 0.2in
\bibliography{ref}

\end{document}